\documentclass{article}
\usepackage[margin=1in]{geometry}

\usepackage{natbib}
\PassOptionsToPackage{numbers, sort&compress}{natbib}

\usepackage{hyperref}

\usepackage{amsmath}
\usepackage{amssymb}
\usepackage{mathtools}
\usepackage{booktabs}
\usepackage{amsthm}
\mathtoolsset{showonlyrefs}

\theoremstyle{plain}
\newtheorem{theorem}{Theorem}[section]

\newtheorem{lemma}[theorem]{Lemma}

\theoremstyle{definition}

\newtheorem{assumption}[theorem]{Assumption}
\theoremstyle{remark}
\newtheorem{remark}[theorem]{Remark}

\RequirePackage{amsmath, amsfonts, amssymb, amsthm}
\RequirePackage{mathtools}
\RequirePackage{esint}
\RequirePackage{braket}
\RequirePackage{subcaption}
\RequirePackage{float}
\RequirePackage{caption}

\numberwithin{equation}{section}

\def\rr{{\mathbb{R}}}

\def\nn{{\mathbb{N}}}

\def\ee{{\mathbb{E}}}

\def\P{{\mathbb{P}}}

\def\cN{{\cal N}} 

\def\var{{\textrm{Var}}}
\def\cov{{\textrm{Cov}}}

\newcommand{\indep}{\perp \!\!\! \perp}

\makeatletter
\providecommand*{\diff}%
{\@ifnextchar^{\DIfF}{\DIfF^{}}} \def\DIfF^#1{%
\mathop{\mathrm{\mathstrut d}}%
        \nolimits^{#1}\gobblespace}

\def\gobblespace{%
        \futurelet\diffarg\opspace}
\def\opspace{%
        \let\DiffSpace\!%
        \ifx\diffarg(%
            \let\DiffSpace\relax
        \else
            \ifx\diffarg[%
               \let\DiffSpace\relax
            \else
               \ifx\diffarg\{%
                   \let\DiffSpace\relax
               \fi\fi\fi\DiffSpace}

\renewcommand{\d}{\diff}

\DeclareMathOperator{\diag}{diag}

\DeclareMathOperator{\tr}{\textrm{tr}}

\DeclareMathOperator{\argmin}{argmin}

\newcommand{\softmax}{\mathsf{softmax}}

\DeclarePairedDelimiterX{\ip}[2]{\langle}{\rangle}{#1, #2}

\newcommand{\iidsim}{\stackrel{\text{i.i.d.}}{\sim}}
\newcommand{\rc}{\zeta}

\newcommand{\bx}{\boldsymbol{x}}
\newcommand{\by}{\boldsymbol{y}}
\newcommand{\bz}{\boldsymbol{z}}
\newcommand{\bu}{\boldsymbol{u}}
\newcommand{\bv}{\boldsymbol{v}}
\newcommand{\br}{\boldsymbol{r}}
\newcommand{\bfm}{\boldsymbol{m}}
\newcommand{\bzero}{\boldsymbol{0}}
\newcommand{\be}{\boldsymbol{e}}
\newcommand{\bb}{\boldsymbol{b}}
\newcommand{\ba}{\boldsymbol{b}}

\newcommand{\bxi}{\boldsymbol{\xi}}
\newcommand{\bta}{\boldsymbol{\eta}}
\newcommand{\beps}{\boldsymbol{\varepsilon}}
\newcommand{\bfmu}{\boldsymbol{\mu}}

\newcommand{\bw}{\boldsymbol{w}}
\newcommand{\f}{\boldsymbol{f}} 

\newcommand{\bX}{\boldsymbol{X}}
\newcommand{\bE}{\boldsymbol{E}}
\newcommand{\bW}{\boldsymbol{W}}
\newcommand{\bfM}{\boldsymbol{M}}
\newcommand{\bQ}{\boldsymbol{Q}}
\newcommand{\bK}{\boldsymbol{K}}
\newcommand{\bV}{\boldsymbol{V}}
\newcommand{\bI}{\boldsymbol{I}}
\newcommand{\bF}{\boldsymbol{F}}
\newcommand{\bA}{\boldsymbol{A}}
\newcommand{\bB}{\boldsymbol{B}}

\newcommand{\bS}{\boldsymbol{S}}
\newcommand{\bU}{\boldsymbol{U}}
\newcommand{\bLam}{\boldsymbol{\Lambda}}
\newcommand{\bSig}{\boldsymbol{\Sigma}}
\newcommand{\bGam}{\boldsymbol{\Gamma}}

\newcommand{\one}{\mathbf{1}}

\usepackage{hyperref}
\usepackage[dvipsnames]{xcolor}
 \hypersetup{
    colorlinks=true,
    linkcolor=MidnightBlue,
    citecolor=MidnightBlue,
    urlcolor=MidnightBlue
}

\usepackage{authblk}

\title{Multi-layer Cross-attention is Provably Optimal for \\ Multi-modal In-context Learning}

\author[1]{Nicholas Barnfield}
\author[1]{Subhabrata Sen}
\author[1]{Pragya Sur}

\affil[1]{Department of Statistics, Harvard University}

\date{\today}

\begin{document}

\maketitle

\begin{abstract}
  Recent progress has rapidly advanced our understanding of the mechanisms underlying in-context learning in modern attention-based neural networks.  However, existing results focus exclusively on unimodal data; in contrast, the theoretical underpinnings of in-context learning  for multi-modal data remain poorly understood.
We introduce a mathematically tractable framework for studying multi-modal learning and explore when transformer-like architectures can recover Bayes-optimal performance in-context. To model multi-modal problems, we assume the observed data arises from a  latent factor model. Our first result comprises a negative take on expressibility: we prove that single-layer, linear self-attention fails to recover the Bayes-optimal predictor uniformly over the task distribution. To address this limitation, we introduce a novel, linearized cross-attention mechanism, which we study in the regime where both the number of cross-attention layers and the context length are large. We show that this cross-attention mechanism  is provably Bayes optimal when optimized using gradient flow.  Our results underscore the benefits of depth for in-context learning and establish the provable utility of cross-attention for multi-modal distributions.
\end{abstract}

\section{Introduction}

Large language models exhibit striking ``in-context learning'' (ICL) behavior \citep{brown2020languagemodelsfewshotlearners}: given a sequence of input-output pairs for a new task, the model can infer the rule (the function from input to output) without any update to the model weights. This phenomenon has motivated a growing body of  theoretical work that tries to understand when and how ICL succeeds in linearized attention-based models \citep{train_transformers_bartlett,  Lu_2025, zhang2025trainingdynamicsincontextlearning}.

Prior work on ICL has focused on unimodal data; in contrast, modern foundation models routinely process multi-modal data, such as text, images, video \citep{li2024multimodal}, multi-omic data \citep{cui2025towards}, etc. 

In this work, we formulate and study an ICL problem on multi-modal data. Our main contributions are as follows: 

\begin{itemize}
\item   We introduce a statistical model for ICL on multi-modal data (Section \ref{sec:data}). The proposed data distribution is a latent factor model where the factors are dependent across the data modalities, thereby capturing cross-modal commonalities and differences.  

\item We show that a single-layer, linear self-attention (LSA) model fails to achieve Bayes-optimal performance in-context over this class of data distributions (Section \ref{sec:ICL_LSA}). 

\item  We propose a multi-layer model with cross-attention (CA) plus self-attention (SA) and skip connections to solve the ICL problem in this context (Section \ref{sec:CA_model}). Linearizing the attention mechanism, we show that, even with a restriction of the model parameters, gradient flow on the population loss converges to the Bayes-optimal ICL predictor (Section \ref{sec:main_results}). 

\item We complement our theory with numerical experiments in  controlled and benchmark settings (Section~\ref{sec:numerics}). Synthetic experiments verify the qualitative predictions from our analyses. Experiments on corrupted avMNIST provide evidence that the new mechanisms identified by our theory are useful for multi-modal learning in conventional deep-learning architectures.
\end{itemize}

To highlight the technical challenge in our setting, note that prior theoretical works on ICL in a regression setting assume that the covariates are sampled from a fixed distribution (usually Gaussian), independent of the learning task \citep{train_transformers_bartlett, zhang2025trainingdynamicsincontextlearning, Lu_2025}. Indeed, \citep{train_transformers_bartlett, zhang2025trainingdynamicsincontextlearning} note that the invariance of the covariate distribution across prompts is crucial in establishing the success of single-layer LSA. In sharp contrast, our multi-modal model exhibits covariate-shift across prompts. In addition, the prediction rule in a prompt is naturally coupled with the covariate distribution in our setting. 

These differences are significant, and provably rule out single-layer LSA in our setup (Theorem~\ref{thm:failure_LSA}). To overcome this issue, we introduce a novel multi-layer model based on linear cross-attention (LCA) and LSA. We show that it achieves Bayes-optimal performance for multi-modal ICL (Theorems \ref{thm:one_param_optimal} and \ref{thm:two_param_optimal}). Our analyses deviate substantially from prior work, and we require new ideas to analyze the loss landscape in multi-modal problems.    
Our multi-layer architecture differs from standard ones found in the empirical literature in critical ways: in particular, in how  we employ the CA mechanism and additional skip connections. We describe the differences further in Section \ref{sec:CA_model}. Our work forms the first to provide theoretical guarantees on  versions of attention-based models that achieve Bayes-optimal performance in multimodal ICL.

\section{Prior work} \label{sec:prior_works}
  
ICL capabilities of large language models were first noted in \citep{brown2020languagemodelsfewshotlearners}. These observations motivated significant recent research on the foundations of ICL. Early investigation focused on the expressivity of transformers 
\citep{bai2023,akyürek2023learningalgorithmincontextlearning,garg2023transformerslearnincontextcase}, demonstrating that their ICL capabilities enable them to implement common statistical algorithms. Connections with meta-learning and variants of gradient descent were noted in \citep{vonoswald2023transformerslearnincontextgradient,ahn2023transformerslearnimplementpreconditioned,zhang2024icl_transformer_MLP}.
Generalization and stability properties of ICL were studied in \citep{li2023transformers}, while a Bayesian interpretation was offered in \citep{xie2022explanationincontextlearningimplicit}. 

Training dynamics of attention-based models and ICL performance of the trained models have also been investigated. 
Perhaps the closest to our work is \citep{train_transformers_bartlett}, which studies ICL for regression, demonstrating that a single-layer LSA model trained by gradient flow on the population loss learns the Bayes-optimal predictor. Subsequently, gradient flow or descent dynamics for multi-head  self-attention were studied in \citep{chen2024training,zhang2025trainingdynamicsincontextlearning}. The recent work \citep{huang2023incontextconvergencetransformers} goes beyond linearized attention models, and studies gradient descent on stylized one-layer transformers with non-linear softmax attention.

More recently, ICL has been studied for a wider variety of models ranging from Gaussian mixture classification and clustering to non-parametric regression \citep{shen2025_gaussian_mixtures,maulensoto2025attentionbasedclustering,ma2025provable,ching2026efficient}. Attention-based models have also been studied as Gaussian sequence multi-index models \citep{cui2024phase,arnaboldi2025asymptotics,troiani2025fundamental} and on sparse-token classification tasks \citep{oymak2023, barnfield2025}, although ICL was not a key focus in these works. ICL performance for infinite token dimension was studied in \citep{Lu_2025, letey2025pretraintesttaskalignmentgoverns}; see also \citep{wu2024pretrainingtasksneededincontext} for the impact of the number of pre-training tasks in case of finite token dimension. Despite this extensive literature, these prior works are restricted to data arising from a single mode, whereas we focus on ICL for multi-modal data.

In diverse artificial intelligence problems, multi-modal learning---combining information from varied modes, e.g., text, image, video---is known to improve prediction, reasoning and learning capabilities. Although examined extensively in empirical studies \citep{radford2021learningtransferablevisualmodels,alayrac2022flamingovisuallanguagemodel,jaegle2021perceiver,wang2024charxiv}, this setting is far from understood from a rigorous standpoint. Bridging this theoretical gap is a primary aim of our manuscript. Specifically, multi-modal learning is useful when the modes share common information. To capture this, we use latent variable models that have been widely used in statistical estimation for studying multi-modal data  \citep{nandy2024multimodal,ding2022cooperative,mergny2025spectral,keup2025optimal,deshpande2018contextual,yang2025fundamental,sergazinov2025spectral}. But, these papers do not study ICL or attention-based models. On the theoretical end, \citep{liu2025continualmultimodalcontrastivelearning,gui2025multi,cai2024contrastive} study multi-modal contrastive learning, but do not focus on ICL, which is the main focus of our present work.

\section{Problem setup} \label{sec:data}

We study  a multi-modal framework where each prompt (task) provides a context of input–output examples; the inputs contain features from two distinct modalities. To model a natural multi-modal setting, we assume the data is drawn from a latent variable model. Unlike prior work on ICL, we allow the distribution of covariates to differ across tasks.

Formally, we observe $N$ prompts (tasks) with (training) context length $L$. For each prompt $j \in [N]$, we observe a sequence $(\bar \bz_i^{(j)}, \tilde \bz_i^{(j)}, y_i^{(j)})_{i=1}^L$ where $y^{(j)}_i \in \rr$ is a response and $\bar \bz_i^{(j)} \in \rr^{d_1}, \tilde \bz_i^{(j)} \in \rr^{d_2}$ are two modalities of covariates, e.g., the modalities could represent an image and associated text annotation, respectively. In each prompt, we additionally observe a tuple $(\bar{\bz}_q^{(j)}, \tilde{\bz}_q^{(j)}, y_q^{(j)})$. We input the prompt and covariate pair $(\bar \bz_q^{(j)}, \tilde \bz_q^{(j)})$ into the model, and it outputs a prediction $\hat{y}_q^{(j)}$ for  $y_q^{(j)}$. While training, the model weights are tuned to minimize the error in predicting the $y_q$ values on the training prompts. To assess the ICL performance of the fitted model, we consider a fresh prompt, and study the prediction error of the fitted model on the $y_q$ associated with the test prompt.

We now turn to a statistical model for the data. The covariates and response are generated via 
\begin{align}\label{eq:data_dist}
    \bar \bz_{i}^{(j)} = u_i^{(j)} \bv^{(j)} + \bxi_i^{(j)}, \,\,\,\, 
    \tilde \bz_{i}^{(j)} = u_i^{(j)} \br^{(j)} + \bta_i^{(j)}, \,\,\,\,
    y_i^{(j)} = \rc^{(j)} u_i^{(j)}.
\end{align}
The latent variable $u_i^{(j)} \iidsim \mathcal{N}(0,1)$ is shared across both modalities and the response. The prompt-specific regression coefficient, which relates $u_i^{(j)}$ to $y_i^{(j)}$, is given by $\rc^{(j)} \in \rr$. We assume that across prompts, $\rc^{(j)}$ are sampled i.i.d. from a distribution with mean $0$ and variance $1$. On the other hand, $\bv^{(j)} \in \mathbb{R}^{d_1}$ and $\br^{(j)} \in \mathbb{R}^{d_2}$ are modality-specific vectors sampled i.i.d.~across prompts. We defer precise assumptions about their distributions (Assumption \ref{assump:support_m}). Finally, $\bxi_i^{(j)} \iidsim \mathcal{N}_{d_1}(0, \bI)$ and $\bta_i^{(j)} \iidsim \mathcal{N}_{d_2}(0,\bI) $ are i.i.d.~noise variables, and all unique variables on the right-hand side of \eqref{eq:data_dist} are mutually independent. The setup in \eqref{eq:data_dist} formalizes  a fundamental idea underlying multi-modal machine learning: each modality provides a different (noisy) view of a common latent feature $u$ which drives the response $y$.  

Focusing on a single task,  omitting the prompt superscript, stacking the covariates, we denote $\bx_i := \begin{bmatrix}
    \bar \bz_i &
    \tilde \bz_i
\end{bmatrix}^\top$ and let $d = d_1 + d_2$. Then, in each prompt, the covariates and response are jointly Gaussian:
\begin{equation}\label{eq:m_def_and_gaussianity}
\begin{bmatrix}
    \bx_i \\
    y_i
\end{bmatrix} \iidsim \cN_{d + 1} \left( 0 , \bSig \right) \,\,\,\, \text{with} \,\,\,\, \bSig = \begin{bmatrix}
    \bI + \bfm \bfm^\top & \rc \bfm \\ 
    \rc \bfm^\top & \rc^2 
\end{bmatrix}  \,\,\,\, {\rm and} \,\,\,\, \bfm := \begin{bmatrix}
    \bv \\
    \br
\end{bmatrix}.
\end{equation}
Note, the same construction extends directly to three or more modalities by stacking all views into a single vector $\bx_i$. The joint Gaussianity of the covariates and response allows us to write 
\begin{equation}\label{eq:regression_view}
y_i = \ip{\bw}{\bx_i} + \varepsilon_i \quad {\rm where} \quad \bw = \frac{\rc}{1 + \|\bfm\|^2} \bfm \quad {\rm and} \quad \varepsilon_i \iidsim \cN \left(0,\; \rc^2/(1 + \|\bfm\|^2 )\right).
\end{equation}
Our goal is to learn the task-specific Bayes-optimal predictor $\langle w, x \rangle$. 
The setup \eqref{eq:regression_view} is reminiscent of the regression setting considered in \citep{train_transformers_bartlett, Lu_2025, zhang2025trainingdynamicsincontextlearning}, yet presents several differences that make the learning problem significantly more difficult. Upon setting 
\begin{equation}\label{eq:covspike}
\bLam := \cov(\bx \mid \bfm) = \bI + \bfm\bfm^\top
\end{equation}

as the covariance of the covariates $\{\bx_i\}_{i=1}^L \cup \{\bx_q\}$ conditioned on $\bfm$, we note that the distribution of covariates varies across prompts, since $\bfm$ is  sampled anew for each task. Furthermore, the task vector $\bw$ and covariate distribution are coupled through $\bfm$. Similarly, the variance of the additive noise and the task vector are coupled through the task-specific parameter $\rc$. These dependencies invalidate prior ICL theory in our setup. As we will see below, this leads to the failure of single-layer linearized attention models previously shown to be successful for ICL when the covariate distribution does not change across prompts \citep{train_transformers_bartlett, zhang2025trainingdynamicsincontextlearning}. 

\section{ICL and failure of LSA} \label{sec:ICL_LSA}

In this section, we introduce ICL formally and show that a single-layer LSA model fails at the ICL task. To this end, 
we represent a single prompt by its covariate matrix $\bX := [\bx_1,\dots,x_L]\in\mathbb{R}^{d\times L}$ and target vector $\by := [y_1,\dots,y_L]^\top\in\mathbb{R}^{L}$.

We define the (non-learned) joint embedding matrix
\[
\bE_{\bX} := \begin{bmatrix}\bX & \bx_q\\ \by^\top & 0\end{bmatrix} = \begin{bmatrix}
    \bx_1 & \bx_2 & \cdots & \bx_L & \bx_q \\
    y_1 & y_2 & \cdots & y_L & 0
\end{bmatrix}\in\mathbb{R}^{(d+1)\times(L+1)}
\]

A model maps $\bE_{\bX}$ to a scalar prediction $\hat y_q$ for $y_q$. So far, the context length $L$ is a free variable. In the subsequent discussion, it will be helpful to assume that the context length $L=L_{\rm tr}$ for prompts in the training data and $L=L_{\rm te}$ at test time. Our model architecture is independent of the context length, and thus this distinction is mostly for mathematical convenience.

\paragraph{Formal ICL criterion.} For a new prompt with test context length $L= L_{\mathrm{te}}$, let $\hat y_q^{(\mathrm{te})}$ denote the model prediction. We say a model \emph{in-context learns} if
\begin{equation} \label{eq:icl-def}
\hat y_q^{(\mathrm{te})} \longrightarrow
y_q^{\mathrm{Bayes},\mathrm{(te)}} := \ee[y_q^{(\mathrm{te})}|\bfm^{(\mathrm{te})}, \rc^{(\mathrm{te})}, \bx_q^{(\mathrm{te})}] = \ip{\bw^{(\mathrm{te})}}{\bx_q^{(\mathrm{te})}}
\end{equation}
almost surely (a.s.) as $L_{\mathrm{te}}\to\infty$,
where the a.s. statement is with respect to the randomness of the new prompt.  In words, ICL in our framework is synonymous with the model recovering the \emph{Bayes-optimal predictor} $\ip{\bw^{(\mathrm{te})}}{\bx_q^{(\mathrm{te})}}$ 
a.s.~in the asymptotic  $L_{\mathrm{te}} \to \infty$ limit.  
The asymptotic context-length criterion \eqref{eq:icl-def} matches the mathematical interpretation of ICL in prior work \citep{train_transformers_bartlett}, yet unlike in the linear regression setup \citep{garg2023transformerslearnincontextcase, train_transformers_bartlett, Lu_2025}, the task vector $\bw$, covariate distribution, and error variances are all coupled via latent variables $\bfm$, $\rc$. These dependencies make both the learning problem---and its analysis---substantially more difficult.

\paragraph{Single-layer LSA baseline.} As a starting point, we study a simple baseline model and show its inability to in-context learn in our setting. We consider a single LSA layer acting on $\bE_{\bX}$ as seen in \citep{vonoswald2023transformerslearnincontextgradient, ahn2023transformerslearnimplementpreconditioned, train_transformers_bartlett, Lu_2025}. We write
\begin{equation}\label{eq:LSA_def}
\hat y_q = (\mathsf{LSA}(\bE_{\bX};\theta))_{d+1, L+1} \,\,\,\,\, {\rm where}\,\,\,\, \mathsf{LSA}(\bE_{\bX};\theta) = \bE_{\bX} + \bW^{PV} \bE_{\bX} \cdot (\bE_{\bX}^\top \bW^{KQ}\bE_{\bX}/L)
\end{equation}
and $\theta = \{\bW^{PV}, \bW^{KQ}\}$ are learnable weight matrices. This model is a linearized analogue of a transformer block 
\begin{equation}\label{eq:SA-def}
\mathsf{SA}(\bE_{\bX};\theta) =\bE_{\bX} + \bW^P \bW^V\bE_{\bX} \cdot \softmax\left((\bW^K\bE_{\bX})^\top \bW^Q\bE_{\bX}/L\right)
\end{equation}
---where now $\theta = \{\bW^{P},\bW^V, \bW^{K}, \bW^Q\}$---without the MLP layer and layer normalization, and with tied parameters  $\bW^{KQ} = (\bW^K)^\top \bW^Q$ and $\bW^{PV} = \bW^P \bW^V$. Collapsing the key and query, and the projection and value weight matrices into single matrices for the LSA model differs from a practical transformer implementation, but is mathematically identical with regard to expressibility. 

In demonstrating the success of the LSA model in unimodal ICL tasks, prior work \citep{train_transformers_bartlett} relies on proving that the parameters $\theta$ can emulate a global, inverse, covariance $\bLam^{-1}$. However, in the multi-modal setting, as the covariance $\bLam$ is random, no fixed parameter $\theta$ can perform this inversion role and hence the LSA model fails at the ICL task as shown in Theorem 
\ref{thm:failure_LSA}.

\begin{theorem}[Single-layer LSA fails at ICL] \label{thm:failure_LSA}
    In the setting of Section~\ref{sec:data}, and assuming $\|\bfm\|$ and $\rc$ are atomless \footnote{For instance, any distribution on $\bfm$ that is absolutely continuous with respect to the Lebesgue measure in $\rr^d$. }, no single-layer LSA predictor $\hat y_q=\left(\mathsf{LSA}(\bE_{\bX};\theta)\right)_{d+1,L_{\rm te}+1}$ can in-context learn in the sense of \eqref{eq:icl-def}. In particular, for fixed $\theta = \{\bW^{PV}, \bW^{KQ}\}$, $\mathbb{P}\left( \lim_{L_{\rm te} \to \infty}\hat y_q^{\mathrm{(te)}} =y_q^{\mathrm{Bayes},\mathrm{(te)}}  \right) = 0$.
\end{theorem}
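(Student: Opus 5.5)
The plan is to compute the $L_{\rm te}\to\infty$ limit of the LSA output explicitly, conditional on $(\bfm,\rc)$. It will turn out to be a linear form in $\bx_q$ whose coefficient is a \emph{polynomial} in $(\bfm,\rc)$. The Bayes coefficient $\bw=\rc\bfm/(1+\|\bfm\|^2)$ is genuinely rational, and the atomless assumptions should make agreement a probability-zero event.

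\textbf{Step 1: the limit.} Write $\bb^\top=(\bb_1^\top,b_0)$ for the last row of $\bW^{PV}$. Write $\bA\in\rr^{(d+1)\times d}$ for the first $d$ columns of $\bW^{KQ}$, with top block $\bA_1\in\rr^{d\times d}$ and last row $\boldsymbol{a}^\top$. Since the last column of $\bE_{\bX}$ is $(\bx_q;0)$, reading off the $(d+1,L+1)$ entry of \eqref{eq:LSA_def} gives
\[
\hat y_q=\bb^\top\Big(\tfrac1L \bE_{\bX}\bE_{\bX}^\top\Big)\bA\,\bx_q,\qquad \tfrac1L\bE_{\bX}\bE_{\bX}^\top=\tfrac1L\sum_{i=1}^L\begin{bmatrix}\bx_i\\ y_i\end{bmatrix}\begin{bmatrix}\bx_i\\ y_i\end{bmatrix}^\top+\tfrac1L\begin{bmatrix}\bx_q\\ 0\end{bmatrix}\begin{bmatrix}\bx_q\\ 0\end{bmatrix}^\top .
\]
Conditionally on $(\bfm,\rc)$, the strong law and \eqref{eq:m_def_and_gaussianity} give $\hat y_q\to \bfm{c}(\bfm,\rc)^\top\bx_q$ a.s., where $\bfm{c}=\bA^\top\bSig\bb$. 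Expanding $\bSig\bb$ yields
\[
\bfm{c}(\bfm,\rc)=\bA_1^\top\big[(\bI+\bfm\bfm^\top)\bb_1+\rc b_0\bfm\big]+\boldsymbol{a}\big(\rc\,\bfm^\top\bb_1+\rc^2 b_0\big).
\]
This is a polynomial of degree at most $2$ in $\rc$ for each fixed $\bfm$.

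\textbf{Step 2: reduce to coefficient equality.} Conditionally on $(\bfm,\rc)$, the query $\bx_q\sim\cN(0,\bI+\bfm\bfm^\top)$ has a nondegenerate law. Hence $\P(\bfm{c}^\top\bx_q=\bw^\top\bx_q\mid\bfm,\rc)=0$ unless $\bfm{c}(\bfm,\rc)=\bw(\bfm,\rc)$. It therefore suffices to show $\P\big(\bfm{c}(\bfm,\rc)=\rc\bfm/(1+\|\bfm\|^2)\big)=0$.

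\textbf{Step 3: the algebraic obstruction (main step).} Fix $\bfm\neq0$. The equation $\bfm{c}(\bfm,\rc)-\rc\bfm/(1+\|\bfm\|^2)=0$ is a vector polynomial in $\rc$ of degree at most $2$. If it is not identically zero in $\rc$, it has at most two roots. Because $\rc$ is independent of $\bfm$ and atomless, this event then has conditional probability zero.

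So the only danger is the set of $\bfm$ on which all $\rc$-coefficients vanish. In particular the linear coefficient must match:
\[
\bA_1^\top\bfm\, b_0+\boldsymbol{a}\,(\bfm^\top\bb_1)=\frac{\bfm}{1+\|\bfm\|^2}.
\]
Take the inner product with $\bfm$ and write $\bfm=s\boldsymbol{\theta}$ with $s=\|\bfm\|$ and $\|\boldsymbol{\theta}\|=1$. This gives $s^2\kappa(\boldsymbol{\theta})=s^2/(1+s^2)$, where $\kappa(\boldsymbol{\theta})=b_0\,\boldsymbol{\theta}^\top\bA_1^\top\boldsymbol{\theta}+(\boldsymbol{\theta}^\top\boldsymbol{a})(\boldsymbol{\theta}^\top\bb_1)$ is a quadratic form independent of $s$. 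So $1+s^2=1/\kappa(\boldsymbol{\theta})$: the left-hand side is homogeneous of degree two in $s$ while the right-hand side is not, so the bad set meets each ray in at most one point.

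\textbf{Expected obstacle.} The main difficulty is converting ``at most one radius per direction'' into probability zero using only that $\|\bfm\|$ is atomless. I would first try the following: the bad radius is a fixed function $s^*(\boldsymbol{\theta})=\sqrt{1/\kappa(\boldsymbol{\theta})-1}$, so the bad event is $\{\|\bfm\|=s^*(\bfm/\|\bfm\|)\}$. This is immediately null if the direction is independent of the radius, or under the absolutely continuous case in the footnote, where it is a Lebesgue-null hypersurface. If that is not enough in general, I would additionally use the constant-in-$\rc$ coefficient $\bA_1^\top(\bI+\bfm\bfm^\top)\bb_1=0$ and the quadratic coefficient $b_0\boldsymbol{a}=0$. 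These should force $\bfm$ into a lower-dimensional algebraic set or force $b_0=0$ or $\boldsymbol{a}=0$, which would contradict the displayed identity. Separately, the event $\bfm=0$ is null because $\|\bfm\|$ is atomless. Combining Steps 1--3 via Fubini yields $\P(\lim\hat y_q=y_q^{\rm Bayes})=0$.
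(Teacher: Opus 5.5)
Your Steps 1--2 and the root-counting in $\rc$ at the start of Step 3 are correct and follow the paper's route; the paper's $\bu, v, \bA, \bb$ are your $\bb_1, b_0, \bA_1, \boldsymbol{a}$. Conditioning on $\bfm$ and using that $\rc$ is atomless and independent of $\bfm$ actually justifies the coefficient matching more carefully than the paper does. The gap is in how you finish. Contracting the linear-in-$\rc$ identity with $\bfm$ replaces a $d$-dimensional vector equation by the scalar equation $\kappa(\boldsymbol{\theta})=1/(1+s^2)$. Under the theorem's hypothesis, which only asks that $\|\bfm\|$ be atomless, that scalar equation can hold with probability one. For example, take $d=2$, $b_0=1$, $\boldsymbol{a}=\bb_1=\bzero$, and $\bA_1=\diag(\lambda_1,\lambda_2)$ with $0<\lambda_2<\lambda_1<1$. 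Let $\bfm=s^*(\boldsymbol{\theta})\boldsymbol{\theta}$ with $\boldsymbol{\theta}$ uniform on an arc inside the open first quadrant. Then $s^*$ is strictly monotone along the arc, so $\|\bfm\|$ is atomless, the constant and quadratic coefficients vanish identically, and your scalar condition holds almost surely. So your first attempt works only in special cases, such as a density (the footnote) or direction independent of radius. The fallback does not rescue it either. In this example $\boldsymbol{a}=\bzero$ does not contradict the displayed identity. And confinement to a lower-dimensional algebraic set is not a null event when only $\|\bfm\|$ is atomless; for instance, $\bfm$ could live on a fixed line.

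The missing idea is to keep the vector form. Since $\boldsymbol{a}(\bfm^\top\bb_1)=(\boldsymbol{a}\bb_1^\top)\bfm$, the linear-coefficient condition reads $\bB^\top\bfm=\bfm/(1+\|\bfm\|^2)$ for the fixed matrix $\bB^\top:=b_0\bA_1^\top+\boldsymbol{a}\bb_1^\top$. For $\bfm\neq\bzero$ this makes $(1+\|\bfm\|^2)^{-1}$ an eigenvalue of $\bB$. Hence $\|\bfm\|^2$ lies in the finite set $\{\lambda^{-1}-1:\lambda\in\sigma(\bB)\cap(0,1]\}$, which is null exactly because $\|\bfm\|$ is atomless. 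This is the paper's argument. It needs no density, no independence between direction and radius, and no use of the constant or quadratic coefficients.
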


The above negative result (see proof in Appendix \ref{app:failure_LSA_proof}) motivates examining more complex and deeper architectures for multimodal problems. \citet{train_transformers_bartlett} demonstrates, for a particular shifting-covariate distribution, that the LSA model with weights obtained via gradient flow fails to in-context learn. In contrast, Theorem \ref{thm:failure_LSA} establishes a negative result in full generality, showing that no parameter configuration will yield success at the ICL task under the multi-modal setting.

\section{A multi-layer CA model} \label{sec:CA_model}

Motivated by Theorem \ref{thm:failure_LSA}, we propose a model where the embedding matrix is now learned and obtained from a deep network of layers akin to CA. Consider the model 
\begin{equation}
\hat y_q =\mathsf{SA}(\bE_{\bF};\theta)_{d+1, L+1} \quad {\rm where} \quad \bE_{\bF} := \begin{bmatrix}
    \bF & \bx_q\\ \label{eq:CA_model}
    \by^\top & 0
\end{bmatrix} \quad 
 {\rm and} \quad \bF := \mathsf{CA}_T(\bX;\gamma). 
\end{equation}
The above model \eqref{eq:CA_model} is a composition of a self-attention layer applied to an embedding $\bE_{\bF}$  that is in turn obtained by applying multiple layers of CA between the covariates $\bX$ and an evolving distribution $\{\bF_t\}_{t \geq 0}$ defined below. The final SA layer is defined analogously to \eqref{eq:SA-def} with parameters $\theta$ and where $\bE_{\bF}$ now replaces $\bE_{\bX}$. To define the CA model $\mathsf{CA}_T(\bX;\gamma)$, consider the recurrence 
\begin{equation}\label{eq:recurrence}
    \bF_t = \bF_{t-1} + \bS_{t-1} + \bA_{t-1 }, \,\, \text{for} \,\, t=1, \dots, T. \quad \text{Define} \quad \bF := \bF_T := \mathsf{CA}_T(\bX;\gamma).
\end{equation}
Above, $\bA_{t-1}$ is the output of an attention head,
\begin{align} \label{eq:def_attn}
\mathsf{A}_{t-1} &= \mathsf{A}(\bQ_{t-1}, \bK_{t-1}, \bV_{t-1}) 
:= \bV_{t-1} \cdot \softmax \left( \bK_{t-1}^\top \bQ_{t-1}/L \right), \\
\bV_{t-1} &= \bW^V_{t-1}\bX,\,\,\, \bK_{t-1} = \bW^K_{t-1} \bX, \,\,\, \bQ_{t-1} = \bW^Q_{t-1} \bF_{t-1}.
\end{align}
\begin{figure}
    \centering
    \includegraphics[width=0.5\linewidth]{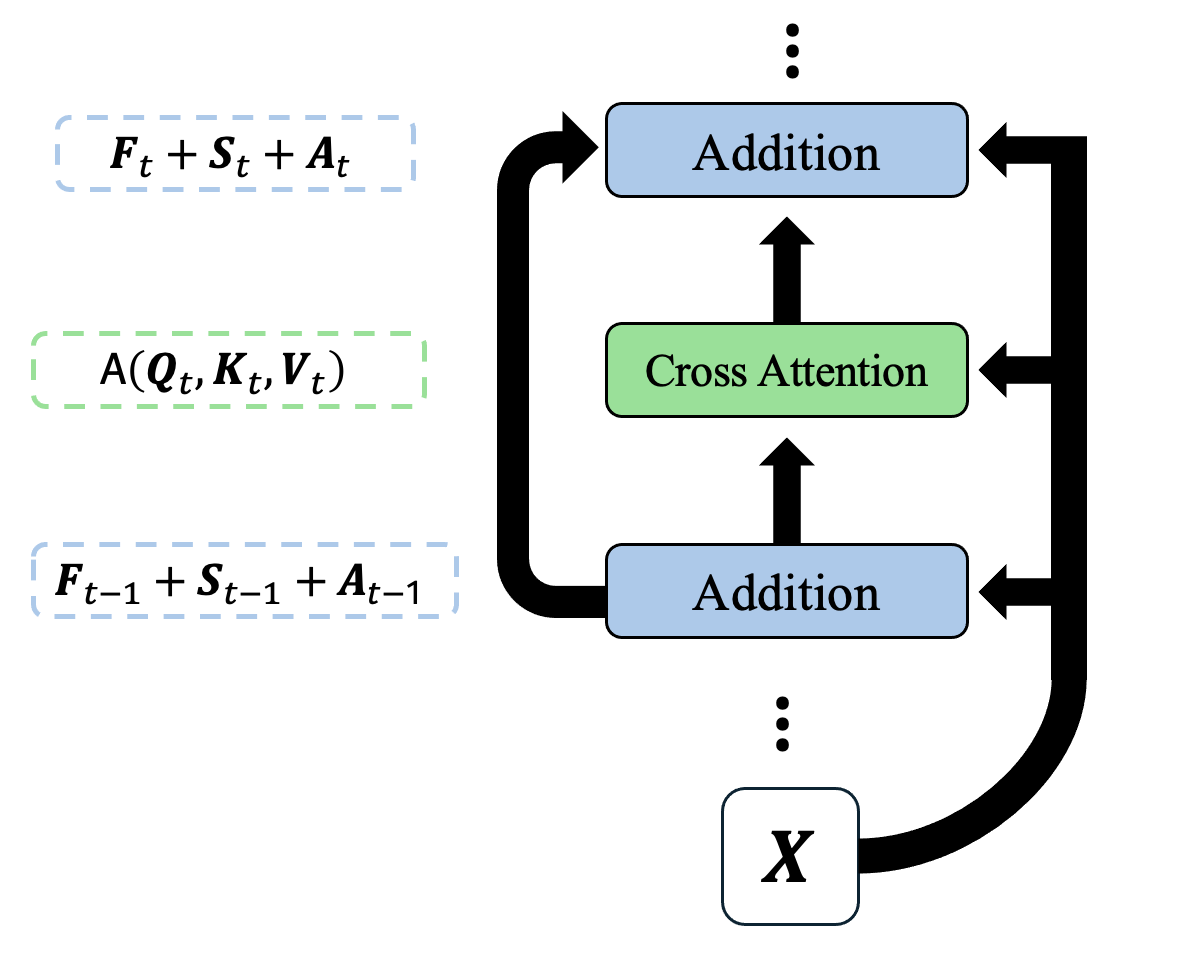}
    \caption{Re-injection of covariates $\bX$ throughout the $\mathsf{CA}_T$ ($\mathsf{LCA}_T$) embeddings. The propagation of the raw data $\bX$ occurs both through $\bS_t$ as well as through the CA  block $\bA_t$. }
    \label{fig:model}
    \vspace{-0.5\baselineskip} 
\end{figure}

Here, $ \bW^V_{t-1},  \bW^K_{t-1}, \bW^Q_{t-1}$ are learnable weight matrices, and $\bS_{t-1} = \bW^S_{t-1}\bX$ with learnable weight matrix $\bW^S_{t-1}$. We take $\bF_0 = 0_{d \times L}$ to begin the recurrence and set $\bF = \bF_T$ with $T$ denoting the depth of the CA embedding. The weights of the embedding are $\gamma = \{\bW_t^V, \bW_t^K, \bW^Q_t, \bW^S_t\}_{t=0}^{T-1}$.

Notice that since the query matrix $\bQ_{t-1}$ depends on $\bF_{t-1}$, $\bA_{t-1}$ is an application of CA between the original data matrix $\bX$ and an evolving data distribution $\bF_{t-1}$. For this reason, we view \eqref{eq:CA_model} as a $T$-layer CA model followed by a single layer of SA to obtain the final model output. The model includes the skip-connection of the previous layer input $\bF_{t-1}$ in each layer \eqref{eq:recurrence} resembling standard transformers. Diverging from conventional architectures \citep{vaswani2023attentionneed}, we also include a learnable skip-connection $\bS_{t-1}$ that injects the unaltered raw data $\bX$ into each layer as displayed in Figure \ref{fig:model}. Taking inspiration from unrolled neural networks \citep{Gregor2010LearningFast, monga2020algorithmunrollinginterpretableefficient}, the inclusion of $\bS_{t-1}$ was preceded by theoretical considerations---specifically, to yield embeddings $\bE_{\bF}$ with desirable properties---and later supported by ablation experiments (see Section \ref{sec:numerics} and Appendix \ref{app:ablations}). As earlier, we use the bottom-right corner of the final output as our prediction $\hat y_q$.

Unlike conventional multimodal transformers, where cross-attention mediates interactions between different modalities \citep{tanbansal2019lxmert}, our CA block repeatedly attends from the evolving state $\bF_t$ back to the fixed multimodal sequence $\bX$ itself. Together with the layer-wise re-injection term $\bS_t$, this gives the model persistent access to the original input, heuristically enabling an iterative correction of the covariate representation; we examine the empirical relevance of these new architectural proposals through ablations in Section~\ref{sec:numerics}.

To ensure the analysis remains tractable, we replace the $\softmax$ with an identity map along the lines of Section \ref{sec:ICL_LSA} and prior theoretical literature on ICL \citep{vonoswald2023transformerslearnincontextgradient, ahn2023transformerslearnimplementpreconditioned, train_transformers_bartlett,Lu_2025}. This yields the linearization  
\begin{equation}
\hat y_q =\mathsf{LSA}(\bE_{\bF};\theta)_{d+1, L+1} \quad {\rm where} \quad \bE_{\bF} = \begin{bmatrix}
    \bF & \bx_q\\ 
    \by^\top & 0,
\end{bmatrix} \label{eq:LSA_LCA_model} \quad   \bF = \mathsf{LCA}_T(\bX;\gamma),
\end{equation}
and $\mathsf{LCA}_T(\bX;\gamma)$  denotes linearized CA where the $\softmax$ is replaced by the identity map in \eqref{eq:def_attn}.

For mathematical simplification, we do not include two key components of conventional transformers---layer normalization \citep{xiong2020layernormalizationtransformerarchitecture} and non-linear activations (MLP layers). In Section~\ref{sec:numerics} (Table \ref{tab:avmnist_main}), we incorporate these components, together with other refinements, into our architecture to empirically validate the mechanisms motivated by our theory. That said, note the model \eqref{eq:LSA_LCA_model} is highly-nonlinear in $\bX $ and the final LSA readout is itself a non-linear (cubic) transformation of the learned embedding $\bE_{\bF}$. Additionally, we notice qualitative similarities between the minima of the loss landscape both with and without layer normalization (Figure \ref{fig:loss_landscape}, Appendix \ref{app:loss_landscapes}).

\paragraph{Weight simplifications.} Thus far, the learnable weights have remained unconstrained. We now restrict their parameter space to ensure analytical tractability. To this end, we consider two simplifications. For both, we freeze $\bW_{t-1}^K = \bW_{t-1}^Q = \bI_d$, and use an initialization strategy similar to \citep{train_transformers_bartlett} for $\bW^{PV}$ and $\bW^{KQ}$; see Appendix \ref{app:readout}.

\paragraph{One-parameter model.} In the first case, we tie weights across layers and set $\bW_{t-1}^S = - \bW_{t-1}^V = \alpha \bI_d$ for all $0 \leq t \leq T-1$ where $\alpha \in \rr$ is a learnable scalar parameter. This gives a one-parameter model $\hat y_q = f(\bX , \bx_q, \by; \alpha)$. 

\paragraph{Two-parameter model.} For the second model, we instead set $\bW_{t-1}^S = \alpha \bI_d \quad {\rm and} \quad \bW_{t-1}^V = \beta \bI_d$ for all $0 \leq t \leq T-1$ where $(\alpha, \beta) \in \rr^{2}$ are learnable parameters, yielding a two-parameter model $\hat y_q = f(\bX , \bx_q, \by; \alpha, \beta)$.

The first model is recovered from the second by setting $\beta = - \alpha$. Since the one- and two-parameter models are subfamilies of the fully parameterized architecture \eqref{eq:LSA_LCA_model}, they help witness the expressibility of the unconstrained architecture. Importantly, these simplifications do not preclude optimality: we show in the following that the two restricted models already achieve Bayes-optimal prediction in the ICL setting when trained by gradient flow. Indeed, Figure \ref{fig:less-restricted-diagnostics} shows that less-restricted parametrizations exhibit qualitatively similar behavior as predicted in Section \ref{sec:main_results}.

\section{Training setup and main results} \label{sec:main_results}

\textbf{Training setup.} Herein, we consider the one- and two-parameter models $\hat y_q = f(\bX, \bx_q ,y;\theta)$ of Section \ref{sec:CA_model} where $\theta$ either denotes the parameters $\alpha$ or $(\alpha, \beta)$. Starting with a set of $N$ training prompts $\{(\bx_i^{(j)}, y_i^{(j)})_{i=1}^{L_{\rm tr}} \cup (\bx_q^{(j)}, y_q^{(j)})\}_{j=1}^N$, given real-valued, continuous targets, it is natural to consider the empirical squared loss on the target query: $\ell_{N,L_{\mathrm{tr}}}(\theta)
:= \frac{1}{N}\sum_{j=1}^N(y^{(j)}_q - \hat y_q^{(j)})^2$. As in prior work \citep{train_transformers_bartlett, ahn2023transformerslearnimplementpreconditioned}, taking the number of training prompts $N \to \infty$, we operate on the population loss $\ell_{L_{\mathrm{tr}}}(\theta)
:= \lim_{N\to\infty} \ell_{N,L_{\mathrm{tr}}}(\theta)
= \ee[\left(y_q - \hat y_q\right)^2]$, where the expectation is taken with respect to the joint distribution detailed in Section \ref{sec:data}. As a final reduction, we consider the asymptotic, training context-length regime and define the loss $ \ell(\theta) := \lim_{L_{\rm tr} \to \infty} \ell_{L_{\mathrm{tr}}}(\theta)$. This regime is qualitatively similar to prior work \citep{train_transformers_bartlett} whereby one establishes the ICL capacity of the model at large context length, however our presentation diverges by taking the $L_{\mathrm{tr}} \to \infty$ limit before training on the loss function. 

Having established the loss of interest, the training procedure for both variants of the CA model will simply be to run gradient flow on the limiting objective $\ell$.  That is, the model parameters evolve according to the ordinary differential equation $\frac{\d}{\d t}\theta_t = -\nabla \ell(\theta_t)$, where $(\theta_t)_{t \geq 0}$ is the trajectory of the parameters over the training horizon. Gradient flow can be seen as the limit of gradient descent in vanishing step size. Analyzing optimization in the gradient flow limit has enabled precise characterization of deep neural network training dynamics \citep{saxe2014exactsolutionsnonlineardynamics, chizat2018globalconvergencegradientdescent}; this approach has been leveraged in previous studies on ICL as well \citep{train_transformers_bartlett, zhang2025trainingdynamicsincontextlearning}.

\textbf{Main results.} We assume the following on the stacked modality-dependent vector $\bfm$ from \eqref{eq:m_def_and_gaussianity}. 
\begin{assumption}[Support of $\|\bfm\|$] \label{assump:support_m}
      Assume $\bfm$ has a continuous distribution and
    $
      \underline{m} := {\rm ess}\inf \|\bfm\|^2 <  {\rm ess} \sup \|\bfm\|^2 =: \overline{m} < \infty
$
    so that $\|\bfm\|$ is a.s.~bounded and non-degenerate. 
\end{assumption}

The boundedness assumption on $\bfm$ is mild, e.g., it holds as soon as the covariates are normalized, for instance $\bx_i \in [0,1]^d$. The density assumption is included for technical reasons. We do not believe it is intrinsically necessary for our results. We now present our main results. We show that a deep model of CA layers learns the Bayes-optimal prediction rule in-context for this multi-modal problem. 

\begin{theorem}[One-parameter model optimality]  \label{thm:one_param_optimal}
Assume the data generating process in Section~\ref{sec:data} and Assumption \ref{assump:support_m}. Run gradient flow on the loss $\ell(\alpha)$ for the one-parameter model. Then, we have:
\begin{enumerate}
    \item Gradient flow converges. Formally, letting $(\alpha_{t,T})_{t \geq 0}$ denote the gradient flow trajectory for a CA stack of depth $T$, the limit
     $\alpha^\ast_T := \lim_{t \to \infty} \alpha_{t,T}$ exists.
    \item As $T \to \infty$, $\alpha^\ast_T \to \frac{2}{2 + \underline{m} + \overline{m} } =: \alpha^\ast$ and the resulting predictor is Bayes optimal as the depth diverges. Formally, $\lim_{T \to \infty} \lim_{L_{\rm te} \to \infty} \hat y_q^\mathrm{(te)} = y_q^{\mathrm{Bayes},\mathrm{(te)}}$ a.s.
\end{enumerate}
\end{theorem}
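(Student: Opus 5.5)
The plan is to compute the limiting loss $\ell(\alpha)$ in closed form, show it is a strictly convex, coercive function of the scalar $\alpha$ (which settles part 1), and then study its minimizer as $T\to\infty$ through an $L^{2T}\to L^\infty$ argument (which settles part 2).

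\textbf{Step 1: the embedding in the large-$L$ limit.} With $\bW^K_{t-1}=\bW^Q_{t-1}=\bI_d$, $\bW^S_{t-1}=\alpha\bI_d$ and $\bW^V_{t-1}=-\alpha\bI_d$, the linearized recurrence \eqref{eq:recurrence} reads
\[
\bF_t=\bF_{t-1}+\alpha\bX-\alpha\,\frac{\bX\bX^\top}{L}\,\bF_{t-1}.
\]
By the strong law, $\bX\bX^\top/L\to\bLam=\bI+\bfm\bfm^\top$ almost surely. I therefore expect $\bF_T\approx \alpha\sum_{k=0}^{T-1}(\bI-\alpha\bLam)^k\bX=\big[\bI-(\bI-\alpha\bLam)^T\big]\bLam^{-1}\bX$, with errors vanishing as $L\to\infty$ for fixed $T$.

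The quantities the readout needs are empirical moments. Since $\bX\by/L\to\rc\bfm$ and $\bfm$ is an eigenvector of $\bLam$ with eigenvalue $1+s$, where $s:=\|\bfm\|^2$, we get
\[
\frac{\bF\by}{L}\to\big[1-(1-\alpha(1+s))^T\big]\,\bw .
\]
Using the readout initialization of Appendix~\ref{app:readout}, in the style of \citep{train_transformers_bartlett}, I would then show that the bottom-right entry of the final $\mathsf{LSA}$ reduces in the limit to
\[
\hat y_q\to \big[1-(1-\alpha(1+s))^T\big]\ip{\bw}{\bx_q}.
\]
Making this rigorous requires a polynomial-in-$\alpha$ expansion of $\bF_T$ and uniform moment bounds, so that dominated convergence passes the $L_{\rm tr}\to\infty$ limit inside the expectation. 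Boundedness of $\|\bfm\|$ from Assumption~\ref{assump:support_m} makes this routine.

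\textbf{Step 2: the loss and convergence of gradient flow.} Write $y_q=\ip{\bw}{\bx_q}+\varepsilon_q$. Using independence of $\varepsilon_q$, $\ee[\rc^2]=1$ and $\bw^\top\bLam\bw=\rc^2 s/(1+s)$,
\[
\ell(\alpha)=C+\ee\!\left[\frac{s}{1+s}\,\big(1-\alpha(1+s)\big)^{2T}\right].
\]
This is an average of even powers of affine functions of $\alpha$ with nonnegative weights that are positive on a set of positive probability. Hence $\ell$ is smooth, strictly convex and coercive in $\alpha$, so it has a unique critical point $\alpha^\ast_T$, which is the global minimizer. Along gradient flow $\ell(\alpha_{t,T})$ is nonincreasing, so the trajectory stays in a compact sublevel set. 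A one-dimensional gradient flow is monotone, so the trajectory converges, and its limit must be the unique critical point $\alpha^\ast_T$. This gives part 1.

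\textbf{Step 3: the $T\to\infty$ limit of the minimizer (main obstacle).} Set $h(\alpha):=\operatorname{ess\,sup}_s|1-\alpha(1+s)|=\max\{|1-\alpha(1+\underline m)|,\,|1-\alpha(1+\overline m)|\}$. The quantity $\big(\ell(\alpha)-C\big)^{1/(2T)}$ is an $L^{2T}$ norm with respect to the finite measure $\frac{s}{1+s}\,d\P$. This measure has the same essential support as $\P$ wherever $s>0$; the case $\underline m=0$ only adds a null weight at a single endpoint, which the continuity of the distribution handles. Therefore $\big(\ell(\alpha)-C\big)^{1/(2T)}\to h(\alpha)$ pointwise, and uniformly on compacts, since these are convex functions of $\alpha$.

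The function $h$ is convex with unique minimizer where the two branches balance, $1-\alpha(1+\underline m)=\alpha(1+\overline m)-1$, that is, $\alpha^\ast=2/(2+\underline m+\overline m)$. The minimum value is $\rho:=(\overline m-\underline m)/(2+\underline m+\overline m)<1$. Uniform convergence of convex functions on compacts, together with uniqueness of the limit minimizer, forces $\alpha^\ast_T\to\alpha^\ast$. I expect this step to need the most care, in particular ruling out escape of $\alpha^\ast_T$ to infinity; coercivity bounds that are uniform in $T$ handle this.

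\textbf{Step 4: Bayes optimality.} For the test prompt, take $L_{\rm te}\to\infty$ first. By Step 1, $\hat y_q^{(\mathrm{te})}\to\big[1-(1-\alpha^\ast_T(1+s))^T\big]\ip{\bw}{\bx_q}$ almost surely. For large $T$, $\alpha^\ast_T$ is close to $\alpha^\ast$, so $|1-\alpha^\ast_T(1+s)|\le(1+\rho)/2<1$ for almost every $s$ in the support. The bracket therefore tends to $1$ as $T\to\infty$, giving the limit $y_q^{\mathrm{Bayes},(\mathrm{te})}$ almost surely.
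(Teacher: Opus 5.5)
Your proposal is correct and follows essentially the paper's route. Both arguments reduce the loss to $\ee\!\left[\tfrac{Z-1}{Z}(1-\alpha Z)^{2T}\right]$ up to a constant, where $Z=1+\|\bfm\|^2$. Both use strict convexity and coercivity to get convergence of gradient flow at fixed depth, and then identify $\lim_T \alpha^\ast_T$ through $\ee[W(1-\alpha Z)^{2T}]^{1/(2T)}\to\max\{|1-\alpha\underline Z|,|1-\alpha\overline Z|\}$, whose unique minimizer is $2/(\underline Z+\overline Z)$.

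The differences are only technical packaging. You use one-dimensional monotonicity of the flow where the paper uses Barbalat's lemma. You use convexity to upgrade pointwise convergence to locally uniform convergence, where the paper uses explicit liminf/limsup (epi-convergence) bounds together with confinement of the minimizers to $(0,1)$. Your Step 4 also handles the depth-dependent parameter $\alpha^\ast_T$ more carefully than the paper's appeal to a fixed-$\alpha$ statement.
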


We prove this result in Appendix \ref{app:one-param}. Since prior linear-attention ICL analyses study fixed-depth, often single-layer, architectures, our infinite depth analysis necessitates a different strategy: we use novel convex-analytic, epi-convergence-style arguments to show that the one-parameter loss converges to a limiting objective whose minimizer can be analyzed. We now turn to the two-parameter setting.

\begin{theorem}[Two-parameter model optimality] \label{thm:two_param_optimal}
Assume the data generating process in Section~\ref{sec:data}, Assumption \ref{assump:support_m}, and the regularity condition of Remark \ref{assump:density_pos}. Run gradient flow on the loss $\ell(\alpha, \beta)$ for the two-parameter model with initialization $\beta_0 \in (-2/(\overline{m} + 1), 0)$ and $\alpha_0 = \alpha^\ast(\beta_0)$ where $\beta \mapsto \alpha^\ast(\beta)$ is defined in Appendix \ref{eq:unique_beta_min}. Then, we have the following:
\begin{enumerate}
    \item Gradient flow converges. Formally, letting $(\alpha_{t,T}, \beta_{t,T})_{t \geq 0}$ denote the gradient flow trajectory for a CA stack of depth $T$, the limit
     $(\alpha^\ast_T, \beta^\ast_T) := \lim_{t \to \infty} (\alpha_{t,T}, \beta_{t,T})$ exists.
    \item Suppose $\lim_{T \to \infty} \beta^\ast_T \notin \{-2/(\overline{m} + 1), 0\}$. Then, with $\alpha^\ast$ as defined in Theorem \ref{thm:one_param_optimal}, as $T \to \infty$, $\ \alpha^\ast_T \to \alpha^\ast$ and $ \beta^\ast_T \to - \alpha^\ast$, and the resulting predictor is Bayes optimal as the depth grows. Formally, $   \lim_{T \to \infty} \lim_{L_{\rm te} \to \infty} \hat y_q^\mathrm{(te)} = y_q^{\mathrm{Bayes},\mathrm{(te)}}$ a.s. 

\end{enumerate}
\end{theorem}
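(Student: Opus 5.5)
We will first compute the limiting objective $\ell(\alpha,\beta)$ in closed form, and then analyze gradient flow on it. With $\bW^K_{t}=\bW^Q_{t}=\bI_d$, $\bW^S_{t}=\alpha\bI_d$, $\bW^V_{t}=\beta\bI_d$ and the softmax linearized, the recurrence \eqref{eq:recurrence} reads
\[
\bF_t=\bF_{t-1}+\alpha\bX+\beta\,\bX\bX^\top\bF_{t-1}/L .
\]
As $L_{\rm tr}\to\infty$ we have $\bX\bX^\top/L\to\bLam=\bI+\bfm\bfm^\top$ a.s., by \eqref{eq:covspike}. Substituting this limit, the recurrence becomes linear, $\bF_t=(\bI+\beta\bLam)\bF_{t-1}+\alpha\bX$, and since $\bF_0=0$ it unrolls to
\[
\bF_T=-\tfrac{\alpha}{\beta}\,\bLam^{-1}\bigl(\bI-(\bI+\beta\bLam)^T\bigr)\bX .
\]
We then feed this into the LSA readout of \eqref{eq:LSA_LCA_model}, with the readout initialization of Appendix~\ref{app:readout}. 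Using $\bF\by/L\to -\tfrac{\alpha}{\beta}\bLam^{-1}(\bI-(\bI+\beta\bLam)^T)\rc\bfm$, the prediction becomes $h_T(\lambda)\ip{\bw}{\bx_q}$ plus terms that vanish, where $\lambda:=1+\|\bfm\|^2$ is the eigenvalue of $\bLam$ along $\bfm$ and
\[
h_T(\lambda):=-\tfrac{\alpha}{\beta}\bigl(1-(1+\beta\lambda)^T\bigr).
\]
Only the $\bfm$-direction eigenvalue appears, because $\bw\propto\bfm$; this is why the optimal rate will involve $\underline m,\overline m$ but not the eigenvalue $1$. Up to an additive constant (the Bayes risk), the loss then takes the form
\[
\ell(\alpha,\beta)=\ee\bigl[c(\rc,\bfm)\,(1-h_T(\lambda))^2\bigr]
\]
for an explicit positive weight $c$. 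The same computation, evaluated at the test prompt with $L_{\rm te}\to\infty$, gives the predictor $h_T(\lambda^{\rm te})\,y_q^{\mathrm{Bayes},(\rm te)}$. Hence Bayes optimality reduces to proving $h_{T}(\lambda)\to1$ for every $\lambda\in[1+\underline m,1+\overline m]$ at the limiting parameters.

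\textbf{Part 1: convergence of gradient flow.} Since $h_T$ is linear in $\alpha$, for fixed $\beta$ the loss is a strictly convex quadratic in $\alpha$. Its unique minimizer $\alpha^\ast(\beta)$ is the curve referenced in the statement. Next, $\ell$ is real-analytic in $(\alpha,\beta)$; the apparent singularity at $\beta=0$ is removable, because $\tfrac{1-(1+\beta\lambda)^T}{\beta}$ is a polynomial in $\beta$. Also $\ell$ is nonnegative and nonincreasing along the flow. We then want to show that the trajectory started at $(\alpha^\ast(\beta_0),\beta_0)$ stays bounded. For this we would use that $\ell$ is coercive in $\alpha$ uniformly on compact $\beta$-sets, together with a monotonicity argument for $\beta_t$ inside $(-2/(\overline m+1),0)$, where $|1+\beta\lambda|<1$ on the whole support. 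Given boundedness, the Łojasiewicz gradient inequality for analytic functions yields finite trajectory length, so $(\alpha^\ast_T,\beta^\ast_T)$ exists.

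\textbf{Part 2: the depth limit.} By hypothesis, $\beta^\ast:=\lim_T\beta^\ast_T$ lies strictly inside $(-2/(\overline m+1),0)$. Then $\sup_\lambda|1+\beta^\ast_T\lambda|\le\rho<1$ for all large $T$, so $(1+\beta^\ast_T\lambda)^T\to0$ uniformly on the support. The stationarity condition $\partial_\alpha\ell=0$ then forces $-\alpha^\ast_T/\beta^\ast_T\to1$. To locate $\beta^\ast$, we would use the epi-convergence strategy from the proof of Theorem~\ref{thm:one_param_optimal}. Since $\alpha\approx-\beta$, the residual is governed by $(1+\beta\lambda)^{2T}$. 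By a Laplace-type argument that uses the continuity of the law of $\|\bfm\|$ near $\underline m$ and $\overline m$, $\tfrac1{2T}\log\ell$ epi-converges to
\[
\log\max\bigl(|1+\beta(1+\underline m)|,\,|1+\beta(1+\overline m)|\bigr).
\]
This limit is uniquely minimized at $\beta=-2/(2+\underline m+\overline m)=-\alpha^\ast$, and the zero-gradient condition in $\beta$ at the limit point pins $\beta^\ast$ to this value. It then follows that $\alpha^\ast_T\to\alpha^\ast$, $h_T\to1$ uniformly, and the predictor is a.s.\ Bayes optimal.

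I expect the main obstacle to be this last identification. The epi-convergence argument controls global minimizers of $\ell_T$, whereas gradient flow only reaches stationary points. We therefore need to show either that every interior stationary point of $\ell_T$ along the $\alpha^\ast(\beta)$ valley converges to the global limit, or that the valley restricted to the interior is unimodal. We would attack this through the one-dimensional reduced loss $\beta\mapsto\ell(\alpha^\ast(\beta),\beta)$, checking its quasi-convexity for large $T$. The hypothesis excluding the endpoints $\{-2/(\overline m+1),0\}$ is exactly what removes the boundary stationary behaviour that this approach cannot handle.
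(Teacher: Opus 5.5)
Your reduction of the loss to $\ee[W(1-h_T(Z))^2]$ with $W=(Z-1)/Z$ matches the paper, as do the quadratic-in-$\alpha$ structure with minimizer $\alpha^\ast(\beta)$, the \L{}ojasiewicz step for a bounded trajectory, and the observation that $-\alpha^\ast_T/\beta^\ast_T\to1$ once $\beta^\ast_T$ converges into the open interval. However, there is a genuine gap exactly where you suspect it, and a second one in Part 1.

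\textbf{Locating the limit of stationary points.} Let $F_T(\beta)=\min_\alpha\ell(\alpha,\beta)$. Epi-convergence of $\frac{1}{2T}\log F_T$ to $\log\phi$ controls only (near-)minimizers. It is compatible with $F_T$ having critical points anywhere. Moreover, $\log\phi$ has a kink at $-\alpha^\ast$, so there is no ``zero-gradient condition at the limit point'' to invoke. What you need is convergence of \emph{derivatives}. The paper writes $F_T=\ee[W]\,\var_\mu(u_T)/\ee_\mu[(1-u_T)^2]$ with $u_T=(1+\beta Z)^T$ and $\mathrm{d}\mu\propto W\,\mathrm{d}\P$. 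It shows that the denominator and the $\ee_\mu[u_T]^2$ part of the variance contribute negligibly to the log-derivative, and obtains
\[
\frac{1}{2T}\,\frac{F_T'(\beta)}{F_T(\beta)}=\int\frac{z}{1+\beta z}\,\mathrm{d}\nu_{T,\beta}(z)+o(1),
\qquad
\mathrm{d}\nu_{T,\beta}(z)\propto|1+\beta z|^{2T}\,\mathrm{d}\mu(z).
\]
On any compact $K\subset(-2/\overline{Z},0)\setminus\{-\alpha^\ast\}$, the tilted measure concentrates, uniformly in $\beta$, at the unique active endpoint $z^\ast(\beta)\in\{\underline{Z},\overline{Z}\}$; this uses the endpoint regularity in Remark~\ref{assump:density_pos}. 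Hence the log-derivative converges uniformly to $(\log\phi)'(\beta)=z^\ast(\beta)/(1+\beta z^\ast(\beta))$, which is bounded away from zero on $K$. So no sequence of stationary points can converge to an interior $\bar\beta\neq-\alpha^\ast$. Your ``quasi-convexity for large $T$'' is the right intuition, but this uniform sign control of $F_T'$ is the concrete ingredient that delivers it. Without it, Part 2 is not proved.

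\textbf{Confining $\beta_t$ in Part 1.} The ``monotonicity argument'' is left unspecified, and something must be proved here. The loss $\ell$ is not coercive in $\beta$: along the valley, $F_T(\beta)$ tends to a positive constant as $|\beta|\to\infty$. So two facts are not automatic: boundedness of the trajectory, and $\beta^\ast_T\in[-2/\overline{Z},0]$. Your Part 2 needs the latter to read the hypothesis as placing $\lim\beta^\ast_T$ in the open interval. The contraction $|1+\beta Z|<1$ only shows that the starting loss is tiny, $F_T(\beta_0)=O(\rho_{\beta_0}^{2T})$. The paper supplies an energy barrier: $\inf_\alpha\ell(\alpha,0)\ge c>0$ independently of $T$, and $\inf_\alpha\ell(\alpha,-2/\overline{Z})=\Omega(1/T)$. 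The second bound is the delicate one, since $|1+\beta\overline{Z}|=1$ at that endpoint; it requires the $\mu$-density of $Z$ to be bounded below near $\overline{Z}$. Because $\ell$ is nonincreasing along the flow, for $T\ge T'(\beta_0)$ the trajectory can never reach either boundary line.
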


The proof can be found in Appendix \ref{app:two_param}. This theorem is  considerably more involved and  requires new proof techniques: since the loss is non-convex and non-coercive, the fixed-depth, gradient-flow limit $(\alpha^\ast_T, \beta^\ast_T)$ can only be identified as a stationary point rather than a global minimizer. We handle this by reducing the problem to the evolution of $\beta^\ast_T$ and showing, via a uniform derivative control, that, as $T\to \infty$, admissible stationary points collapse to the same limit. Theorem \ref{thm:two_param_optimal} shows that the limiting behavior of the two-parameter model mimics that of the one-parameter setting, whereby in both cases 

gradient flow achieves the Bayes-optimal predictor.

A careful reading of the proof of Theorem \ref{thm:one_param_optimal} informs the choice of initialization in Theorem \ref{thm:two_param_optimal}. Solving the recurrence \eqref{eq:recurrence} in the one-parameter model reveals
$\lim_{L_{\rm te}\to \infty} \hat y_q^{\mathrm{te}} = y_q^{\mathrm{Bayes},\mathrm{(te)}} +e(\alpha)$ with error $e(\alpha) =O(\|\bI- \alpha \bLam\|^T)$. A short calculation shows that $e(\alpha) \to 0$ as $T \to \infty$ when $\alpha \in (0, 2/(\overline{m} + 1))$. Given the correspondence between the two models, this lends to $\beta_0 \in (-2/(\overline{m} + 1), 0)$ as a suitable initialization in the two-parameter setup. Moreover, the two-parameter loss $\ell(\alpha, \beta)$ turns out to be quadratic in $\alpha$ with explicit minimizer $\alpha^\ast(\beta)$, and hence starting at $\alpha_0 = \alpha^\ast(\beta_0)$ becomes an obvious choice. This intuition is supported in a visualization of the loss landscape provided in Figure \ref{fig:loss_landscape}, Appendix \ref{app:loss_landscapes}.

\paragraph{The limiting parameter.} From the above discussion, the limiting parameter $\alpha^\ast = 2/(2 + \underline{m} + \overline{m})$ suffices for Bayes optimality since $e(\alpha) \to 0$ as $T \to \infty$ when $\alpha \in (0, 2/(\overline{m} + 1))$. Recalling $\bLam = \bI + \bfm\bfm^\top$, along the spike direction, the absolute value of the largest attainable eigenvalue is
\[
\phi(\alpha):= \max \{|1-\alpha (1 + \underline{m}) |, |1-\alpha (1 +\overline{m})|\} 
\]
and we note $\alpha^\ast = \argmin_{\alpha \in \rr} \phi(\alpha)$. Interestingly, the limiting parameter $\alpha^\ast$ coincides with the parametrization minimizing the worst-case error rate given by the spike contribution to $e(\alpha)$. Hence, Theorems \ref{thm:one_param_optimal} and \ref{thm:two_param_optimal} show that our CA models are minimax optimal in the large $T$ and $L_{\rm te}$ limits.

\paragraph{The role of depth.}
Recall the embedding $\bF=\mathsf{LCA}_T(\bX;\gamma)$ \eqref{eq:CA_model} produced by the $T$-layer CA stack. Under mild conditions (and ignoring lower-order terms that vanish as $L_{\mathrm{te}}\to\infty$), the final prediction produced by the LSA readout has the schematic form $\hat y_q^{(\mathrm{te})} \approx \bw^{(\mathrm{te})\top}(\frac{1}{L_{\rm te}}\bX^{({\rm te})}\bF^\top)\bx_q^{(\mathrm{te})}$ as shown in Appendix \ref{app:ca-algebra} (Lemma \ref{lem:readout}). Thus, if the CA stack can enforce
\begin{equation}
\label{eq:whiten}
\frac{1}{L_{\rm te}}\bX^{({\rm te})}\bF^\top \longrightarrow \bI_d
\quad\text{as }T\to\infty,
\end{equation}
then $\hat y_q^{\mathrm{(te)}}\to y_q^{\mathrm{Bayes},\mathrm{(te)}}$ as both $T\to\infty$ and $L_{\mathrm{te}}\to\infty$. An instance where \eqref{eq:whiten} can occur is if $\bF \approx \bLam^{-1} \bX$

---that is, when $\bF$ whitens the covariates. Crucially, since the embedding $\bF$ is prompt-dependent, \eqref{eq:whiten} is achievable despite $\bLam$ being random: the model is not constrained to learn a single global inverse covariance, rather it can compute $\bLam^{-1}$ implicitly from the prompt itself. Here lies the significance of the CA embedding as a denoising mechanism for the covariates, where, in contrast, a single LSA layer can only learn an average covariance, and thus fails to be  Bayes-optimal.

\section{Numerical experiments}\label{sec:numerics}
We present three sets of experiments. First, on synthetic data generated under the assumptions of Section~\ref{sec:data} and Assumption~\ref{assump:support_m} (see Appendix \ref{app:more-less-restricted-experiments}) we verify the qualitative predictions from Theorems \ref{thm:failure_LSA}, \ref{thm:one_param_optimal}, and \ref{thm:two_param_optimal}. Second, we train less-restricted parametrizations of the CA model \eqref{eq:CA_model}, \eqref{eq:recurrence}, and compare their behavior against the predictions of Section \ref{sec:main_results}. Third, on a multi-modal benchmark dataset, we show that our theory-motivated architectural principles remain effective after replacing the linearized model \eqref{eq:LSA_LCA_model} with standard deep-learning components---e.g. layer-normalization, MLP layers, and softmax attention. Furthermore, using ablations, we demonstrate the impact of our novel architectural components in \eqref{eq:recurrence}, that is, CA between $\bF_t$ and $\bX$, and reinjection of $\bX$ (through $\bS_t$).

\paragraph{Validation of theoretical predictions.}

In Figure~\ref{fig:numerics_combined} (\textbf{Left}), we observe the failure of the single-layer LSA model proved in Theorem \ref{thm:failure_LSA}. In contrast, LCA-based models achieve error rates which are several orders of magnitude smaller as ${L_{\rm te}}$ grows. Figure~\ref{fig:numerics_combined} (\textbf{Right}) shows the effect of depth in the $\mathsf{LCA}_T$ embedding, \eqref{eq:LSA_LCA_model},  in aiding model performance on the multi-modal ICL task. Despite the asymptotic frameworks in Theorems \ref{thm:one_param_optimal} and \ref{thm:two_param_optimal}, we observe exceptional performance at moderate depth as anticipated by the geometric-rate error decay $e(\alpha)$ discussed at the end of Section~\ref{sec:main_results}. 

\begin{figure}[ht]
\centering
\includegraphics[width=0.92\textwidth]{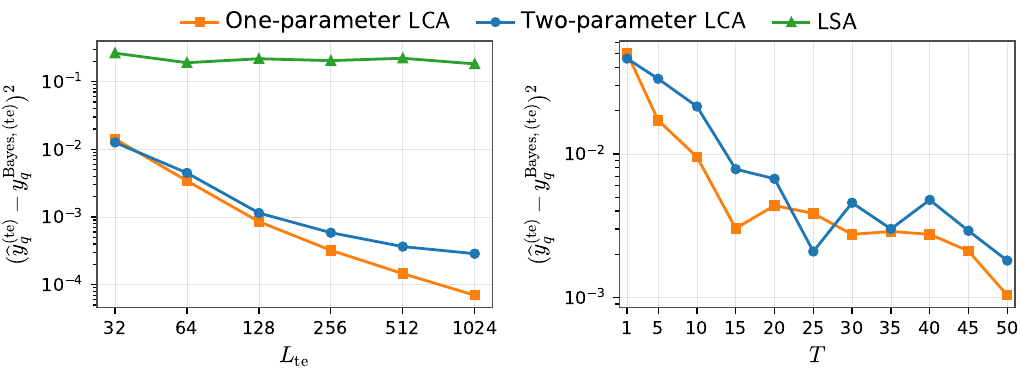}
\caption{
Prediction error for the one- and two-parameter LCA-based models and the LSA model.
Left: ICL error at fixed depth for varying $L_{\mathrm{te}}$.
Right: ICL error at fixed $L_{\mathrm{te}}$ for varying depth $T$. }
\label{fig:numerics_combined}
\end{figure}

\paragraph{Less-restricted parametrizations.}
Our analysis restricts the LCA parametrization to simple one/two-parameter models with weight tying (recall definitions in the discussion preceding Section \ref{sec:main_results}). Here, we train three less-restricted LCA parametrizations for $(\bW_t^S, \bW_t^V)_{t=0}^{T-1}$ to investigate whether the predictions of Section \ref{sec:main_results} persist beyond tied scalar-identity weights
(details in  Appendix~\ref{app:more-less-restricted-experiments}). Specifically in Figure~\ref{fig:less-restricted-diagnostics}, `untied scalar', `diagonal', and `full-matrix' respectively refer to  $(\bW_t^S, \bW_t^V) = (\alpha_t \bI, \beta_t \bI)$ with $(\alpha_t,\beta_t)_{t=0}^{T-1}$ learnable, $(\bW_t^S, \bW_t^V) = ({\rm diag}(\ba_t),{\rm diag }(\bb_t))$ with vectors $(\ba_t,\bb_t)_{t=0}^{T-1}$ learnable, and  $(\bW_t^S, \bW_t^V)$ taken as full matrices with all entries learnable. 
Figure~\ref{fig:less-restricted-diagnostics} (\textbf{Left}) displays the normalized traces $\widehat\alpha_t=d^{-1}\operatorname{tr}(\bW_t^S)$ and $\widehat\beta_t=d^{-1}\operatorname{tr}(\bW_t^V)$ across layers, showing that they cluster around the one- and two-parameter $T\to\infty$ gradient-flow limit $(\alpha^\ast, -\alpha^\ast)$ (Theorems \ref{thm:one_param_optimal}--\ref{thm:two_param_optimal}). The normalized traces $(\widehat\alpha_t, \widehat\beta_t)$ on their own do not confirm a full collapse of the form $(\bW_t^S, \bW_t^V) \approx (\alpha^\ast \bI, -\alpha^\ast \bI)$, so Figure~\ref{fig:less-restricted-diagnostics} (\textbf{Right}) displays the whitening diagnostic $\|\bX\bF_t^\top/L-\bI\|_F/\sqrt d$ across layers. In line with \eqref{eq:whiten}, even these less-restricted models exhibit the same prompt-dependent covariance-whitening as proven earlier for our simpler models.

\begin{figure}[ht]
\centering
\includegraphics[width=0.92\textwidth]{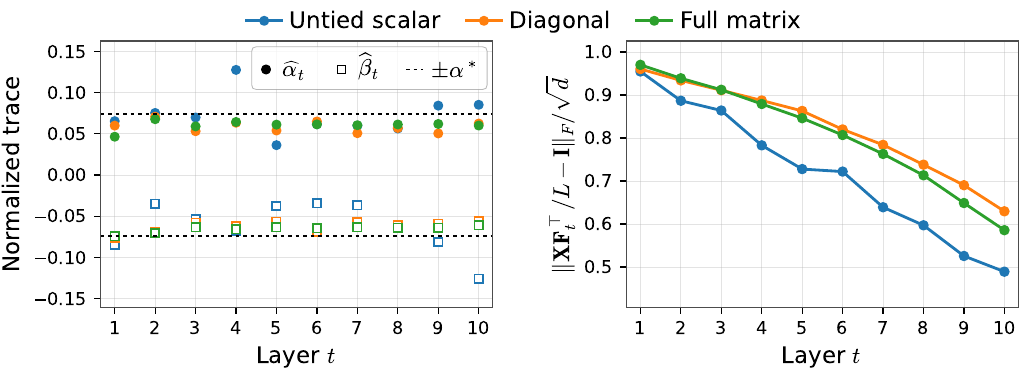}
\caption{Diagnostics for less-restricted LCA parametrizations (setup same as in Figure~\ref{fig:numerics_combined}; see Appendix \ref{app:more-less-restricted-experiments}).
\textbf{Left}: layerwise scalar projections; 
\textbf{Right}: whitening error across layers.}
\label{fig:less-restricted-diagnostics}
\end{figure}

\paragraph{Performance of theory-motivated components in standard multi-modal transformers.}
We next ask whether the novel theory-motivated components of our CA architecture (Section~\ref{sec:CA_model}) remain useful after replacing the linearized model with standard transformer blocks---softmax multi-head attention, layer normalization, MLP blocks. We call this architecture---with all the standard transformer blocks on top of our innovations in Section \ref{sec:CA_model}---the ``full paper-inspired model" (see Appendix~\ref{app:avmnist_details} for details). Recall from \eqref{eq:recurrence}--\eqref{eq:def_attn} that each layer updates an evolving state $\bF_t$ using two nonstandard ingredients: a CA term $\bA_t$ between $\bF_t$ and the data $\bX$, and an additional skip term $\bS_t$ which re-injects $\bX$ at every layer.   We evaluate on corrupted\footnote{We add substantial noise to covariates so the task is not saturated; this explains accuracies in the 60\% range in Table \ref{tab:avmnist_main}.} avMNIST~\citep{avmnist}, a paired audio-visual digit classification benchmark. We compare the full paper-inspired model against a standard bidirectional CA transformer \citep{lu2019vilbert} and three ablations: (i) A no-injection ablation removes the $\bS_t$ term from \eqref{eq:recurrence}, while retaining CA between $\bF_t$ and $\bX$. (ii) A no-CA ablation retains the re-injection term but replaces the $\bF_t$--$\bX$ CA update by self-attention of $\bF_t$ with itself. (iii) The final ablation removes both mechanisms, yielding a conventional self-attention model. Architecture and training details are deferred to Appendix~\ref{app:avmnist_details}. 

\begin{table}[ht]
\centering
\small
\setlength{\tabcolsep}{6pt}
\begin{tabular}{lcccc}
\toprule
Model & Params & Best test acc. (\%) & Best epoch \\ 
\midrule
\textbf{Full paper-inspired model (Inj. \& cross-attn)}            & $4.36 \times 10^4$ & \textbf{63.85} & 64\\
No injection              & $3.82 \times 10^4$ & 62.72 & 65\\
No cross-attn             & $4.36 \times 10^4$ & 62.10 & 79\\
No inj. \&  no cross-attn         & $3.82 \times 10^4$ & 60.79 & 90 \\
Standard bi-CA transformer& $6.07 \times 10^4$ & 62.72	 & 62 \\
\bottomrule
\end{tabular}
\vspace{1em}
\caption{
Classification results on avMNIST for the full paper-inspired model, three ablations thereof, and a standard bi-directional cross-attention transformer. All models use (pre) layer-norm, $4$-headed multi-head attention, $10$ attention layers, embedding dim. $d=16$, hidden dim. $d_{h}=32$, dropout rate of $0.1$, GELU MLP blocks, and appended $\mathsf{CLS}$ token. Training: cross-entropy loss over 100 epochs using AdamW optimizer with learning rate $3\times10^{-4}$ and weight decay $10^{-4}$. Model parameter counts and best test accuracy with corresponding epoch is shown. }
\label{tab:avmnist_main}
\end{table}

Table~\ref{tab:avmnist_main} shows that our full paper-inspired model achieves the best test accuracy among all compared architectures. In particular, it outperforms each ablation. Additionally, it improves over the standard bidirectional cross-attention transformer, a standard fusion architecture for multimodal transformers~\citep{lu2019vilbert}, despite the latter having more trainable parameters. These results suggest that both mechanisms identified by our theory---cross-attention between the evolving state $\bF_t$ and the original data $\bX$, and recurrent re-injection of the covariates through $\bS_t$---remain useful in more practical deep-learning implementations. While preliminary, the results are promising and motivate further investigation of our architecture in larger-scale multimodal settings.

\section{Discussion} \label{sec:discussion}
We studied a multi-modal  task in which each prompt is generated by a latent factor model, inducing prompt-dependent covariate statistics which place the setting outside of standard unimodal ICL analyses. To demonstrate this added complexity, we first established the failure of single-layer LSA models standard in the literature. Motivated by this limitation, we introduced a novel transformer-like architecture, incorporating both depth and CA, proving not only its expressibility for the ICL task, but also demonstrating its capacity to learn the Bayes-optimal predictor via gradient flow. 
To our knowledge, our work is the first to tackle a multi-modal ICL task in a mathematically tractable setting and provably overcome covariate distribution shifts among prompts.  While preliminary, our empirical findings (Table \ref{tab:avmnist_main}) demonstrate the promise of our new architectural innovations (Section \ref{sec:CA_model}) and motivate, as an important future direction, further empirical studies benchmarking these approaches on larger-scale multi-modal problems. Another important future direction would include extending our population loss level analysis to sample-level results, as done earlier for unimodal problems \citep{Lu_2025}. We expect this direction to pose intriguing mathematical challenges---requiring non-trivial developments in random matrix theory.

\subsection*{Acknowledgments}
The authors thank Daniel Hsu for helpful feedback on an earlier version of this manuscript. N.B.~acknowledges support from the \textit{Fonds de recherche du Qu\'ebec---Nature et technologies} (FRQNT). S.S. gratefully acknowledges support from the National Science Foundation (DMS CAREER 2239234), Office of Naval Research (N00014-23-1-2489) and Air Force Office of Scientific Research (FA9950-23-1-0429). P.S.~gratefully acknowledges support from the National Science Foundation (DMS CAREER 2440824) and the Office of Naval Research (N00014-26-1-2144). 

\bibliography{refs}
\bibliographystyle{plainnat}

\newpage

\appendix
\section{Notation, basic identities, and the Bayes-predictor}\label{app:notation}

\subsection{Notation and conventions}\label{app:notation_table}

We work with a single prompt (task) unless otherwise stated. For two sequences $(a_n)_{n \ge 1 }$ and $(b_n)_{n \ge 1}$, we write 
\[
a_n = O(b_n)
\]
to mean $a_n \leq C \cdot b_n$ for some constant $C > 0$ for $n$ sufficiently large. We also use $o(1)$ to denote a sequence of terms which vanish as the ambient index grows. For example, writing $a_n = o(1)$ implies $a_n \to 0$ as $n \to \infty$.

\begin{table}[h]
\centering
\begin{tabular}{ll}
\hline
Symbol & Meaning \\
\hline
$d=d_1+d_2$ & total covariate dimension \\
$L$ & context length (we use $L_{\mathrm{tr}}$ or $L_{\mathrm{te}}$ when needed) \\
$\bx_i\in\mathbb{R}^d$ & stacked covariate for context token $i\in[L]$ \\
$\bx_q\in\mathbb{R}^d$ & query covariate \\
$y_i\in\mathbb{R}$ & context response for token $i\in[L]$ \\
$y_q\in\mathbb{R}$ & query response \\
$\bfm\in\mathbb{R}^d$ & stacked modality vector (prompt-specific), $m=[v;r]$ \\
$\rc\in\mathbb{R}$ & prompt-specific scalar controlling the response \\
$u_i\sim\mathcal{N}(0,1)$ & latent scalar shared across modalities and response \\
$\bfmu_i\sim\mathcal{N}(0,\bI_d)$ & stacked noise in covariates \\
$\bLam=\bI+\bfm\bfm^\top$ & covariate covariance conditional on $\bfm$ \\
$\bSig$ & joint covariance of $(x,y)$ conditioned on $(m,\rc)$ \\
$Z := 1+\|\bfm\|^2$ & top eigenvalue of $\bLam$ (rank-one spike) \\
$\bw\in\mathbb{R}^d$ & Bayes-optimal regression coefficient: $\bw=\frac{\rc}{1+\|\bfm\|^2}\bfm$ \\
$\underline m,\overline m$ & $\underline m=\mathrm{ess\,inf}\,\|\bfm\|^2$, $\overline m=\mathrm{ess\,sup}\,\|\bfm\|^2$ \\
$\underline{Z},\overline{Z}$ & $\underline{Z}=1+\underline m=\mathrm{ess\,inf}\,Z$, $\overline{Z}=1+\overline m=\mathrm{ess\,sup}\,Z$ \\
\hline
\end{tabular}
\end{table}

\subsection{Joint Gaussian form and Bayes-predictor}\label{app:bayes}

Recall the data generating process (Section~\ref{sec:data}): for $i\in[L]\cup\{q\}$,
\[
\bx_i = u_i \bfm + \bfmu_i,\qquad y_i=\rc u_i,
\]
with $u_i\sim\mathcal{N}(0,1)$ and $\bfmu_i = [\bxi_i, \bta_i]\sim\mathcal{N}(0,I_d)$ independent across tokens $i$ and independent of $(\bfm,\rc)$.

\begin{lemma}[Joint covariance and Bayes coefficient]\label{lem:bayes}
Conditional on $(\bfm,\rc)$, the pair $(\bx_i,y_i)\in\mathbb{R}^{d+1}$ is jointly Gaussian with mean $0$ and covariance
\[
\bSig
=
\begin{bmatrix}
\bLam & \rc \bfm\\
\rc \bfm^\top & \rc^2
\end{bmatrix},
\qquad
\bLam = \bI+\bfm\bfm^\top.
\]
Moreover,
\[
\ee[y_i\mid \bfm,\rc,\bx_i] = \bw^\top \bx_i,\qquad
\bw =\frac{\rc}{1+\|\bfm\|^2}\bfm,
\]
and we may write
\[
y_i = \ip{\bw}{\bx_i} + \varepsilon_i,
\quad \varepsilon_i \,\indep  \bx_i \mid(\bfm,\rc),
\quad
\varepsilon_i \sim \cN \left( 0,\frac{\rc^2}{1+\|\bfm\|^2} \right).
\]
\end{lemma}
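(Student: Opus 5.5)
The plan is a direct Gaussian computation, conditional on $(\bfm,\rc)$, carried out in three steps.

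\textbf{Step 1: covariance.} Conditional on $(\bfm,\rc)$, the vector $(\bx_i,y_i)$ is a linear image of the standard Gaussian vector $(u_i,\bfmu_i)\in\rr^{d+1}$:
\[
\begin{bmatrix}\bx_i\\ y_i\end{bmatrix}=\begin{bmatrix}\bfm & \bI_d\\ \rc & 0\end{bmatrix}\begin{bmatrix}u_i\\ \bfmu_i\end{bmatrix}.
\]
So it is jointly Gaussian with mean zero, and its covariance is $\bB\bB^\top$ for this matrix $\bB$. Expanding gives:
\begin{itemize}
\item the $\bx$-block $\bfm\bfm^\top+\bI=\bLam$;
\item the cross term $\ee[\bx_i y_i]=\rc\,\bfm\,\ee[u_i^2]=\rc\bfm$, since $\bfmu_i$ is independent of $u_i$;
\item $\ee[y_i^2]=\rc^2$.
\end{itemize}

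\textbf{Step 2: conditional mean.} For jointly Gaussian vectors, $\ee[y_i\mid\bx_i]=\bSig_{yx}\bLam^{-1}\bx_i$. By Sherman--Morrison,
\[
\bLam^{-1}=\bI-\frac{\bfm\bfm^\top}{1+\|\bfm\|^2},
\qquad\text{so}\qquad
\bLam^{-1}\bfm=\frac{\bfm}{1+\|\bfm\|^2}.
\]
Hence $\bw=\rc\bLam^{-1}\bfm=\rc\bfm/(1+\|\bfm\|^2)$.

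\textbf{Step 3: residual.} Define $\varepsilon_i:=y_i-\bw^\top\bx_i$. Then $(\varepsilon_i,\bx_i)$ is jointly Gaussian, and
\[
\cov(\varepsilon_i,\bx_i)=\rc\bfm^\top-\bw^\top\bLam=0,
\]
so conditional independence follows from the fact that uncorrelated jointly Gaussian variables are independent. The residual variance is
\[
\rc^2-\rc^2\,\bfm^\top\bLam^{-1}\bfm=\rc^2\Bigl(1-\frac{\|\bfm\|^2}{1+\|\bfm\|^2}\Bigr)=\frac{\rc^2}{1+\|\bfm\|^2}.
\]

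There is no real obstacle here. The only points needing care are the Sherman--Morrison inversion and stating the independence conditionally on $(\bfm,\rc)$, since unconditionally the variables are not Gaussian.
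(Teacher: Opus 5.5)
Your proposal is correct and follows essentially the same route as the paper: compute the conditional covariance blocks, apply the Gaussian conditional-mean formula with Sherman--Morrison to get $\bw$, then use the conditional variance and Gaussianity of the residual. Your explicit linear-image representation and the zero-covariance check for $\varepsilon_i$ spell out steps that the paper states only briefly.
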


\begin{proof}
We compute conditional covariances given $(m,\rc)$:
\[
\bSig_{x,x} := \ee[\bx_i \bx_i^\top\mid \bfm,\rc] = \ee[(u_i \bfm + \bfmu_i)(u_i \bfm + \bfmu_i)^\top] = \bfm\bfm^\top + \bI = \bLam,
\]
\[
\bSig_{x,y} := \ee[\bx_i y_i\mid \bfm,\rc] = \ee[(u_i \bfm + \bfmu_i)\cdot \rc u_i] = \rc\, \ee[u_i^2]\,\bfm = \rc \bfm,
\]
\[
\bSig_{y,y} := \ee[y_i^2\mid \bfm,\rc]=\rc^2\ee[u_i^2]=\rc^2,
\]
yielding the stated $\bSig$. For a jointly Gaussian vector, the conditional mean is
\[
\ee[y_i\mid \bx_i,\bfm,\rc] = \bSig_{x,y}^\top \bSig_{x,x} ^{-1} \bx_i
= (\rc \bfm)^\top \bLam^{-1} \bx_i
\]
and so $\bw=\bLam^{-1}(\rc \bfm)$. Using the Sherman--Morrison formula,
\[
\bLam^{-1}=(\bI+\bfm\bfm^\top)^{-1}=\bI-\frac{\bfm\bfm^\top}{1+\|\bfm\|^2}.
\]
and hence,
\[
w=\left(\bI-\frac{\bfm\bfm^\top}{1+\|\bfm\|^2}\right)\rc \bfm =\rc \bfm - \rc\frac{\|\bfm\|^2}{1+\|\bfm\|^2}m =\frac{\rc}{1+\|\bfm\|^2}\bfm.
\]

Finally, the conditional variance of $y_i$ given $\bx_i$ is
\begin{align*}
\mathrm{Var}(y_i \mid \bx_i,\bfm,\rc) &= \bSig_{y,y} - \bSig_{x,y}^\top \bSig^{-1}_{x,x} \bSig_{x,y}\\
&= \rc^2-(\rc \bfm)^\top\bLam^{-1}(\rc \bfm) \\
&=\rc^2\left(1- \|\bfm\|^2/(1+\|\bfm\|^2)\right) \\
&= \rc^2/(1+\|\bfm\|^2).
\end{align*}
The residual $\varepsilon_i:=y_i-\ip{\bw}{\bx_i}$ is independent of $\bx_i$ conditional on $(\bfm,\rc)$ by Gaussianity.
\end{proof}

\subsection{Law of large numbers for prompt statistics}\label{app:lln}

Let $\bX=[\bx_1,\dots,\bx_L]\in\mathbb{R}^{d\times L}$ and $\by=[y_1,\dots,y_L]^\top\in\mathbb{R}^{L}$ denote the covariate matrix and target vector, and define the context sample covariance
\[
\widehat\bLam := \frac{1}{L}\bX\bX^\top.
\]

\begin{lemma}[Matrix LLN at fixed dimension]\label{lem:matrix-lln}
Fix $(\bfm,\rc)$ and thus $\bLam=\bI+\bfm\bfm^\top$. Then, almost surely as $L\to\infty$,
\[
\widehat\bLam \to \bLam
\]
entry-wise, and hence also in operator norm. Moreover,
\[
\frac{1}{L}\bX \by \to \ee[\bx_i y_i\mid \bfm,\rc]=\rc \bfm,
\qquad
\frac{1}{L}\sum_{i=1}^L y_i^2 \to \ee[y_i^2\mid \bfm,\rc]=\rc^2.
\]
Finally, writing $y_i=\ip{\bw}{\bx_i}+\varepsilon_i$ as in Lemma~\ref{lem:bayes} and letting $\beps=(\varepsilon_1,\dots,\varepsilon_L)^\top$,
\[
\frac{1}{L}\bX \beps \to \bzero\quad\text{almost surely.}
\]
\end{lemma}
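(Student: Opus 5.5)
Conditional on $(\bfm,\rc)$, the context tokens $(\bx_i,y_i)_{i\ge 1}$ are i.i.d.\ draws from $\cN_{d+1}(0,\bSig)$ (Lemma~\ref{lem:bayes}). We therefore plan to apply the strong law of large numbers coordinatewise, conditionally on $(\bfm,\rc)$. Each quantity in the statement is an empirical average of a fixed product of coordinates of $(\bx_i,y_i,\varepsilon_i)$:
\[
\bigl(\widehat\bLam\bigr)_{kl}=\frac1L\sum_{i=1}^L x_{ik}x_{il},\qquad \Bigl(\frac1L\bX\by\Bigr)_k=\frac1L\sum_{i=1}^L x_{ik}y_i,\qquad \frac1L\sum_{i=1}^L y_i^2 .
\]
Since Gaussian vectors have all moments, each summand is integrable; indeed $\ee|x_{ik}x_{il}|\le (\ee x_{ik}^2\,\ee x_{il}^2)^{1/2}<\infty$ by Cauchy--Schwarz. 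Hence Kolmogorov's SLLN gives almost sure convergence of each average to its conditional mean. By the computation in Lemma~\ref{lem:bayes}, these means are $\bLam_{kl}$, $\rc\bfm_k$ and $\rc^2$ respectively.

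Because $d$ is fixed, there are only finitely many entries ($d^2+d+1$). A finite intersection of probability-one events still has probability one, so all entries converge simultaneously almost surely. In fixed dimension all norms are equivalent; for instance $\|\cdot\|_{\rm op}\le\|\cdot\|_F$. Entrywise convergence therefore implies convergence in operator norm.

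For the last claim, set $\varepsilon_i=y_i-\ip{\bw}{\bx_i}$. The triples $(\bx_i,\varepsilon_i)$ are i.i.d.\ and jointly Gaussian given $(\bfm,\rc)$, and by Lemma~\ref{lem:bayes} each $\varepsilon_i$ is independent of $\bx_i$ with mean zero. Thus $\ee[x_{ik}\varepsilon_i\mid\bfm,\rc]=0$, and the SLLN gives $\frac1L\bX\beps\to\bzero$ almost surely. As a consistency check, the identity
\[
\frac1L\bX\beps=\frac1L\bX\by-\widehat\bLam\bw
\]
yields the same conclusion from the previous limits, since $\rc\bfm-\bLam\bw=0$ by the formula for $\bw$ in Lemma~\ref{lem:bayes}.

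The only point that needs care is the meaning of ``almost surely'' when $(\bfm,\rc)$ is random. We will state the result conditionally, for every fixed value of $(\bfm,\rc)$. If an unconditional statement is wanted, Fubini's theorem applied to the indicator of the convergence event extends the conditional probability-one statement to an unconditional one. There is no real obstacle here; this is a routine LLN argument.
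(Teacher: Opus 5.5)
Your proposal is correct and follows the paper's approach: the paper's proof is a single line invoking the entrywise law of large numbers conditional on $(\bfm,\rc)$, using that the tokens are i.i.d.\ Gaussian with finite second moments. Your extra details (integrability via Cauchy--Schwarz, intersecting finitely many probability-one events, norm equivalence in fixed dimension, and the cross-check $\rc\bfm-\bLam\bw=\bzero$) are precisely the routine steps the paper leaves implicit.
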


\begin{proof}
Conditional on $(\bfm, \rc)$, the pairs $(\bx_i, y_i)$ are i.i.d. Gaussian with finite second moments, so the stated convergences follow from the law of large numbers entrywise. 
\end{proof}

\section{Proof of Theorem \ref{thm:failure_LSA}} \label{app:failure_LSA_proof}

Recall the single-layer LSA predictor
\begin{equation} \label{eq:lsa_def_appB}
\hat y_q = \mathsf{LSA}(\bE_{\bX};\theta)_{d+1,L+1}, \quad \mathsf{LSA}(\bE_{\bX};\theta) =
\bE_{\bX} + \bW_{PV}\bE_{\bX}\cdot \frac{\bE_{\bX}^\top \bW_{KQ}\bE_{\bX}}{L},
\end{equation}
where
\[
\bE_{\bX}=\begin{bmatrix}\bX & \bx_q\\ \by^\top & 0\end{bmatrix}\in\mathbb{R}^{(d+1)\times(L+1)}.
\]
We provide the full proof of Theorem \ref{thm:failure_LSA} below. 

\begin{proof}[Proof of Theorem~\ref{thm:failure_LSA}]

Let us write the weight matrix $\bW_{KQ}$ in block form
\[
\bW_{KQ} = \begin{bmatrix}
\bA & \bb'\\
\bb^\top & c
\end{bmatrix}, \qquad
\bA \in \rr^{d\times d},\; \bb,\bb'\in \rr^d,\;
c\in\rr,
\]
and denote the last row of $\bW_{PV}$ by
\[
 \begin{bmatrix}
\bu^\top & v
\end{bmatrix} := \be_{d+1}^\top \bW_{PV},
\qquad \bu\in\rr ^d,\; v\in\rr.
\]
Since $(\bE_{\bX})_{d+1,{L_{\rm te}}+1}=0$, the residual term (skip-connection) in $\mathsf{LSA}(\bE_{\bX};\theta)$ does not contribute to the predictor $\hat y_q$ and so
\begin{equation} \label{eq:yhat_reduce1_appB}
\hat y_q =  \be_{d+1}^\top \bW_{PV} \bE_{\bX} \cdot \frac{\bE_{\bX}^\top \bW_{KQ}\bE_{\bX}}{{L_{\rm te}}} \be_{{L_{\rm te}}+1}.
\end{equation}
For $i \in [{L_{\rm te}}]$, the $i$-th entry of the vector $\bE_{\bX}^\top \bW_{KQ}\bE_{\bX} \be_{{L_{\rm te}}+1}$ is given by 
\[
[\bE_{\bX}^\top \bW_{KQ}\bE_{\bX}  \be_{{L_{\rm te}}+1}]_i
= \begin{bmatrix}\bx_i\\ y_i\end{bmatrix}^\top 
\begin{bmatrix}\bA & \bb'\\ \bb^\top & c\end{bmatrix} 
\begin{bmatrix}\bx_q\\ 0\end{bmatrix}
= \left( \bx_i^\top \bA \bx_q + y_i\bb^\top \bx_q \right).
\]
Moreover, the $({L_{\rm te}}+1)$-th entry of $\tfrac{1}{{L_{\rm te}}}\bE_{\bX}^\top \bW_{KQ}\bE_{\bX} \be_{{L_{\rm te}}+1}$ is of order $O(\tfrac{1}{{L_{\rm te}}})$ and thus negligible as ${L_{\rm te}} \to \infty$. Hence, we obtain
\begin{equation} \label{eq:yhat_sum_appB}
\hat y_q = \frac{1}{{L_{\rm te}}}\sum_{i=1}^{L_{\rm te}} \left(\bu^\top \bx_i+v y_i\right) \left(\bx_i^\top \bA  + y_i \bb^\top \right)\bx_q + o(1).
\end{equation}
Conditioned on $(\bfm,\rc)$, the context tokens $\{(\bx_i,y_i)\}_{i=1}^{L_{\rm te}}$ are i.i.d., and $\bx_q$ is independent of $\{(\bx_i,y_i)\}_{i=1}^{L_{\rm te}}$ with the same conditional law as $\bx_i$. Applying Lemma \ref{lem:matrix-lln} to each empirical moment that appears when expanding \eqref{eq:yhat_sum_appB} yields, almost surely (for fixed $(\bfm,\rc,\bx_q)$),
\[
\hat y_q \longrightarrow \ip{\hat\bw}{\bx_q},
\]
as ${L_{\rm te}} \to \infty$ where the limiting regression vector $\hat\bw =\hat \bw(\bfm,\rc)\in\mathbb{R}^d$ is
\begin{equation} \label{eq:what_appB}
\hat \bw(\bfm,\rc)^\top = \bu^\top \bLam \bA + \left((\bu^\top \bfm)\bb^\top+ v \bfm^\top \bA\right) \rc +  v \bb^\top \rc^2.
\end{equation}

We now show that $\P( \hat \bw(\bfm,\rc) = \bw(\bfm ,\rc) ) = 0$. We begin by assuming, for contradiction, that indeed $\ip{\hat \bw}{\bx_q} = \ip{\bw}{\bx_q}$ and then examine the measure of this event. Since $\bx_q\mid(\bfm,\rc)\sim\mathcal{N}(0,\bLam)$, we must have $\hat \bw(\bfm,\rc)= \bw(\bfm,\rc)$. Note that the Bayes vector $\bw$ is linear in $\rc$, while, in contrast, $\hat \bw(\bfm,\rc)$ in \eqref{eq:what_appB} is a polynomial in $\rc$ of degree two and so the $\rc^2$-coefficient must vanish: $v \bb =\bzero_d$. Similarly, since $\bw$ has no constant ($\rc^0$) term, we must have $\bA^\top \bLam \bu = \bzero_d$. It then follows that
\begin{equation} \label{eq:non_linear_in_m}
\hat \bw(\bfm,\rc)^\top = \bfm^\top \bB \rc
\end{equation}
for the fixed matrix $\bB = \bu \bb^\top + v\bA$. Recalling that 
\[
\bw = \frac{\rc}{1 + \|\bfm\|^2} \bfm,
\]
by transposing \eqref{eq:non_linear_in_m}, it follows that $(1 + \|\bfm\|^2)^{-1}$ is an eigenvalue of $\bB^\top$, equivalently of $\bB$. Since $\|\bfm\|$ is atomless while the spectrum of $\bB$ is finite, this cannot occur---explicitly,
\[
\sup_{\bB \in \rr^{d \times d}}\P\left( \frac{1}{1 + \|\bfm\|^2} \in \sigma(\bB) \right) = 0
\]
where $\sigma(\bB)$ denotes the spectrum of $\bB$. Hence, 
\begin{equation} \label{eq:coef_mismatch_posprob_appB}
\mathbb{P} \left(\hat \bw(\bfm,\rc) = \bw(\bfm,\rc)\right) = 0
\end{equation}
which establishes the desired result. 

\end{proof}

\section{CA algebra and preliminaries}\label{app:ca-algebra}

This appendix collects algebraic identities for the LCA stack and the final LSA readout.
These identities are used in the proofs of Theorems~\ref{thm:one_param_optimal} and~\ref{thm:two_param_optimal}.

\subsection{LSA readout with frozen parameters}\label{app:readout}

Taking inspiration from the initialization strategy of \cite{train_transformers_bartlett}, we initialize the $\bW^{PV}$ and $\bW^{KQ}$ by
\[
\bW^{PV}= \begin{bmatrix}
\bzero_{d\times d} & \bzero_d\\ \bzero_d^\top & 1
\end{bmatrix}, \qquad \bW^{KQ}= \begin{bmatrix} \bI_d & \bzero_d\\ \bzero_d^\top & 0 \end{bmatrix}.
\]

\begin{lemma}[Closed-form readout]\label{lem:readout}
With the above $\bW^{PV},\bW^{KQ}$, the single-layer LSA readout satisfies
\[
\widehat y_q
=
\mathrm{LSA}(\bE_{\bF})_{d+1,L+1}
=
\bw^\top \left[\frac{1}{L} \sum_{i=1}^L \bx_i \f_i^\top \right]  \bx_q +\left[\frac{1}{L} \sum_{i=1}^L \varepsilon_{i} \f_{i}^\top \right] \bx_q.
\]
and 
\[
\left[\frac{1}{L} \sum_{i=1}^L \varepsilon_{i} \f_{i}^\top \right] \bx_q \to 0
\]
almost surely as $L \to \infty$ if $\sup_{i \geq 1}\|\f_i\| < \infty$. 
\end{lemma}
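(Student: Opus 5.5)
The plan is to compute the readout directly from the block structure of $\bE_{\bF}$ and the frozen weights, and then to control the noise term by conditioning on the covariates.

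\textbf{Step 1: the closed form.} Write $\bE_{\bF}=\begin{bmatrix}\bF & \bx_q\\ \by^\top & 0\end{bmatrix}$ with columns $(\f_i,y_i)$ for $i\in[L]$ and $(\bx_q,0)$ for the query. Since $(\bE_{\bF})_{d+1,L+1}=0$, the skip connection does not contribute to the readout, exactly as in the proof of Theorem~\ref{thm:failure_LSA}. Thus
\[
\widehat y_q=\be_{d+1}^\top\bW^{PV}\bE_{\bF}\,\tfrac1L\bE_{\bF}^\top\bW^{KQ}\bE_{\bF}\be_{L+1}.
\]
The two frozen weights act as follows:
\begin{itemize}
\item $\bW^{KQ}\bE_{\bF}\be_{L+1}=(\bx_q,0)^\top$, so $\bE_{\bF}^\top\bW^{KQ}\bE_{\bF}\be_{L+1}$ has $i$-th entry $\f_i^\top\bx_q$ for $i\le L$, and last entry $\|\bx_q\|^2$.
\item $\be_{d+1}^\top\bW^{PV}\bE_{\bF}=[\by^\top,\,0]$, so the last entry is multiplied by $0$ and drops out exactly.
\end{itemize}
Hence
\[
\widehat y_q=\tfrac1L\sum_{i=1}^L y_i\f_i^\top\bx_q .
\]
Substituting $y_i=\ip{\bw}{\bx_i}+\varepsilon_i$ from Lemma~\ref{lem:bayes} gives the displayed identity. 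This step is pure algebra.

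\textbf{Step 2: the noise term.} I would not use Lemma~\ref{lem:matrix-lln} here. The embeddings $\f_i=\f_i^{(L)}$ are columns of $\mathsf{LCA}_T(\bX;\gamma)$, so they depend on the whole context and form a triangular array rather than an i.i.d. sequence. Instead I will condition on $(\bfm,\rc,\bX,\bx_q)$. By Lemma~\ref{lem:bayes} and independence across tokens, the vector $\beps$ is independent of $\bX$ (hence of $\bF$, which is a deterministic function of $\bX$) and of $\bx_q$, given $(\bfm,\rc)$. It has i.i.d.\ $\cN(0,\sigma^2)$ entries with $\sigma^2=\rc^2/(1+\|\bfm\|^2)$.

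Consequently, conditionally on $(\bfm,\rc,\bX,\bx_q)$, the scalar $S_L:=\tfrac1L\sum_i\varepsilon_i\f_i^\top\bx_q$ is centered Gaussian with variance
\[
\tfrac{\sigma^2}{L^2}\sum_i(\f_i^\top\bx_q)^2\le \tfrac{\sigma^2C^2\|\bx_q\|^2}{L},
\]
where $C:=\sup_{i}\|\f_i\|$. I interpret this supremum as uniform over $L$ as well, which is the natural reading for a triangular array.

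\textbf{Step 3: from a variance bound to almost sure convergence.} The Gaussian tail bound gives, for any $\delta>0$,
\[
\P\bigl(|S_L|>\delta\mid\cdot\bigr)\le 2\exp\!\bigl(-L\delta^2/(2\sigma^2C^2\|\bx_q\|^2)\bigr).
\]
This is summable in $L$, so Borel--Cantelli (applied conditionally, then integrated) yields $S_L\to0$ almost surely.

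The main subtlety I expect is exactly this dependence of $\f_i$ on the full context. The argument relies on conditional independence of $\beps$ from $\bX$ to get an exact Gaussian with $O(1/L)$ variance. It also relies on treating the boundedness hypothesis as uniform in $L$ (or holding almost surely along the sequence), so the exponential bound is summable.
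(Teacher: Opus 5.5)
Your proposal is correct, and under your reading of the hypothesis as uniform in $L$ every step checks out. Step 1 is the same block computation as the paper's, which writes $\widehat y_q=\be_{d+1}^\top\frac{1}{L}\bE_{\bF}\bE_{\bF}^\top[\bx_q;0]=\by^\top\bF^\top\bx_q/L$.

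For the noise term, however, you take a genuinely different and more careful route. The paper simply invokes ``the law of large numbers'' from $\ee[\varepsilon_i]=0$ and $\varepsilon_i\indep\f_i$. As you point out, $\f_i=\f_i^{(L)}$ depends on $L$ and on the entire context. The summands therefore form a triangular array, to which the classical LLN does not directly apply. Moreover, the independence actually needed is that of the whole vector $\beps$ from $(\bX,\bx_q)$ given $(\bfm,\rc)$, not the per-token statement. Your argument closes this gap: you compute the conditional Gaussian law with variance $\frac{\sigma^2}{L^2}\sum_i(\f_i^\top\bx_q)^2$, apply a Gaussian tail bound, and finish with Borel--Cantelli, which needs no independence across $L$.

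Your route also buys generality. It only uses $\frac{1}{L^2}\sum_i(\f_i^\top\bx_q)^2=\frac{1}{L}\bx_q^\top(\frac{1}{L}\bF\bF^\top)\bx_q$. So the hypothesis $\sup_i\|\f_i\|<\infty$ can be replaced by $\sup_L\|\frac{1}{L}\bF\bF^\top\|<\infty$ almost surely. This matters for how the lemma is used. Take the LCA embedding $\f_i=\alpha\sum_{k<T}\bfM^k\bx_i$ with Gaussian $\bx_i$, for the parameters of interest (e.g.\ $\alpha\neq0$, $\beta\in(-2/\overline{Z},0)$). There $\sum_{k<T}\bfM^k$ converges to an invertible limit, so $\max_{i\le L}\|\f_i\|$ grows like $\max_{i\le L}\|\bx_i\|\to\infty$ and the stated boundedness hypothesis fails. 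By contrast, $\frac{1}{L}\bF\bF^\top=\alpha^2\sum_{j,k<T}\bfM^j\widehat\bLam\bfM^k$ converges almost surely, so your variance-based bound still applies.

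The paper's one-line appeal is shorter, but your version is the one that actually justifies the step in the setting where the lemma is used.
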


\begin{proof}
By definition,
\[
\mathsf{LSA}(\bE_{\bF})
=
\bE_{\bF} + \bW^{PV}\bE_{\bF}\cdot \frac{\bE_{\bF}^\top \bW^{KQ}\bE_{\bF}}{L}.
\]
Since $(\bE_{\bF})_{d+1,L+1}=0$, given the definitions of $\bW^{PV}$ and $\bW^{KQ}$, we have
\begin{align*}
    \hat y_q 
    &= \be_{d+1}^\top \bW^{PV}\bE_{\bF}\cdot \frac{\bE_{\bF}^\top \bW^{KQ}\bE_{\bF}}{L}\be_{L+1} \\ 
    &= \be_{d+1}^\top \frac{\bE_{\bF} \bE_{\bF}^\top}{L} \begin{bmatrix}
        \bx_q \\
        0
    \end{bmatrix} \\
    &= \be_{d+1}^\top \frac{1}{L} \begin{bmatrix}
        \bF\bF^\top + \bx_q \bx_q^\top & \bF \by \\
        \by^\top \bF^\top & \by^\top \by
    \end{bmatrix} \begin{bmatrix}
        \bx_q \\
        0
    \end{bmatrix} \\ 
    &= \frac{\by^\top \bF^\top \bx_q}{L} \\
    &= \bw^\top \left[\frac{1}{L} \sum_{i=1}^L \bx_i \f_i^\top \right]  \bx_q +\left[\frac{1}{L} \sum_{i=1}^L \varepsilon_{i} \f_{i}^\top \right] \bx_q.
\end{align*}
Hence, since $\ee[\varepsilon_i] = 0$ and $\varepsilon_i \indep \f_i$ as $\bF$ depends only on $\bX$, the law of large numbers yields 
\[
\left[\frac{1}{L} \sum_{i=1}^L \varepsilon_{i} \f_{i}^\top \right] \bx_q \to 0
\]
almost surely as $L \to \infty$.
\end{proof}

Recall now that $\bF$ and hence $\{\f_i\}_{i \geq 1}$ depend on the model depth $T$. From Lemma \ref{lem:readout}, we see that if 
\[
\frac{1}{L} \bX \bF^\top \to \bI_d
\]
as $T \to \infty$, the asymptotic model predictor will be
\[
\lim_{T \to \infty} \lim_{L \to \infty} \hat y_q = \ip{\bw}{\bx_q},
\]
hence yielding the Bayes-optimal prediction. This will be the key step in establishing Theorems \ref{thm:one_param_optimal}(3) and \ref{thm:two_param_optimal}(3).

\subsection{Solving the CA recurrence}\label{app:ca-solve}

In the one- and two-scalar simplification, we take (for all layers)
\[
\bW^K=\bW^Q=\bI_d,\qquad \bW^S=\alpha \bI_d,\qquad \bW^V=\beta \bI_d,
\]
where $\alpha,\beta\in \rr$ are scalar parameters (with the one-parameter model corresponding to $\beta=-\alpha$). Layer-wise, the model then becomes, for $t=1,\dots, T$, 
\[
\bF_t = \bF_{t-1} + \alpha \bX + \frac{\beta}{L}\bX\bX^\top \bF_{t-1}, 
\]
with $\bF_0 = \bzero_{d \times L}$ and $\bF = \bF_T$. Define the sample covariance $\widehat\bLam:=\frac{1}{L}\bX\bX^\top$. The following result yields a closed-form expression for the final output of the CA stack and establishes the parameter window for which one obtains the Bayes-predictor.  

\begin{lemma} \label{lem:CA_output}
    The output of the final layer of the CA network is 
    \begin{equation} \label{eq:F_t_simplified}
   \bF =  \bF_T = \alpha \sum_{k=0}^{T-1}\bfM^k \bX\quad {\rm where} \quad \bfM = \bI+ \beta \widehat \bLam .
     \end{equation}
     and, for $\beta \neq 0$,
     \begin{equation}
         \frac{1}{L}\bX\bF^\top = \frac{\alpha}{\beta}\left[ (\bI+\beta \widehat \bLam)^T - \bI \right].
     \end{equation}
    If $\alpha = - \beta \in (0, 2/ (1 + \overline{m}) )$, then as context length $L \to \infty$ followed by the number of layers $T \to \infty$, we have 
    \[
    \frac{1}{L}\bX \bF^\top \to \bI.
    \]
    almost surely. 
\end{lemma}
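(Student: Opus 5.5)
The proof has three parts: solve the linear recurrence by induction, telescope a geometric sum, and pass to the limit along the eigendirections of $\bLam$.

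\emph{Closed form for $\bF_T$.} With $\bW^K=\bW^Q=\bI_d$, $\bW^S=\alpha\bI$ and $\bW^V=\beta\bI$, the layer update is
\[
\bF_t=\bF_{t-1}+\alpha\bX+\beta\widehat\bLam\bF_{t-1}=\bfM\bF_{t-1}+\alpha\bX .
\]
This is linear with constant coefficient $\bfM=\bI+\beta\widehat\bLam$. We claim $\bF_t=\alpha\sum_{k=0}^{t-1}\bfM^k\bX$ and prove it by induction on $t$. The base case is $\bF_0=0$, an empty sum. For the inductive step,
\[
\bfM\Bigl(\alpha\sum_{k=0}^{t-2}\bfM^k\bX\Bigr)+\alpha\bX=\alpha\sum_{k=0}^{t-1}\bfM^k\bX .
\]
Setting $t=T$ gives \eqref{eq:F_t_simplified}.

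\emph{Closed form for $\frac1L\bX\bF^\top$.} Since $\bfM$ is symmetric,
\[
\tfrac1L\bX\bF^\top=\alpha\sum_{k=0}^{T-1}\widehat\bLam\,\bfM^k .
\]
For $\beta\neq0$ write $\widehat\bLam=(\bfM-\bI)/\beta$. All the matrices commute, so the sum telescopes:
\[
\tfrac{\alpha}{\beta}\sum_{k=0}^{T-1}(\bfM^{k+1}-\bfM^k)=\tfrac{\alpha}{\beta}(\bfM^T-\bI).
\]
No inverse of $\widehat\bLam$ is needed.

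\emph{The limit.} Take $\alpha=-\beta$, so that $\frac1L\bX\bF^\top=\bI-(\bI-\alpha\widehat\bLam)^T$. Fix $T$ and let $L\to\infty$. By Lemma~\ref{lem:matrix-lln}, $\widehat\bLam\to\bLam$ almost surely, and $A\mapsto(\bI-\alpha A)^T$ is a polynomial, hence continuous. Therefore
\[
\tfrac1L\bX\bF^\top\to\bI-(\bI-\alpha\bLam)^T \quad\text{a.s.}
\]
Now diagonalize $\bLam=\bI+\bfm\bfm^\top$. Its eigenvalues are $1$, on $\bfm^\perp$ with multiplicity $d-1$, and $Z=1+\|\bfm\|^2$, along $\bfm$. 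Hence $\bI-\alpha\bLam$ has eigenvalues $1-\alpha$ and $1-\alpha Z$, and its operator norm is $\max\{|1-\alpha|,|1-\alpha Z|\}$.

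We need this norm to be strictly below $1$. For $\alpha\in(0,2/(1+\overline m))$ we have $\alpha<2$, so $|1-\alpha|<1$. Assumption~\ref{assump:support_m} gives $1\le Z\le 1+\overline m$ almost surely, so $0<\alpha Z<2$ and $|1-\alpha Z|<1$. Thus $\|(\bI-\alpha\bLam)^T\|\le\rho^T\to0$ with $\rho<1$, and the iterated limit equals $\bI$ almost surely.

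The only point needing care is the order of limits. They are taken $L\to\infty$ first, for fixed $T$, and only then $T\to\infty$. Because of this ordering, no uniform-in-$L$ control of the spectrum of $\widehat\bLam$ is required. The spectral bound is needed only for the population matrix $\bLam$, and it holds for almost every $\bfm$ under the essential-supremum bound.
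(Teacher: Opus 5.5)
Your proposal is correct and follows the paper's proof essentially step for step. You solve the linear recurrence $\bF_t=\bfM\bF_{t-1}+\alpha\bX$ by induction, telescope via $\widehat\bLam=(\bfM-\bI)/\beta$ using the symmetry of $\bfM$, then apply the law of large numbers at fixed $T$ and the spectral bound $\max\{|1-\alpha|,|1-\alpha Z|\}<1$ on $\bI-\alpha\bLam$; your explicit check that $0<\alpha Z<2$ almost surely and your empty-sum base case $t=0$ only spell out details the paper states more tersely.
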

\begin{proof}
    We prove the first statement by induction. For the base case $T=1$, we have trivially $\bF_1 = \alpha \bX$ since $\bF_0 = \bzero_{d \times L}$. Now, suppose $\bF_{T-1} = \alpha \sum_{k=0}^{T-2}\bfM^k \bX$. Then,
    \begin{align*}
        \bF_T &= \bF_{T-1} + \alpha \bX + \beta \widehat \bLam \bF_{T-1} \\ 
        &= \alpha \bX + (\bI + \beta \widehat \bLam)\bF_{T-1} \\
        &= \alpha \bX +  \alpha \bfM \sum_{k=0}^{T-2}\bfM^k \bX \\
        &= \alpha \bX +  \alpha  \sum_{k=1}^{T-1}\bfM^k \bX  =  \alpha \sum_{k=0}^{T-1}\bfM^k \bX
    \end{align*}
    which completes the induction. As $\bfM$ is symmetric and $\widehat \bLam = \beta^{-1}(\bfM-\bI)$, a simple computation then yields 
    \begin{align*}
           \frac{1}{L}\bX\bF^\top &= \alpha \widehat \bLam \sum_{k=0}^{T-1}\bfM^k = \frac{\alpha}{\beta} \sum_{k=0}^{T-1}(\bfM^{k+1} - \bfM^k) \\
           &= \frac{\alpha}{\beta} (\bfM^T - \bI) =\frac{\alpha}{\beta}\left[ (\bI+\beta \widehat \bLam)^T - \bI \right].
    \end{align*}
    Now, if $\alpha = - \beta$, we have 
    \[
      \frac{1}{L}\bX\bF^\top = \bI - (\bI - \alpha \widehat \bLam)^T \to \bI - (\bI- \alpha \bLam)^T
    \]
    as $L \to \infty$ by the Law of Large Numbers. As $\bLam = \bI + \bfm\bfm^\top$, the eigenvalues of $\bI - \alpha \bLam$ are $1-\alpha$ and $1 - \alpha(1 + \|\bfm\|^2)$. Hence, for the choice of $\alpha \in (0, 2/(1 + \overline{m}))$, it follows that the operator norm $\|\bI- \alpha \bLam\| < 1$ almost surely and so $(\bI- \alpha \bLam)^T \to \bzero_{d \times d}$ as $T \to \infty$ almost surely.
\end{proof}

\begin{remark}
    From the proof of Lemma \ref{lem:CA_output}, we see that at asymptotic context length the error away from the Bayes-optimal prediction decays geometrically in depth $T$ so long as the model parameters satisfy $\alpha = - \beta \in (0, 2/ (1 + \overline{m}))$. Hence, although the result concerns the behavior at infinite depth, in practice small to moderate depth networks should suffice for accurate in-context predictions. This intuition is supported by numerical experiments in Section \ref{sec:numerics}.
\end{remark}

\section{Proof of Theorem~\ref{thm:one_param_optimal}} \label{app:one-param}

This appendix proves Theorem~\ref{thm:one_param_optimal} for the one-parameter model, i.e. the simplified CA stack with $\beta=-\alpha$ trained by gradient flow on the limiting population loss. Throughout, we assume Assumption~\ref{assump:support_m}: $\|\bfm\|^2$ is almost surely bounded and has non-degenerate continuous support. Define
\[
Z:=1+\|\bfm\|^2,\qquad \underline{Z}:=\mathrm{ess}\inf\, Z,\qquad \overline{Z}:=\mathrm{ess}\sup\,Z,
\]
so $1\le \underline{Z} < \overline{Z} < \infty$. We begin by determining a simplified form for the loss $\ell(\alpha)$ and collecting some of its properties. 

\begin{lemma}[One-parameter loss] \label{lem:one_param_loss_properties}
    The loss $\ell(\alpha)$ has the form 
    \[
    \ell(\alpha) = \ee\!\left[ \frac{Z - 1}{Z}(1 - \alpha Z)^{2T}\right]
    \]
    up to the additive constant $\ee[1/Z]$, where $Z = 1 + \|\bfm\|^2$. Moreover, the loss has the following properties:
    \begin{enumerate}
        \item  $\ell(\alpha)$ is strictly convex and coercive in the sense that 
    \[
    \lim_{|\alpha| \to \infty}   \ell(\alpha) = \infty;
    \]
    \item  $\ell(\alpha)$ has a unique minimizer which lies in $(0,1)$.
    \end{enumerate}
\end{lemma}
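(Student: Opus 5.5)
We first compute the loss in closed form, then read off convexity, coercivity and the location of the minimizer from that expression.

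\textbf{Step 1: the loss formula.} By Lemma~\ref{lem:readout}, for fixed $L$ we have $\hat y_q = \by^\top\bF^\top\bx_q/L$. By Lemma~\ref{lem:CA_output} with $\beta=-\alpha$, we have $\tfrac1L\bX\bF^\top = \bI-(\bI-\alpha\widehat\bLam)^T$, and the noise term vanishes. Hence, as $L_{\rm tr}\to\infty$,
\[
\hat y_q \to \bw^\top\bigl[\bI-(\bI-\alpha\bLam)^T\bigr]\bx_q .
\]
To exchange limit and expectation, we dominate $|y_q-\hat y_q|^2$ uniformly in $L$. The entries of $\widehat\bLam$ and $\frac1L\bX\by$ have all moments, $\bfm$ is bounded, and $\rc$ has finite variance, so uniform integrability follows from standard moment bounds on polynomials in Gaussian sample moments. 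We then write
\[
y_q-\hat y_q = \varepsilon_q + \bw^\top(\bI-\alpha\bLam)^T\bx_q ,
\]
where $\varepsilon_q$ is conditionally independent of $\bx_q$. This gives
\[
\ee[(y_q-\hat y_q)^2] = \ee[\rc^2/Z] + \ee\bigl[\bw^\top(\bI-\alpha\bLam)^T\bLam(\bI-\alpha\bLam)^T\bw\bigr].
\]
Since $\bw\parallel\bfm$ is an eigenvector of $\bLam$ with eigenvalue $Z$, the quadratic form equals
\[
\|\bw\|^2 Z(1-\alpha Z)^{2T} = \rc^2\,\frac{Z-1}{Z}\,(1-\alpha Z)^{2T}.
\]
Using the independence of $\rc$ and $\bfm$ together with $\ee\rc^2=1$ yields the stated form with additive constant $\ee[1/Z]$.

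\textbf{Step 2: strict convexity and coercivity.} For each $z\in[\underline Z,\overline Z]$, the map $\alpha\mapsto(1-\alpha z)^{2T}$ is convex. Its second derivative is $2T(2T-1)z^2(1-\alpha z)^{2T-2}$, which vanishes only at $\alpha=1/z$. The weight $(Z-1)/Z$ is positive a.s. because $Z>1$ a.s.; this uses $\underline m<\overline m$ and the continuity of the law of $\bfm$, so that $\|\bfm\|^2=0$ has probability zero. The loss is smooth, and differentiation under the expectation is justified by boundedness of $Z$, giving
\[
\ell''(\alpha)=\ee\Bigl[\tfrac{Z-1}{Z}\,2T(2T-1)Z^2(1-\alpha Z)^{2T-2}\Bigr].
\]
This is strictly positive for every $\alpha$, because $Z$ is non-degenerate and so $\P(Z\ne 1/\alpha)>0$. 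For coercivity, on the event $\{Z\ge\underline Z\}$ we have $(1-\alpha Z)^{2T}\ge$ a quantity growing like $|\alpha|^{2T}\underline Z^{2T}$ as $|\alpha|\to\infty$. Since $\ee[(Z-1)/Z]>0$, it follows that $\ell(\alpha)\to\infty$.

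\textbf{Step 3: location of the minimizer.} Strict convexity plus coercivity give a unique minimizer $\alpha_T^\ast$. To locate it, we evaluate
\[
\ell'(\alpha) = -2T\,\ee\bigl[(Z-1)(1-\alpha Z)^{2T-1}\bigr].
\]
At $\alpha=0$ this equals $-2T\,\ee[Z-1]<0$. At $\alpha=1$ it equals
\[
-2T\,\ee\bigl[(Z-1)(1-Z)^{2T-1}\bigr] = 2T\,\ee\bigl[(Z-1)^{2T}\bigr]>0 .
\]
Since $\ell'$ is strictly increasing, its root lies in $(0,1)$.

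The main obstacle is the justification in Step~1 of exchanging the $L_{\rm tr}\to\infty$ limit with the expectation. For this we plan to bound $\|(\bI-\alpha\widehat\bLam)^T\|\le(1+|\alpha|\|\widehat\bLam\|)^T$ and then use finite moments of $\|\widehat\bLam\|^{2T}$ together with Cauchy--Schwarz. Everything else is elementary.
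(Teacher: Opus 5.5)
Your proposal is correct and follows essentially the same route as the paper. You get the closed form via Lemmas~\ref{lem:readout} and~\ref{lem:CA_output}, and your use of $\bw\parallel\bfm$ as an eigenvector of $\bLam$ with eigenvalue $Z$ is a cleaner version of the paper's trace computation with $\bSig_w$. You then use convexity, coercivity, and the sign of $\ell'$ at $0$ and $1$, just as the paper does.

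Your explicit uniform-integrability step is more careful than the paper, which simply passes $L\to\infty$ inside the expectation. One slip there: $\frac1L\bX\by=\rc\cdot\frac1L\sum_i u_i\bx_i$ need not have all moments when only $\ee[\rc^2]=1$ is assumed. To fix this, first factor $y_q-\hat y_q=\rc\,\bigl(u_q-\frac1L\sum_i u_i\f_i^\top\bx_q\bigr)$. Independence of $\rc$ then gives $\ee[(y_q-\hat y_q)^2]=\ee\bigl[(u_q-\frac1L\sum_i u_i\f_i^\top\bx_q)^2\bigr]$, and your moment bounds apply to this $\rc$-free factor.
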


\begin{proof}
By Lemma \ref{lem:CA_output}, the model output can be written as 
\[
\hat y_q = \bw^\top(\bI-\bfM^T)\bx_q + \frac{1}{L}\beps^\top \bF^\top \bx_q = \bw^\top \bx_q - \bw^\top \bfM^T\bx_q + \frac{1}{L}\beps^\top \bF^\top \bx_q .
\]
where $\bfM = \bI - \alpha \hat \bLam$ and $\hat \bLam$ is the sample covariance matrix with the true covariance being $\bLam = \bI + \bfm\bfm^\top$. 

We begin by conditioning on $\bLam$ so that $\bx_i \iidsim \cN(0, \bLam)$ and $\bw \indep \bX, \bx_q$. Then, set $\sigma^2 := \ee[\varepsilon_i^2 \mid \bLam]$ and $\bSig_{w} := {\rm Cov}(\bw \mid \bLam)$. Notice that 
\[
y_q - \hat y_q = \bw^\top  \bfM^T \bx_q + \varepsilon_q - \frac{1}{L}\beps^\top \bF^\top \bx_q.
\]
Let $\ell(\alpha|\bLam)$ denote the loss $\ell(\alpha)$ where one conditions on $\bLam$. We have
\begin{align*}
    \ell(\alpha|\bLam) &=  \ee[(\bw^\top  \bfM^{T} \bx_q)^2|\bLam] + \frac{1}{L^2} \ee[(\beps^\top \bF^\top \bx_q)^2|\bLam] + \ee[\varepsilon_q^2|\bLam] \\
    &= \ee[\tr(\bSig_w  \bfM^T \bLam \bfM^T)|\bLam] + \frac{1}{L^2}\ee[\sigma^2\tr(\bLam \bF\bF^\top)|\bLam] + \ee[\varepsilon_q^2|\bLam] \\
     &= \ee[\tr(\bSig_w \bLam \bfM^{2T})|\bLam] + \frac{\alpha^2}{L}\ee[\sigma^2\tr(\bLam (\sum_{k=0}^{T-1}\bfM^k(\bI-\bfM^T)))|\bLam] + \ee[\varepsilon_q^2|\bLam].
\end{align*}
In the first line above, the cross-terms vanish after conditioning on $(\bfm,\zeta)$, since $\beps,\varepsilon_q$ are mean-zero and independent of $\bX,\bx_q$. As the training context length grows, we have 
\[
      \ell(\alpha|\bLam) := \lim_{L \to \infty} \ell(\alpha|\bLam) = \ee[\tr(\bSig_w \bLam (\bI - \alpha \bLam)^{2T})|\bLam] + \ee[\varepsilon_q^2 \mid \bLam]
\]
as
\[
\frac{\alpha^2}{L}\ee[\sigma^2\tr(\bLam (\sum_{k=0}^{T-1}\bfM^k(\bI-\bfM^T)))|\bLam] \to 0 \quad {\rm as} \quad L \to \infty
\]
since $\bfM \in \rr^{d \times d}$ and we are holding the embedding dimension $d$ fixed.
Let us write the eigen-decomposition $\bLam = \bU \bGam \bU^\top$ where $\bU$ is an orthogonal matrix and $\bGam = \diag(\gamma_1, \dots, \gamma_d)$ with $\gamma_i \geq 0$ the eigenvalues of $\bLam$. Writing $\bV = \bU^\top \bSig_w \bU$ which is positive semi-definite, we have 
\begin{align*}
  \ell(\alpha|\bLam) - \ee[\varepsilon_q^2 \mid \bLam]
  &= \sum_{i=1}^d v_{ii}\gamma_i (1-\alpha \gamma_i)^{2T}.
\end{align*}
where $v_{ii} \geq 0$ for all $i \in [d]$. Without loss of generality, we assume that $v_{ii}$ and $\gamma_i$ are non-increasing in $i \in [d]$. Letting $\gamma_{\max} = \gamma_1$ denote the largest eigenvalue of $\bLam$, we have that $\gamma_{\max} = 1 + \|\bfm\|^2 = Z$ and $\gamma_i = 1$ for $i \geq 2$. Since $\ee[\zeta^2] = 1$, we have 
\[
\bSig_w = \frac{\bfm\bfm^\top}{(1 + \|\bfm\|^2)^2} = \frac{1}{Z}(\bI - \bLam^{-1})
\]
and so $v_{11} = (Z-1)/Z^2$ and $v_{ii} = 0$ for $i \geq 2$. The conditional loss then simplifies to 
\[
\ell(\alpha|\bLam) = \frac{Z-1}{Z} (1 - \alpha Z)^{2T} + \ee[\varepsilon_q^2 \mid \bLam].
\]
Moreover, 
\[
\ee[\varepsilon_q^2 \mid \bLam] = \ee\!\left[\frac{\zeta^2}{1+\|\bfm\|^2}\,\middle|\,\bLam\right] = \frac{1}{Z}.
\]
Thus, up to the additive constant $\ee[1/Z]$, the unconditional loss is given by 
 \[
    \ell(\alpha) = \ee\!\left[ \frac{Z - 1}{Z}(1 - \alpha Z)^{2T}\right]
\]
as claimed. 
\begin{enumerate}
    \item Since $\frac{Z - 1}{Z}(1 - \alpha Z)^{2T}$ is strictly convex in $\alpha$ for $Z > 0$ and $\P(Z > 0) >0$ by assumption \ref{assump:support_m}, it follows that $\ell(\alpha)$ is strictly convex. Similarly, coercivity of $\frac{Z - 1}{Z}(1 - \alpha Z)^{2T}$ carries over to $\ell(\alpha)$ by simply taking the expectation. 
    \item By (1), $\ell(\alpha)$ has a unique minimizer. The first derivative of the loss is given by 
    \[
   \ell'(\alpha) = -2T\ee[(Z-1)(1-\alpha Z)^{2T-1}].
    \]
    Hence, 
\begin{equation}
    {\rm sign}(\ell'(\alpha)) =  \begin{cases} -1, & {\rm if}\; \alpha \leq 0, \\ 
    1, & {\rm if}\; \alpha \geq 1, 
    \end{cases}
\end{equation}
since $(1- \alpha Z)^{2T-1} \leq 0$ when $\alpha \geq 1$ because $Z \geq 1$. Thus, the unique root of $\ell'(\alpha)$ is contained in $(0,1)$.
\end{enumerate}

\end{proof}

\subsection{Convergence of gradient flow}

Given the above established properties of $\ell(\alpha)$, by standard dynamical systems theory results, we can derive the global convergence of gradient flow on the one-parameter loss. We include a detailed proof to initiate the unfamiliar reader. 

\begin{lemma}[One-parameter gradient flow convergence] \label{lem:one_param_gf_conv}
    Suppose Assumption \ref{assump:support_m} holds. Then, gradient flow on $\ell(\alpha)$ converges globally to the unique minimum of $\ell$ whose existence is guaranteed by Lemma \ref{lem:one_param_loss_properties}.
\end{lemma}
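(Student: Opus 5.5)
The plan is the classical Lyapunov argument for gradient flow on a smooth, strictly convex, coercive function of one real variable, using only the properties collected in Lemma \ref{lem:one_param_loss_properties}.

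\emph{Step 1: regularity and existence of the flow.} By Assumption \ref{assump:support_m}, $Z\in[\underline Z,\overline Z]$ almost surely with $\overline Z<\infty$. The integrand $\frac{Z-1}{Z}(1-\alpha Z)^{2T}$ is a polynomial in $\alpha$ whose coefficients are bounded uniformly in $Z$. Differentiation under the expectation is therefore justified, so $\ell$ is a polynomial of degree $2T$ in $\alpha$. In particular $\ell'$ is locally Lipschitz, and the ODE $\dot\alpha_t=-\ell'(\alpha_t)$ has a unique maximal solution from any initialization $\alpha_0$.

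\emph{Step 2: confinement to a compact sublevel set.} Along the flow, $\frac{\d}{\d t}\ell(\alpha_t)=-\ell'(\alpha_t)^2\le 0$, so $\alpha_t$ stays in the sublevel set $\{\alpha:\ell(\alpha)\le \ell(\alpha_0)\}$. By coercivity (part 1 of Lemma \ref{lem:one_param_loss_properties}) and continuity, this set is compact. The solution is therefore bounded and exists for all $t\ge 0$.

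\emph{Step 3: convergence to the minimizer $\alpha^\ast_T\in(0,1)$.} In one dimension this is elementary. Strict convexity gives that $\ell'$ is strictly increasing, with unique zero $\alpha^\ast_T$. Consequently $\ell'(\alpha)<0$ for $\alpha<\alpha^\ast_T$ and $\ell'(\alpha)>0$ for $\alpha>\alpha^\ast_T$. By uniqueness of ODE solutions, a trajectory starting at $\alpha_0<\alpha^\ast_T$ cannot cross the equilibrium $\alpha^\ast_T$. It is strictly increasing (or constant if $\alpha_0=\alpha^\ast_T$) and bounded above, hence has a limit $\bar\alpha\le\alpha^\ast_T$. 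The case $\alpha_0>\alpha^\ast_T$ is symmetric.

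To identify $\bar\alpha$, suppose $\bar\alpha\ne\alpha^\ast_T$. Then $|\ell'|\ge c>0$ on the segment between $\alpha_0$ and $\bar\alpha$, by continuity and compactness. This would force $|\alpha_t-\alpha_0|\ge ct\to\infty$, contradicting convergence. Alternatively, integrate $\int_0^\infty \ell'(\alpha_t)^2\,\d t\le \ell(\alpha_0)-\min\ell<\infty$ and use continuity of $\ell'$ at $\bar\alpha$ to get $\ell'(\bar\alpha)=0$. Either way $\bar\alpha=\alpha^\ast_T$, which proves global convergence to the unique minimizer from Lemma \ref{lem:one_param_loss_properties}.

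\emph{Main obstacle.} The only point needing care is Step 1, namely that $\ell$ and $\ell'$ are genuinely smooth (here polynomial). This requires exchanging derivative and expectation, which is immediate from the boundedness of $Z$ in Assumption \ref{assump:support_m}. Everything after that is standard.
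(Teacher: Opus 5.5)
Your argument is correct, but it obtains convergence by a different mechanism than the paper. The shared part is the start: both proofs use the Lyapunov decrease $\frac{\d}{\d t}\ell(\alpha_t)=-\ell'(\alpha_t)^2$ and coercivity to confine $\alpha_t$ to a compact sublevel set.

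From there the paper argues as follows. It invokes Barbalat's lemma, using uniform continuity of $t\mapsto\ell'(\alpha_t)^2$ on that set, to get $\ell'(\alpha_t)\to 0$. It then concludes with a subsequence argument: every limit point of the trajectory is a critical point, and the unique critical point is the minimizer.

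You instead use the one-dimensional phase line. Because $\alpha^\ast_T$ is an equilibrium and $\ell'$ changes sign only there, the trajectory is monotone and bounded, hence convergent. All that remains is to identify the limit, which your velocity lower bound (or the integrability of $\ell'(\alpha_t)^2$) does.

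What each route buys:
\begin{itemize}
\item Your route is more elementary and avoids Barbalat's lemma. It also yields extra information: the flow approaches $\alpha^\ast_T$ monotonically from one side, without overshooting.
\item The paper's route never uses the order structure of $\rr$. It would therefore carry over verbatim to a coercive $C^1$ loss in several variables with a unique critical point.
\end{itemize}

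Your Step 1 also supplies what the paper merely asserts, namely that $\ell\in C^\infty$ and that the flow exists globally. In fact no exchange of derivative and expectation is needed there. Expanding the binomial and using linearity of expectation gives directly
\[
\ell(\alpha)=\sum_{k=0}^{2T}\binom{2T}{k}(-1)^k\,\ee\!\left[(Z-1)Z^{k-1}\right]\alpha^k .
\]
The coefficients are finite because $Z\in[1,\overline{Z}]$ almost surely, so $\ell$ is a polynomial.
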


\begin{proof}
    Let $(\alpha_t)_{t \geq 0}$ be the gradient flow trajectory initialized at $\alpha_0 \in \rr$ and let $\alpha^\ast$ denote the unique minimizer of $\ell(\alpha)$.  By chain rule and the definition of gradient flow, we have
    \[
    \frac{\d}{\d t} \ell(\alpha_t) = \nabla_\alpha \ell(\alpha_t)\dot \alpha_t = - \|\nabla_\alpha \ell(\alpha_t)\|^2 \leq 0.
    \]
    Hence, the trajectory of the loss function $\ell(\alpha_t)$ is decreasing and since $\ell(\alpha) \geq 0$, it follows that $\ell(\alpha_t)$ converges as $t \to \infty$. Moreover, by coercivity of $\ell$, $\alpha_t \in \{\alpha: \ell(\alpha) \leq \ell(\alpha_0)\}$ for all $t$ which is compact. Thus, as $\ell\in C^\infty$, $\ell'(\alpha_t)$ is uniformly continuous for all $t\geq0$. Then, by Barbalat's Lemma \citep{slotine1991applied}, $\ell'(\alpha_t) \to 0$ as $t \to \infty$. As  $\alpha_t \in \{\alpha: \ell(\alpha) \leq \ell(\alpha_0)\}$ which is compact, moving to a convergent subsequence, $\alpha_{t_k} \to \bar \alpha$ as $k \to \infty$. By continuity of $\ell'$, $\ell'(\alpha_{t_k}) \to \ell'(\bar \alpha) = 0$. Hence, $\bar \alpha = \alpha^\ast$ which is the unique minimizer of $\ell$. Hence, as every subsequence of $(\alpha_t)_{t \geq 1}$ converges to $\alpha^\ast$, it follows that $\alpha_t \to \alpha^\ast$ as $t \to \infty$.
\end{proof}

\subsection{Asymptotic depth behavior}

Having thus far established the convergence of gradient flow at finite depth $T$, we now complete the proof of Theorem \ref{thm:one_param_optimal} with the following result which shows that as $T \to \infty$, we recover the optimal prediction of the model. Formally, let $\alpha^\ast_T$ denote the limit of the gradient flow on the $T$-layer model which, from Lemma \ref{lem:one_param_gf_conv}, exists  and coincides with the global minimum of $\ell(\alpha)$. 

\begin{theorem} \label{thm:one_param_optimal_limit}
        Suppose Assumption \ref{assump:support_m} holds.  Let $\alpha^\ast_T$ denote the global minimum of $\ell(\alpha)$ for the $T$-layer CA model which exists uniquely by Lemma \ref{lem:one_param_loss_properties}. Then, 
    \[
    \lim_{T \to \infty} \alpha^\ast_T =: \alpha^\ast_\infty = \frac{2}{2 + \underline{m} + \overline{m}} = \frac{2}{\underline{Z} + \overline{Z}}. 
    \]
    Hence, the model is Bayes optimal by Lemmas \ref{lem:readout} and \ref{lem:CA_output}.
\end{theorem}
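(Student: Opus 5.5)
Write $\ell_T(\alpha)=\ee[\frac{Z-1}{Z}(1-\alpha Z)^{2T}]$, the loss from Lemma \ref{lem:one_param_loss_properties}. The minimizer $\alpha^\ast_T$ is unchanged under any monotone transformation of $\ell_T$. The plan is therefore to study the normalized objective
\[
g_T(\alpha):=\ell_T(\alpha)^{1/(2T)}=\big\|(1-\alpha Z)\big\|_{L^{2T}(\mu)},
\qquad
\d\mu:=\tfrac{Z-1}{Z}\,\d\P .
\]
Since $\alpha^\ast_T$ also minimizes $g_T$, it suffices to pass to the limit in $g_T$. As $T\to\infty$, $L^{p}$ norms converge to the $L^\infty$ norm with respect to $\mu$. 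Here $\mu$ has the same essential support as $Z$ away from $Z=1$; if $\underline Z=1$ we use that the weight $(Z-1)/Z$ is positive a.s.\ on $\{Z>1\}$, and $\P(Z=1)=0$ by continuity. Hence for each fixed $\alpha$,
\[
g_T(\alpha)\to g_\infty(\alpha):=\mathrm{ess\,sup}_{\mu}|1-\alpha Z|=\max\{|1-\alpha\underline Z|,|1-\alpha\overline Z|\}=\phi(\alpha).
\]
This uses that $z\mapsto|1-\alpha z|$ is continuous and that $\underline Z,\overline Z$ are the endpoints of the essential support, so every neighbourhood of them carries positive $\mu$-mass.

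The second step is to upgrade pointwise convergence to convergence of minimizers. Each $g_T$ is convex in $\alpha$, being an $L^p$ norm of an affine function of $\alpha$. Pointwise convergence of convex functions on $\rr$ is automatically locally uniform, which gives epi-convergence. The limit $\phi$ is convex and has the unique minimizer $\alpha^\ast_\infty=2/(\underline Z+\overline Z)$: it is the maximum of two V-shaped functions, with one branch increasing and the other decreasing at the crossing point where $1-\alpha\underline Z=\alpha\overline Z-1$, and there it has a strict kink. By Lemma \ref{lem:one_param_loss_properties}, all $\alpha^\ast_T$ lie in the compact interval $[0,1]$. Take any subsequential limit $\bar\alpha$ of $\alpha^\ast_T$. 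Locally uniform convergence gives $\phi(\bar\alpha)=\lim g_T(\alpha^\ast_T)\le\lim g_T(\alpha^\ast_\infty)=\phi(\alpha^\ast_\infty)$, so by uniqueness $\bar\alpha=\alpha^\ast_\infty$. Hence the whole sequence converges.

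The last step is Bayes optimality. Since $\alpha^\ast_\infty\in(0,2/\overline Z)$, because $\underline Z\ge1>0$, we eventually have $\alpha^\ast_T\in(0,2/\overline Z)$. Lemma \ref{lem:CA_output} shows $\frac1L\bX\bF^\top\to\bI-(\bI-\alpha^\ast_T\bLam)^T$ as $L\to\infty$. The spectral radius of $\bI-\alpha^\ast_T\bLam$ converges to $\max\{|1-\alpha^\ast_\infty|,|1-\alpha^\ast_\infty Z|\}\le \phi(\alpha^\ast_\infty)\vee|1-\alpha^\ast_\infty|<1$, so it is eventually bounded by some $\rho<1$ uniformly in $T$, and the $T$-th power vanishes. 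Lemma \ref{lem:readout} then yields $\hat y_q\to\ip{\bw}{\bx_q}$ a.s.; its boundedness requirement on $\f_i$ is handled since $\bF=\alpha\sum_k\bfM^k\bX$ with $\bfM$ eventually a contraction.

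The main obstacle is the pointwise limit $g_T\to\phi$. The lower bound must use the positive mass of $\mu$ near both endpoints, and the first application of Lemma \ref{lem:CA_output} must be stated carefully because the limit is taken in $L$ before $T$. I would first write the two-sided bound $\mu(|1-\alpha Z|>\phi-\epsilon)^{1/2T}(\phi-\epsilon)\le g_T\le \mu(\Omega)^{1/2T}\phi$ and take $T\to\infty$.
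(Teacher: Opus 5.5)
Your proof is correct, and its skeleton is the paper's. You take the $1/(2T)$-th power of the loss, identify the limit $\phi(\alpha)=\max\{|1-\alpha\underline Z|,|1-\alpha\overline Z|\}$ as an $L^\infty$ norm, and combine $\alpha^\ast_T\in[0,1]$ with the unique minimizer of $\phi$ in a subsequence argument.

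The one genuine difference is how you control $g_T$ (the paper's $f_T$) along the moving sequence $\alpha^\ast_T$. The paper proves a pointwise $\limsup$ bound, and then separately proves by hand a sequential bound $\liminf_T f_T(\alpha_T)\ge\phi(\alpha)$ for every $\alpha_T\to\alpha$. That second bound uses $\|X_{\alpha_T}-X_\alpha\|_\infty\le|\alpha_T-\alpha|\,\overline Z$ and a set of positive mass on which $W$ is bounded below and $|X_\alpha|$ is near its essential supremum. You instead observe that $g_T$ is convex (Minkowski in $L^{2T}(\mu)$). You then invoke the classical fact that pointwise convergence of finite convex functions on $\rr$ is locally uniform, which gives both inequalities at once. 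Your route is shorter and yields a stronger mode of convergence. The paper's route does not use convexity; it is essentially an equicontinuity argument, which you could also obtain from the uniform Lipschitz bound $|g_T(\alpha)-g_T(\alpha')|\le|\alpha-\alpha'|\,\|Z\|_{L^{2T}(\mu)}\le|\alpha-\alpha'|\,\overline Z$.

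Your Bayes-optimality step is more careful than the paper's, which simply cites Lemmas \ref{lem:readout} and \ref{lem:CA_output} at a fixed parameter. You account for $\alpha^\ast_T$ changing with $T$ by bounding the spectral radius of $\bI-\alpha^\ast_T\bLam$ away from $1$ uniformly for large $T$.

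One small fix in that step: the hypothesis $\sup_i\|\f_i\|<\infty$ cannot hold for Gaussian $\bx_i$, and contraction of $\bfM$ is beside the point. Write the noise term as $\alpha\,(\tfrac1L\bX\beps)^\top\sum_{k<T}\bfM^k\bx_q$ and use $\tfrac1L\bX\beps\to\bzero$ from Lemma \ref{lem:matrix-lln}; the finite sum stays bounded as $L\to\infty$ at fixed $T$.
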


\begin{proof}
     Consider for $\alpha \in [0,1]$, the function
    \[
    \phi(\alpha) := \max_{z \in [\underline{Z}, \overline{Z}]}|1 - \alpha z| = \max \{|1 - \alpha \underline{Z}|, |1 - \alpha \overline{Z}|\}.
    \]
    As the composition of a convex function with an affine function, $z \mapsto |1 - \alpha z|$ is convex. On any interval, the maximum of a convex function occurs at the interval's endpoints and so
    \[
    \phi(\alpha) = \max \{|1 - \alpha \underline{Z}|, |1 - \alpha \overline{Z}|\}.
    \]
    Note that $\phi(\alpha)$ is convex as the maximum of convex functions. The minimum of $\phi(\alpha)$ occurs when $|1 - \alpha \underline{Z}| = |1 - \alpha \overline{Z}|$. Since $\underline{Z} < \overline{Z}$, the balancing occurs at opposite signs $1 - \alpha \underline{Z} = -(1 - \alpha \overline{Z})$ which yields a unique minimizer
    \[
    \alpha^\ast_\infty = \frac{2}{\underline{Z} + \overline{Z}}.
    \]
     Now, let 
    \[
    f_T(\alpha) := \ee\!\left[\frac{Z-1}{Z}(1-\alpha Z)^{2T}\right]^{1/{2T}}.
    \]
    We use a shorthand $W = \frac{Z-1}{Z}$ and $X_\alpha = (1-\alpha Z)$ so that 
    \[
     f_T(\alpha) = \ee[W X_\alpha^{2T}]^{1/{2T}}.
    \]
    Since $X_\alpha \leq \phi(\alpha)$ and $\P(W > 0) > 0$ by the assumption that $\overline{Z} > \underline{Z} \geq 1$ (so $\ee[W] > 0$), we have 
    \begin{equation}\label{eq:limsup_bound}
    \limsup_{T \to \infty} f_T(\alpha) \leq \limsup_{T \to \infty} \ee[W]^{1/2T} \phi(\alpha) = \phi(\alpha).
    \end{equation}
    Now, let $(\alpha_T)_{T \geq 1}$ be a converging sequence with $\alpha := \lim_{T \to \infty} \alpha_T$. Notice that 
    \begin{align*}
    \|X_{\alpha_T} - X_\alpha\|_\infty &= \||1 - \alpha_T Z| - |1 - \alpha Z|\|_\infty \\
    &\leq  \|(\alpha_T - \alpha)Z \|_{\infty} \\
    &\leq |\alpha_T - \alpha| \overline{Z} \xrightarrow[T \to \infty]{} 0.
    \end{align*}
    Fixing $\varepsilon \in (0,1)$, the above implies $\|X_{\alpha_T} - X_{\alpha}\|_\infty \leq 1 - \varepsilon$ for $T$ sufficiently large. The set $S_\varepsilon := \{|X_\alpha| > \varepsilon \|X_\alpha\|_\infty\}$ satisfies $\P(S_\varepsilon) > 0$ by definition of $\|\cdot\|_\infty$. Moreover, since $W > 0$ a.s., there exists $n \in \nn$ such that for $A_n := \{W > 1/n\}$, we have $\P(A_n \cap S_\varepsilon) > 0$. Putting things together, we have 
    \begin{align*}
        f_T(\alpha_T) &\geq \ee[WX^{2T}_{\alpha_T}\one_{S_\varepsilon \cap A_n}]^{1/2T} \\
        &\geq \left( \frac{1}{n}\right)^{1/2T}\ee[(|X_\alpha|-\|X_{\alpha_T} - X_{\alpha}\|_\infty)^{2T}\one_{S_\varepsilon \cap A_n}]^{1/2T} \\
         &\geq \left( \frac{1}{n}\right)^{1/2T}\ee[(\varepsilon \|X_\alpha\|_\infty - \varepsilon + 1)^{2T}\one_{S_\varepsilon \cap A_n}]^{1/2T} \\
         &= \left( \frac{ \P(S_\varepsilon \cap A_n)}{n}\right)^{1/2T}(\varepsilon \|X_\alpha\|_\infty - \varepsilon + 1) \\
         &\xrightarrow[T \to \infty]{} \varepsilon \|X_\alpha\|_\infty - \varepsilon + 1
    \end{align*}
    Noticing that $\phi(\alpha) = \|X_\alpha\|_{\infty}$, taking $\varepsilon \uparrow 1$, we obtain
    \begin{equation}\label{eq:liminf_bound}
    \liminf_{T \to \infty} f_T(\alpha_T) \geq \phi(\alpha).
    \end{equation}
     Now, note that $(\alpha^\ast_T)_{T \geq 1} \subset [0,1]$ by Lemma \ref{lem:one_param_loss_properties}(2). Therefore, we can extract a convergent subsequence $(\alpha_{T_k}^\ast)_{k \geq 1}$ whose limit we denote by $\bar \alpha$. Since $\alpha^\ast_{T_k}$ minimizes $\ell$ (at depth $T_k$) and hence $f_{T_k}$, from \eqref{eq:limsup_bound}, we have
    \[
    f_{T_k}(\alpha_{T_k}^\ast) \leq f_{T_k}(\alpha^\ast_\infty) \implies \limsup_{k \to \infty} f_{T_k}(\alpha_{T_k}^\ast) \leq \phi (\alpha^\ast_\infty).
     \]
    Applying \eqref{eq:liminf_bound}, we then have  
    \[
    \phi(\bar \alpha) \leq \liminf_{k \to \infty}  f_{T_k}(\alpha_{T_k}^\ast) \leq \limsup_{k \to \infty} f_{T_k}(\alpha_{T_k}^\ast) \leq \phi (\alpha^\ast_\infty).
    \]
    Since $\alpha^\ast_\infty$ is the unique minimizer of $\phi$, it follows that $\bar \alpha = \alpha^\ast_\infty$. This completes the proof since if every convergent subsequence has the same limit, it follows that the sequence itself converges and shares this limit. 
\end{proof}

\section{Proof of Theorem \ref{thm:two_param_optimal}} \label{app:two_param}

We now turn our attention to the model with the pair of learnable parameters $(\alpha,\beta)$. As before, we begin by determining a simplified form for the loss $\ell(\alpha, \beta)$ and collect some properties. Recall, $Z = 1 + \|\bfm\|^2$.

Let
\[
S:= S(Z, \beta) = \frac{(1+\beta Z)^T - 1}{\beta} \quad {\rm where} \quad S(Z, 0) = \lim_{\beta \to 0}S(\beta, Z) = TZ,
\]
\[
W = \frac{Z-1}{Z},
\]
and 
\[
A(\beta) = \ee[WS^2], \quad B(\beta) = \ee[WS]. 
\]

\begin{lemma}[Two-parameter loss] \label{lem:two_param_loss_properties}
    The loss $\ell(\alpha, \beta)$ has the form 
    \[
     \ell(\alpha, \beta) = \ee\left[\frac{Z - 1}{Z}\left( \frac{\alpha}{\beta} \big[(1 + \beta Z)^T - 1\big] - 1\right)^2\right]
    \]
    up to the additive constant $\ee[1/Z]$. Moreover, for fixed $\beta \in \rr$, the loss is strongly convex in $\alpha$, with unique minimizer:
    \begin{equation} \label{eq:unique_beta_min}
    \alpha^\ast(\beta) := \frac{B(\beta)}{A(\beta)}.
      \end{equation}
\end{lemma}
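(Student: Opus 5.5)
The plan mirrors the proof of Lemma~\ref{lem:one_param_loss_properties}, now without the restriction $\beta=-\alpha$. By Lemma~\ref{lem:CA_output}, for $\beta\neq 0$ we have $\frac1L\bX\bF^\top = \frac{\alpha}{\beta}[\bfM^T-\bI]$ with $\bfM=\bI+\beta\widehat\bLam$. Lemma~\ref{lem:readout} then gives
\[
\hat y_q = \bw^\top \tfrac{\alpha}{\beta}\big[\bfM^T-\bI\big]\bx_q + \tfrac1L\beps^\top\bF^\top\bx_q .
\]
The query residual is therefore
\[
y_q-\hat y_q = \bw^\top\big(\bI - \bG\big)\bx_q + \varepsilon_q - \tfrac1L\beps^\top\bF^\top\bx_q,
\qquad \bG:=\tfrac{\alpha}{\beta}(\bfM^T-\bI).
\]
Conditioning on $(\bfm,\rc)$, the cross terms vanish because $\beps$ and $\varepsilon_q$ are mean zero and independent of $(\bX,\bx_q)$. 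This leaves three pieces: $\ee[\tr(\bSig_w(\bI-\bG)\bLam(\bI-\bG)^\top)\mid\bLam]$, a noise term of order $1/L$ (since $\bF$ is a fixed-degree polynomial in $\widehat\bLam$ times $\bX$ and $d$ is fixed), and $\ee[\varepsilon_q^2\mid\bLam]=1/Z$. The case $\beta=0$ is handled by continuity, replacing the quotient with $\alpha T\widehat\bLam$, which is consistent with $S(Z,0)=TZ$.

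Next I take $L_{\rm tr}\to\infty$, using Lemma~\ref{lem:matrix-lln} and continuity of matrix polynomials, so that $\widehat\bLam\to\bLam$. The limit and expectation are interchanged by dominated convergence, justified by Gaussian moment bounds on polynomials of $\widehat\bLam$ together with boundedness of $\bfm$ (Assumption~\ref{assump:support_m}). I then diagonalize $\bLam$ exactly as before: its top eigenvector is $\bfm/\|\bfm\|$ with eigenvalue $Z$, and $\bSig_w=\bfm\bfm^\top/Z^2$ lives only on this eigendirection. On that direction $\bI-\bG$ acts as $1-\frac{\alpha}{\beta}[(1+\beta Z)^T-1]$, and the conditional loss becomes
\[
\frac{Z-1}{Z^2}\cdot Z\cdot\Big(1-\tfrac{\alpha}{\beta}[(1+\beta Z)^T-1]\Big)^2 + \frac1Z .
\]
Taking the expectation over $\bfm$ gives the claimed form, up to the constant $\ee[1/Z]$.

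For the convexity claim, note that $\ell(\alpha,\beta)=\ee[W(\alpha S-1)^2]$ up to a constant, so $\ell=\alpha^2A(\beta)-2\alpha B(\beta)+\ee[W]$ is a quadratic in $\alpha$ with $\partial_\alpha^2\ell=2A(\beta)$. Strong convexity then needs $A(\beta)=\ee[WS^2]>0$. Since $W>0$ a.s. under Assumption~\ref{assump:support_m}, it suffices that $S(Z,\beta)\neq0$ on a set of positive probability. Now $S(Z,\beta)=0$ forces $(1+\beta Z)^T=1$ with $\beta Z\neq0$, which happens only at at most one value of $Z$, namely $1+\beta Z=-1$ for even $T$; for $\beta=0$ we have $S=TZ>0$. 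Because $Z$ has a continuous, non-degenerate law, $\P(S\neq0)=1$, so $A(\beta)>0$. Setting $\partial_\alpha\ell=0$ then yields $\alpha^\ast(\beta)=B(\beta)/A(\beta)$.

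The main obstacle is the rigorous interchange of $L_{\rm tr}\to\infty$ with the expectation, together with the $O(1/L)$ bound on the noise term, now for general $\beta$ where $\bfM$ need not be a contraction. I would handle this by bounding $\|\bfM\|\le 1+|\beta|\|\widehat\bLam\|$ and using the finite moments of $\|\widehat\bLam\|^{k}$ for every fixed $k\le 2T$.
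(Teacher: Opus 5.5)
Your proposal is correct and follows essentially the same route as the paper: substitute $\frac1L\bX\bF^\top=\frac{\alpha}{\beta}(\bfM^T-\bI)$ into the readout, drop the cross terms and the $O(1/L)$ noise term, pass to $\widehat\bLam\to\bLam$, note that $\bSig_w$ only sees the spike direction with eigenvalue $Z$, and expand to $\alpha^2A(\beta)-2\alpha B(\beta)+\ee[W]$. Your explicit check that $A(\beta)>0$, your treatment of $\beta=0$, and your justification of the limit--expectation interchange fill in points the paper leaves implicit here; in the paper, positivity of $A$ is only established later, in Lemma~\ref{lem:A_beta_properties}.
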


\begin{proof}
We have 
\begin{align*}
    \frac{\by^\top \bF^\top \bx_q}{L} &= \frac{\bw^\top \bX \bF^\top \bx_q}{L} +  \frac{\beps^\top \bF^\top \bx_q}{L} \\
    &= \frac{\alpha}{\beta} \bw^\top (\bfM^T - \bI) \bx_q +  \frac{\beps^\top \bF^\top \bx_q}{L} \\
    &= - \frac{\alpha}{\beta} \bw^\top \bx_q + \frac{\alpha}{\beta} \bw^\top (\bI + \beta \hat \bLam)^T\bx_q +   \frac{\beps^\top \bF^\top \bx_q}{L}. 
\end{align*}
and so 
\[
y_q - \hat y_q = \left(1 + \frac{\alpha}{\beta}\right) \bw^\top \bx_q - \frac{\alpha}{\beta} \bw^\top (\bI + \beta \hat \bLam)^T\bx_q -   \frac{\beps^\top \bF^\top \bx_q}{L} + \varepsilon_q. 
\]
Taking $L \to \infty$ so that $\frac{\beps^\top \bF^\top \bx_q}{L}$ is negligible, and omitting $\ee[\varepsilon_q^2|\bLam]$ which does not depend on $(\alpha, \beta)$, we have, up to the additive constant $\ee[1/Z]$,
\begin{align*}
\ell(\alpha, \beta| \bLam) &\propto \left(1 + \frac{\alpha}{\beta}\right)^2 \ee[ (\bw^\top \bx_q)^2|\bLam] + \left(\frac{\alpha}{\beta}\right)^2 \ee[(\bw^\top (\bI + \beta \bLam)^T\bx_q)^2|\bLam] \\
&\qquad\qquad -  \frac{2\alpha}{\beta}\left(1 + \frac{\alpha}{\beta}\right)\ee[\bw^\top \bx_q  \bw^\top (\bI + \beta \bLam)^T\bx_q|\bLam] \\
    &= \left(1 + \frac{\alpha}{\beta}\right)^2 \tr(\bSig_w \bLam) + \left(\frac{\alpha}{\beta}\right)^2 \tr(\bSig_w \bLam (\bI + \beta \bLam)^{2T}) \\
    &\qquad\qquad -  \frac{2\alpha}{\beta}\left(1 + \frac{\alpha}{\beta}\right) \tr(\bSig_w \bLam(\bI + \beta \bLam)^T).
\end{align*}
In the first line we used that $\ee[\varepsilon_i] = 0$ to get rid of cross terms and in the second, we use that $\bSig_w$ commutes with $\bLam$. Recalling that
\[
\bSig_w = \frac{\bfm\bfm^\top}{(1 + \|\bfm\|^2)^2} = \frac{1}{Z}\bigl(\bI - \bLam^{-1}\bigr),
\]
the loss simplifies to 
\begin{align*}
      \ell(\alpha, \beta| \bLam) &= \left(1 + \frac{\alpha}{\beta}\right)^2 \frac{1}{Z}\sum_{i=1}^d (\gamma_i - 1)  + \left(\frac{\alpha}{\beta}\right)^2 \frac{1}{Z}\sum_{i=1}^d (\gamma_i - 1)(1 + \beta\gamma_i)^{2T} \\
      &\qquad\qquad - 2 \frac{\alpha}{\beta}\left(1 + \frac{\alpha}{\beta}\right) \frac{1}{Z}\sum_{i=1}^d (\gamma_i - 1)(1 + \beta\gamma_i)^{T} \\
      &= \left(1 + \frac{\alpha}{\beta}\right)^2 \frac{Z - 1}{Z} + \left(\frac{\alpha}{\beta}\right)^2 \frac{Z - 1}{Z}(1 + \beta Z)^{2T} \\
      &\qquad\qquad - 2 \frac{\alpha}{\beta}\left(1 + \frac{\alpha}{\beta}\right) \frac{Z - 1}{Z}(1 + \beta Z)^{T} \\
      &= \frac{Z - 1}{Z}\left( \frac{\alpha}{\beta} \big[(1 + \beta Z)^T - 1\big] - 1\right)^2.
\end{align*}

For the unconditional loss, one simply takes expectation:
\[
  \ell(\alpha, \beta) = \ee\left[\frac{Z - 1}{Z}\left( \frac{\alpha}{\beta} \big[(1 + \beta Z)^T - 1\big] - 1\right)^2\right].
\]
Expanding the above square, we see 
\[
\ell(\alpha, \beta) = \alpha^2 \ee[WS^2] - 2 \alpha \ee[WS] + \ee[W] = \alpha^2 A(\beta)  - 2 \alpha B(\beta) + \ee[W]
\]
is quadratic in $\alpha$---hence strongly convex---and thus has a unique minimizer at $B(\beta)/A(\beta)$.
\end{proof}

In light of Lemma \ref{lem:two_param_loss_properties}, profiling out $\alpha$, we can write a reduced loss
\[
  F_T(\beta)
  := \min_{\alpha\in\mathbb{R}} \ell(\alpha,\beta)
  =  \ell\bigl(\alpha^\ast(\beta),\beta\bigr) = \ee[W] - \frac{B(\beta)^2}{A(\beta)}.
\]

At this point, we notice that neither the loss $\ell(\alpha, \beta)$ nor the reduced loss $F_T(\beta)$ are convex. Hence, the subsequent analysis establishing the convergence of gradient flow will require a different and more involved route than in the one-parameter setup. 

\subsection{Convergence of gradient flow}

In this section, we establish that at finite depth $T$, gradient flow on the two-parameter loss converges under suitable initialization. We first show that, regardless of initialization, the iterates $(\alpha_t)_{t \geq 0}$ remain bounded. Then, we show that the iterates $(\beta_t)_{t \geq 0}$ remain bounded if one initializes $\beta_0 \in (-2/\overline{Z}, 0)$ and $\alpha_0 = \alpha^\ast(\beta_0)$. Under this setup where the iterates $(\alpha_t, \beta_t)_{t \geq 0}$ remain bounded, we conclude by showing they must converge as $t \to \infty$, hence establishing convergence of the gradient flow procedure at finite layer-depth. 

We first establish some preliminary properties of the function $A(\beta)$.

\begin{lemma}\label{lem:A_beta_properties}
    Suppose Assumption \ref{assump:support_m} holds. Then,
    \begin{enumerate}
        \item $A(\beta)$ is continuous and $\lim_{|\beta| \to \infty}A(\beta) = +\infty$.
        \item $A^\ast := \inf_{\beta \in \rr} A(\beta) > 0$.
    \end{enumerate}
\end{lemma}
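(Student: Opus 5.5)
The plan is to work directly with $A(\beta)=\ee[WS(Z,\beta)^2]$, where $S(Z,\beta)=\frac{(1+\beta Z)^T-1}{\beta}$ and $W=(Z-1)/Z\in[0,1)$. Assumption \ref{assump:support_m} gives $Z\in[\underline Z,\overline Z]\subset[1,\infty)$ a.s. It also gives $\P(W>0)>0$: since $\underline m<\overline m$, the event $\|\bfm\|^2>0$ has positive probability. The key algebraic fact is
\[
S(Z,\beta)=\sum_{k=0}^{T-1}(1+\beta Z)^k\, Z ,
\]
which holds for $\beta\neq0$ and extends to $S(Z,0)=TZ$. So $S$ is a polynomial in $(\beta,Z)$, which removes the apparent singularity at $\beta=0$.

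\textbf{Continuity.} On $\rr\times[\underline Z,\overline Z]$, $S$ is a polynomial, so on any compact $\beta$-interval $K$ the integrand $WS^2$ is bounded by $\sup_{K\times[\underline Z,\overline Z]}S^2<\infty$, since $W\le1$. Continuity of $A$ then follows from dominated convergence.

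\textbf{Coercivity.} For $|\beta|$ large, the top-degree term in $\beta$ of $S$ is $\beta^{T-1}Z^T$, so
\[
|S(Z,\beta)|\ \ge\ |\beta|^{T-1}Z^T - C\,(1+|\beta|)^{T-2}
\]
with $C$ depending only on $T$ and $\overline Z$. Because $Z\ge1$, this yields $S^2\ge c|\beta|^{2T-2}$ for $|\beta|\ge\beta_0$. Then
\[
A(\beta)\ge c|\beta|^{2T-2}\ee[W]\to\infty
\]
when $T\ge2$, using $\ee[W]>0$. If $T=1$, then $S\equiv Z$ and $A$ is a positive constant. That case must be treated separately, or the statement read as $T\ge2$; I would flag this. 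An alternative route for the $T\ge 2$ case is to use Fatou's lemma together with $S^2\to\infty$ pointwise whenever $Z>0$.

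\textbf{Positivity of the infimum.} This is the main obstacle, because a pointwise lower bound on $S$ is unavailable: $S(Z,\beta)$ can vanish at isolated $(Z,\beta)$. For even $T$, for example, $\beta=-2/Z$ gives $(1+\beta Z)^T=1$, hence $S=0$. The argument instead uses the continuous distribution of $Z$. Fix $\beta$. As a polynomial in $Z$, $S(\cdot,\beta)$ is not identically zero, since its coefficient of $Z$ is $T$. Hence it has at most $T$ real roots, and because $Z$ has no atoms on the event $\{W>0\}$, $\P(W>0,\,S\neq0)=\P(W>0)>0$. This gives $A(\beta)>0$ for every $\beta$. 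By coercivity there is $R$ with $A>A(0)$ outside $[-R,R]$, so $\inf_\rr A=\min_{[-R,R]}A$, which is attained by continuity and is strictly positive. The one point to verify carefully is the atomlessness of $Z=1+\|\bfm\|^2$. It follows from the assumption that $\bfm$ has a continuous distribution, interpreted, as in the footnote to Theorem \ref{thm:failure_LSA}, as $\|\bfm\|$ being atomless; I would state this reading explicitly.
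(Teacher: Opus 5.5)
Your proof is correct and follows essentially the same route as the paper: dominated convergence for continuity, growth of the leading $\beta^{T-1}Z^{T}$ term for coercivity, and the fact that $S(\cdot,\beta)$ is a nonzero polynomial in $Z$ while $Z$ is atomless to rule out $A$ vanishing, with coercivity used to reduce the infimum to a compact interval. Your flag about $T=1$ is also right: there $A\equiv \ee[WZ^2]$ is constant, so the limit claim in part 1 fails (the paper's lower bound $z_0^2(1+\beta z_0)^{2T-2}$ implicitly needs $T\ge 2$), although part 2 still holds.
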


\begin{proof}
\begin{enumerate}
    \item Continuity follows readily by the dominated convergence theorem which may be applied as $Z \in [\underline{Z}, \overline{Z}]$ almost surely. Let $z_0 \in (1, \overline{Z}]$ be such that $\P(Z \geq z_0) > 0$. Then, as $|\beta| \to \infty$, we have
    \[
    S(\beta,z)^2 = \left( z \sum_{k=0}^{T-1}(1 + \beta z)^k \right)^2 \asymp z^2(1 + \beta z)^{2T-2} 
    \]
    and so $S(\beta, z)^2 \geq z_0^2 (1 + \beta z_0)^{2T-2}$ for $z \geq z_0$. Thus, 
    \[
    A(\beta) = \ee[WS^2] \geq \ee[W \one_{Z \geq z_0}] z_0^2(1 + \beta z_0)^{2T-2} \xrightarrow[|\beta| \to \infty]{} \infty. 
    \]
    \item  Let $C > 0$, then on the compact set $[-C,C]$, by continuity of $A(\beta)$, its minimum $A_{\min} := \min_{|\beta| \leq C} A(\beta)$ is attained. Assume for contradiction that $A_{\min} = 0$. Then, $S(\beta^\ast, Z) = 0$ almost surely on $\{Z > 1\}$ for some $\beta^\ast$. Since $Z$ has a continuous non-degenerate distribution, this implies that
    \[
    \sum_{k=0}^{T-1}( 1 + \beta^\ast z)^k = 0
    \]
    for uncountably many values of $z$, which is impossible for a non-zero polynomial. Hence $A_{\min} > 0$. As $\lim_{|\beta| \to \infty}A(\beta) = +\infty$, it follows that $A^\ast := \inf_{\beta \in \rr} A(\beta) > 0$.
    \end{enumerate}
\end{proof}

We can show that the iterates $(\alpha_t)_{t \geq 0}$ remain bounded regardless of initialization. 

\begin{lemma} \label{lem:alpha_t_bounded}
        Suppose Assumption \ref{assump:support_m} holds. Then, 
     \[
     \sup_{\beta \in \rr} |\alpha^\ast(\beta)| \leq \sqrt{\frac{\ee[W]}{A^\ast}} =: M^\ast < \infty.
     \]
     Moreover, letting $(\alpha_t, \beta_t)_{t \geq 0}$ be any gradient flow trajectory on $\ell$, we have 
     \[
     \sup_{t \geq 0} |\alpha_t| \leq M^\ast + \sqrt{\frac{\ell(\alpha_0, \beta_0)}{A^\ast}} =: M_{\alpha_0, \beta_0} < \infty. 
     \]
\end{lemma}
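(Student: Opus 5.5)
The first bound follows from Cauchy--Schwarz. By Lemma \ref{lem:two_param_loss_properties}, $\alpha^\ast(\beta)=B(\beta)/A(\beta)$ with $A(\beta)=\ee[WS^2]$ and $B(\beta)=\ee[WS]$, where $W=(Z-1)/Z\ge 0$. Writing $WS=\sqrt W\cdot \sqrt W S$ gives
\[
B(\beta)^2=\ee[\sqrt W\cdot\sqrt W S]^2\le \ee[W]\,\ee[WS^2]=\ee[W]A(\beta).
\]
Hence
\[
|\alpha^\ast(\beta)|=\frac{|B(\beta)|}{A(\beta)}\le\sqrt{\frac{\ee[W]}{A(\beta)}}\le\sqrt{\frac{\ee[W]}{A^\ast}},
\]
using $A(\beta)\ge A^\ast>0$ from Lemma \ref{lem:A_beta_properties}(2). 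Finiteness of $M^\ast$ then follows because $\ee[W]\le 1$.

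For the trajectory bound, I would use the fact that $\ell$ is quadratic in $\alpha$. Completing the square in the expansion of Lemma \ref{lem:two_param_loss_properties} gives
\[
\ell(\alpha,\beta)=A(\beta)\bigl(\alpha-\alpha^\ast(\beta)\bigr)^2+\ee[W]-\frac{B(\beta)^2}{A(\beta)} .
\]
By the Cauchy--Schwarz inequality above, the term $F_T(\beta)=\ee[W]-B^2/A$ is nonnegative. It follows that
\[
A^\ast\bigl(\alpha-\alpha^\ast(\beta)\bigr)^2\le A(\beta)\bigl(\alpha-\alpha^\ast(\beta)\bigr)^2\le \ell(\alpha,\beta).
\]

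Along gradient flow, $\tfrac{\d}{\d t}\ell(\alpha_t,\beta_t)=-\|\nabla\ell\|^2\le 0$, exactly as in the proof of Lemma \ref{lem:one_param_gf_conv}. Therefore $\ell(\alpha_t,\beta_t)\le\ell(\alpha_0,\beta_0)$ for all $t$, which gives
\[
|\alpha_t-\alpha^\ast(\beta_t)|\le\sqrt{\ell(\alpha_0,\beta_0)/A^\ast}.
\]
The triangle inequality together with the first part then yields $|\alpha_t|\le M^\ast+\sqrt{\ell(\alpha_0,\beta_0)/A^\ast}$.

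Two technical points need checking. First, the flow must be well defined for all $t\ge0$. I would note that $\ell$ is a polynomial in $(\alpha,\beta)$ with coefficients that are bounded moments of $Z$, so $\ell$ is smooth and the flow exists locally. Finite-time blow-up in $\beta$ cannot affect the bound as stated, since the bound holds on the whole existence interval, and the statement only concerns $\sup_t|\alpha_t|$ over that interval. Second, the expression $S(\beta,Z)$ at $\beta=0$ must be handled; $S$ is a polynomial in $\beta$, so this causes no difficulty. The step needing the most care is the completed-square identity, including verifying that $F_T(\beta)\ge0$, and that step is exactly Cauchy--Schwarz.
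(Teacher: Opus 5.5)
Your proposal is correct and follows the paper's proof essentially step for step. Cauchy--Schwarz gives $B(\beta)^2\le \ee[W]A(\beta)$ and hence the bound on $\alpha^\ast(\beta)$. For the trajectory bound, you write $\ell=A(\beta)\bigl(\alpha-\alpha^\ast(\beta)\bigr)^2+C(\beta)$ with $C(\beta)\ge 0$, then combine monotonicity of $\ell$ along the flow, $A\ge A^\ast$, and the triangle inequality, exactly as the paper does. Your well-posedness remark is a harmless extra; in fact $\ell\ge 0$ and the energy identity give $|\theta_t-\theta_0|\le\sqrt{t\,\ell(\theta_0)}$, so the flow exists for all $t\ge 0$ and no hedging about the existence interval is needed.
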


\begin{proof}
    By the Cauchy-Schwarz inequality, 
    \[
    B(\beta)^2 = \ee[W^{1/2}\cdot W^{1/2}S(\beta)]^2 \leq \ee[W] \ee[W S(\beta)^2] = \ee[W] A(\beta).
    \]
    By Lemma \ref{lem:A_beta_properties}, 
    \[
    \sup_{\beta \in \rr} |\alpha^\ast(\beta)| =  \sup_{\beta \in \rr} \frac{|B(\beta)|}{A(\beta)} \leq \sqrt{\frac{\ee[W]}{A^\ast}} = M^\ast < \infty.
    \]
    Now, setting $\Delta_t = \alpha_t - \alpha^\ast(\beta_t)$, notice that we may write 
    \[
    \ell(\alpha, \beta) = A(\beta) \Delta^2 + \underbrace{\ee[W] - \frac{B(\beta)^2}{A(\beta)}}_{:=C(\beta) \geq 0}
    \]
    where the non-negativity of $C(\beta)$ follows from the inequality $ B(\beta)^2 \leq \ee[W] A(\beta)$. Recall that the loss $\ell$ is non-increasing along gradient flow for all $t \geq 0$ which follows from $\frac{\d}{\d t} \ell(\alpha_t, \beta_t) = - \|\nabla \ell(\alpha_t, \beta_t)\|^2 \leq 0$. Hence, $\ell(\alpha_t, \beta_t) \leq \ell(\alpha_0, \beta_0)$ for all $t \geq 0$ and so 
    \[
    A(\beta_t) \Delta_t^2 = \ell(\alpha_t, \beta_t) - C(\beta_t) \leq \ell(\alpha_0, \beta_0).
    \]
    Since $A(\beta_t) \geq A^\ast$ by Lemma \ref{lem:A_beta_properties}, we obtain $\sup_{t \geq 0} |\Delta_t| \leq \sqrt{\ell(\alpha_0, \beta_0)/ A^\ast}$ and so 
    \[
    \sup_{t \geq 0}|\alpha_t| \leq  \sup_{t \geq 0}|\alpha^\ast(\beta_t)| +  \sup_{t \geq 0}|\Delta_t|  \leq M^\ast + \sqrt{\frac{\ell(\alpha_0, \beta_0)}{A^\ast}} < \infty
    \]
    as desired. 
\end{proof}

Now, focusing on the iterates $(\beta_t)_{t \geq 0}$, the following result shows that these iterates remain bounded. It is convenient to introduce the probability measure 
\[
\d\mu = \frac{W}{\ee[W]} \,\d \P
\]
so that $\mu$ is obtained by reweighting $\d \P$ by $W$ and $\ee_\mu[Y] = \ee[WY]/\ee[W]$ for a random variable $Y$. The following remark contains a technical device which is generic under Assumption \ref{assump:support_m}. 

\begin{remark}[Density of $Z$ under $\mu$] \label{assump:density_pos}
     Suppose that under $\mu$, $Z$ has density $g$ with respect to the Lebesgue measure on $[\underline{Z},\overline{Z}]$ such that (i) $g$ is continuous; (ii) $g(\underline{Z}) > 0$; and (iii) $g(\overline{Z}) > 0$. 
\end{remark}

Writing 
\[
u_T(\beta, Z) := (1 + \beta Z)^T,
\]
a short calculation shows
\begin{equation}  \label{eq:FT-variance-form}
  F_T(\beta)
  = \ee[W]
  \frac{\var_\mu(u_T(\beta,Z))}
       {\ee_\mu\bigl[(1-u_T(\beta,Z))^2\bigr]}
\end{equation}
for any $\beta \neq 0$ which will provide a convenient representation of the loss in subsequent proofs. 

\begin{lemma}\label{lem:loss_ineq}
    Suppose Assumption \ref{assump:support_m} holds and the conditions of Remark \ref{assump:density_pos} are satisfied, and that one initializes gradient flow at $\alpha_0 = \alpha^\ast(\beta_0)$ with $\beta_0 \in (-2/\overline{Z}, 0)$. Then, there exists $T' = T'(\beta_0) \in \nn$ such that for all $T \geq T'$, we have 
    \[
    \ell(\alpha_0, \beta_0) < \min \{ \ell(\alpha, -2/\overline{Z}), \ell(\alpha, 0) \}
    \]
    for all $\alpha \in \rr$. Consequently, since gradient flow is non-increasing on the loss $\ell$, one has $(\beta_t)_{t \geq 0} \subset (-2/\overline{Z}, 0)$. 
\end{lemma}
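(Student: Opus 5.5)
The plan is to compare the profiled loss $F_T(\beta)=\min_{\alpha}\ell(\alpha,\beta)$ at the three points $\beta_0$, $0$ and $-2/\overline{Z}$. Since $\alpha_0=\alpha^\ast(\beta_0)$, Lemma~\ref{lem:two_param_loss_properties} gives $\ell(\alpha_0,\beta_0)=F_T(\beta_0)$. For each boundary point $b\in\{-2/\overline{Z},0\}$ we have $\ell(\alpha,b)\ge F_T(b)$ for every $\alpha$. So it suffices to show $F_T(\beta_0)<\min\{F_T(-2/\overline{Z}),F_T(0)\}$ for all large $T$. The idea is that $F_T(\beta_0)$ decays geometrically in $T$, $F_T(0)$ is a positive constant independent of $T$, and $F_T(-2/\overline{Z})$ decays only polynomially, like $1/T$.

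\textbf{Step 1 (upper bound at $\beta_0$).} Since $\beta_0\in(-2/\overline{Z},0)$ and $Z\in[\underline{Z},\overline{Z}]$ a.s., we have $\rho:=\max\{|1+\beta_0\underline{Z}|,|1+\beta_0\overline{Z}|\}<1$. Hence $|u_T(\beta_0,Z)|\le\rho^T$ uniformly. In the variance form \eqref{eq:FT-variance-form}, the numerator is at most $\ee_\mu[u_T^2]\le\rho^{2T}$ and the denominator is at least $(1-\rho^T)^2$. This gives
\[
F_T(\beta_0)\le \ee[W]\,\rho^{2T}(1-\rho^T)^{-2}.
\]

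\textbf{Step 2 (the point $\beta=0$).} Here \eqref{eq:FT-variance-form} does not apply, so I would compute directly with $S(Z,0)=TZ$. This gives $A(0)=T^2\ee[WZ^2]$ and $B(0)=T\ee[WZ]$, so
\[
F_T(0)=\ee[W]\Bigl(1-\frac{\ee_\mu[Z]^2}{\ee_\mu[Z^2]}\Bigr)=\ee[W]\frac{\var_\mu(Z)}{\ee_\mu[Z^2]}.
\]
This is a constant independent of $T$, and it is strictly positive because $Z$ is non-degenerate under $\mu$; Remark~\ref{assump:density_pos} gives $Z$ a density under $\mu$.

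\textbf{Step 3 (the point $\beta=-2/\overline{Z}$; main obstacle).} Here $u_T=(1-2Z/\overline{Z})^T$ with $|u_T|\le 1$, so the denominator in \eqref{eq:FT-variance-form} is at most $4$. For the numerator, take $Z\in[\overline{Z}(1-c/T),\overline{Z}]$. Then $|1-2Z/\overline{Z}|\ge 1-2c/T$, so $u_T^2\ge e^{-4c}/2$ for $T$ large. By continuity of $g$ and $g(\overline{Z})>0$, this interval has $\mu$-mass at least $\tfrac12 g(\overline{Z})\overline{Z}c/T$, and therefore $\ee_\mu[u_T^2]\ge c_1/T$.

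For the mean, away from $\overline{Z}$ we have $|1-2Z/\overline{Z}|\le\max\{1-2\underline{Z}/\overline{Z},\,1-\delta\}<1$, which contributes a geometrically small amount. Near $\overline{Z}$ we bound $|u_T|\le 1$ on a window whose $\mu$-mass is $O(1/T)$; using the bounded density, $\ee_\mu|u_T|\le \int(1-2\eta/\overline{Z})^T g\,\d\eta+\text{geom}=O(1/T)$. So $(\ee_\mu u_T)^2=O(1/T^2)$ and $\var_\mu(u_T)\ge c_1/T-O(1/T^2)$. It follows that
\[
F_T(-2/\overline{Z})\ge \frac{\ee[W]\,c_1}{8T}
\]
for large $T$. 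This step is where the density condition of Remark~\ref{assump:density_pos} is genuinely needed, and it is the delicate part.

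\textbf{Step 4 (conclusion).} Since $\rho^{2T}=o(1/T)$, there is $T'(\beta_0)$ such that $F_T(\beta_0)<\min\{F_T(0),F_T(-2/\overline{Z})\}$ for all $T\ge T'$, which is the claimed inequality. For the consequence, $t\mapsto\beta_t$ is continuous and $\ell(\alpha_t,\beta_t)\le\ell(\alpha_0,\beta_0)$ along the flow. If $\beta_t$ ever reached $-2/\overline{Z}$ or $0$, then at that time the loss would exceed $\ell(\alpha_0,\beta_0)$, a contradiction. By the intermediate value theorem, $\beta_t$ stays in $(-2/\overline{Z},0)$ for all $t\ge0$.
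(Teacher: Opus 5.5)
Your proposal is correct and follows essentially the same route as the paper. You reduce to the profiled loss $F_T$, bound $F_T(\beta_0)=O(\rho^{2T})$ through the variance representation, and observe that $F_T(0)=\ee[W]-\ee[WZ]^2/\ee[WZ^2]>0$ does not depend on $T$. You then show $F_T(-2/\overline{Z})=\Omega(1/T)$ by pairing a density lower bound near $\overline{Z}$ for the second moment with the bounded-density estimate $\ee_\mu|v_T|=O(1/T)$ and Jensen.

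The remaining differences (your explicit $(1-\rho^T)^{-2}$ denominator bound, the $c/T$ window, and the intermediate-value argument for the final claim) are only presentational. One point of precision: the $O(1/T)$ bound on $\ee_\mu|v_T|$ comes from the integral $\int_0^\infty e^{-2T\eta/\overline{Z}}\,g\,\mathrm{d}\eta$ that you wrote. It does not follow from bounding $|u_T|\le 1$ on an $O(1/T)$ window alone, because just outside such a window $|u_T|$ is still of constant order rather than geometrically small.
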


\begin{proof}
    For $\beta = 0$, a quick derivation reveals 
    \[
    \ell(\alpha, 0) \geq \ell(\alpha^\ast(0), 0) = \ee[W] - \frac{(\ee[WZ])^2}{\ee[WZ^2]} =: c > 0
    \]
    for all $\alpha \in \rr$ by Cauchy-Schwarz where $c$ does not depend on $T$. At $\beta=-2/\overline{Z}$, write
\[
v_T(Z) := u_T(-2/\overline{Z},Z) = (1-2Z/\overline{Z})^T.
\]
Since $|v_T(Z)|\le 1$, we have $\ee_\mu[(1-u_T(\beta,Z))^2]\le 4$, and hence from \eqref{eq:FT-variance-form}, 
\[
\ell(\alpha, -2/\overline{Z}) \ge F_T(-2/\overline{Z}) \ge \frac{\ee[W]}{4}\var_\mu(v_T(Z))
\]
for all $\alpha \in \rr$. We now show that $\var_\mu(v_T(Z)) = \Omega(1/T)$. From Remark \ref{assump:density_pos}, let $g$ denote the continuous density of $\mu$ on $[\underline{Z},\overline{Z}]$, so that
$\ee_\mu[h(Z)] = \int_{\underline{Z}}^{\overline{Z}} h(z)g(z)\,{\rm d}z$ and in particular, there exist $\delta>0$ and $c_1>0$ such that
\begin{equation}  \label{eq:g-lower}
  g(z) \ge c_1
  \qquad\text{for all } z\in[\overline{Z}-\delta,\overline{Z}].
\end{equation}

By a change of variables $z=\overline{Z}-y$ for $y \in [0, \overline{Z}-\underline{Z}]$ and taking $T$ sufficiently large so that $1/T \leq \delta$, we have 
\begin{align*}
    \ee_\mu[v_T(Z)^2]
&= \int_{\underline{Z}}^{\overline{Z}} (1-2z/\overline{Z})^{2T} g(z)\,\d z = \int_0^{\overline{Z}-\underline{Z}} (1-2y/\overline{Z})^{2T} g(\overline{Z}-y)\,\d y \\
&\ge \int_0^{1/T} (1-2y/\overline{Z})^{2T} g(\overline{Z}-y)\, \d y
\ge c_1 \int_0^{1/T} (1-2y/\overline{Z})^{2T}\,\d y.
\end{align*}
where we used \eqref{eq:g-lower} for the last inequality. For $T$ sufficiently large and $y \in [0,1/T]$, we have $2y/\overline{Z} \leq 1/2$ and so applying the standard inequality $1-x \geq e^{-2x}$ for $x \in [0,1/2]$ yields
\[
\ee_\mu[v_T(Z)^2] \geq c_1 \int_0^{1/T} \exp\left(-\frac{8Ty}{\overline{Z}}\right)\,\d y \ge  c_1 \int_0^{1/T} \exp\left(-\frac{8}{\overline{Z}}\right)\,\d y = \Omega(1/T). 
\]
We now turn our attention to $\ee_\mu[v_T(Z)]^2$. Since $g$ is continuous on $[\underline{Z}, \overline{Z}]$, let $c_2$ be an upper bound on $g$ and assume without loss of generality that $\overline{Z}/2 \leq \overline{Z} -\underline{Z}$\footnote{If this is not the case, the analysis below is simplified.} Using the inequality $1-x \leq e^{-x}$ for $x \in [0,1]$ and $|1-2y/ \overline{Z}| =: \bar \rho < 1$ for all $y \in [\overline{Z}/2, \overline{Z}-\underline{Z}]$, we have 
\begin{align*}
\ee_\mu[|v_T(Z)|] &= \int_{\underline{Z}}^{\overline{Z}} |1-2z/\overline{Z}|^Tg(z) \;\d z = \int_0^{\overline{Z}-\underline{Z}} |1-2y/\overline{Z}|^Tg(\overline{Z}-y) 
\;\d y \\
&\leq c_2 \int_0^{\overline{Z}/2} |1-2y/\overline{Z}|^T \;\d y + c_2 \int_{\overline{Z}/2}^{\overline{Z}-\underline{Z}} |1-2y/\overline{Z}|^T \;\d y \\
&\leq c_2 \int_0^{\overline{Z}/2} \exp\left(-2Ty/\overline{Z} \right) \;\d y +  c_2 \int_{\overline{Z}/2}^{\overline{Z}-\underline{Z}} \bar \rho^T \;\d y \\ 
&\leq  c_2 \left( \int_0^{\infty} \exp\left(-2Ty/\overline{Z} \right) \;\d y + (\overline{Z}-\underline{Z}) \bar \rho^T \right) \\
&\leq  c_2 \left( \frac{\overline{Z}}{2T} + (\overline{Z}-\underline{Z}) \bar \rho^T \right) = O(1/T). 
\end{align*}
By Jensen's inequality, we have $\ee_\mu[v_T(Z)]^2 = O(1/T^2)$ and so $\var_\mu(v_T(Z)) = \Omega(1/T)$. Hence, for all $\alpha \in \rr$, 
\begin{equation}\label{eq:O_1_T}
    \ell(\alpha, -2/\overline{Z}) = \Omega(1/T)
\end{equation}
for all $T$ large. Finally, note that for the fixed initialization $\beta_0 \in (-2/\overline{Z},0)$, we have
\[
|u_T(\beta_0,Z)| \le \rho_{\beta_0}^T
\qquad\text{for some }\rho_{\beta_0}\in(0,1)
\]
almost surely. Hence, there exists $\bar T$ such that $\ee_\mu[(1-u_T(\beta_0,Z))^2]\ge C>0$ (e.g.\ $C=1/2$) for all $T\ge \bar T$, and so
\[
F_T(\beta_0)
\le
\frac{\ee[W]\var_\mu(u_T(\beta_0,Z))}{C}
\le
\frac{\ee[W]\ee_\mu[u_T(\beta_0,Z)^2]}{C}
\le
\frac{\ee[W]}{C}\rho_{\beta_0}^{2T}
=
O(\rho_{\beta_0}^{2T}).
\]
Since $\alpha_0=\alpha^\ast(\beta_0)$, we have
\[
\ell(\alpha_0,\beta_0)=F_T(\beta_0)=O(\rho_{\beta_0}^{2T}).
\]
Combining this with \eqref{eq:O_1_T} and the lower bound $\ell(\alpha,0)\ge c>0$, it follows that there exists $T'=T'(\beta_0)$ such that for all $T\ge T'$,
\[
\ell(\alpha_0,\beta_0)<\min\{\ell(\alpha,-2/\overline{Z}),\ell(\alpha,0)\}
\qquad\forall\,\alpha\in\rr.
\]
The final claim follows because gradient flow is non-increasing on $\ell$.
\end{proof}

\begin{theorem} \label{thm:two_param_GF_converges}
     Suppose Assumption \ref{assump:support_m} holds and the conditions of Remark \ref{assump:density_pos} are satisfied, and that one initializes gradient flow at $\alpha_0 = \alpha^\ast(\beta_0)$ with $\beta_0 \in (-2/ \overline{Z}, 0)$. Then, gradient flow on $\ell(\alpha, \beta)$ converges with 
    \[
    \beta^\ast_T := \lim_{t \to \infty} \beta_{t,T} \in (-2/ \overline{Z}, 0)
    \]
    and 
    \[
    \alpha^\ast_T := \lim_{t \to \infty} \alpha_{t,T} = \alpha^\ast(\beta_T^\ast).
    \]
\end{theorem}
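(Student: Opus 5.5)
The plan is to combine the trapping result of Lemma~\ref{lem:loss_ineq} with the a priori bound of Lemma~\ref{lem:alpha_t_bounded}. This confines the trajectory to a compact set. A {\L}ojasiewicz-type argument for the analytic loss $\ell$ will then upgrade subsequential convergence to convergence of the full trajectory. Throughout, we take $T\ge T'(\beta_0)$ as in Lemma~\ref{lem:loss_ineq}; this is harmless, since the theorem is only used in the $T\to\infty$ regime.

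\textbf{Step 1: the trajectory stays in a compact set.}
\begin{itemize}
\item Lemma~\ref{lem:alpha_t_bounded} gives $\sup_t|\alpha_t|\le M_{\alpha_0,\beta_0}$.
\item Lemma~\ref{lem:loss_ineq} gives $\beta_t\in(-2/\overline{Z},0)$ for all $t$.
\item Hence $(\alpha_t,\beta_t)$ lies in the compact set $K=[-M_{\alpha_0,\beta_0},M_{\alpha_0,\beta_0}]\times[-2/\overline{Z},0]$.
\end{itemize}
The trajectory in fact stays in a compact subset strictly inside the open strip, by the following argument.
\begin{itemize}
\item The map $\alpha\mapsto\ell(\alpha,\beta)$ is a quadratic whose coefficients $A(\beta),B(\beta),\ee[W]$ are continuous in $\beta$.
\item The strict inequality $\ell(\alpha_0,\beta_0)<\min_\alpha\ell(\alpha,\beta')$ for $\beta'\in\{-2/\overline{Z},0\}$ therefore persists for $\beta'$ in a small neighbourhood of each endpoint, uniformly over $\alpha$ in the bounded range.
\item Since $\ell(\alpha_t,\beta_t)\le\ell(\alpha_0,\beta_0)$, this yields $\delta>0$ with $\beta_t\in[-2/\overline{Z}+\delta,-\delta]$ for all $t$.
\end{itemize}
Consequently every $\omega$-limit point lies in the interior strip.

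\textbf{Step 2: limit points are critical points with $\alpha=\alpha^\ast(\beta)$.}
\begin{itemize}
\item $\ell$ is a polynomial in $(\alpha,\beta)$: indeed $S(Z,\beta)$ is a polynomial in $\beta$, and $Z$ is bounded, so the expectations are polynomials.
\item Hence $\ell$ is real analytic and $\nabla\ell$ is Lipschitz on $K$, so the flow exists for all time.
\item Along the flow, $\int_0^\infty\|\nabla\ell(\alpha_t,\beta_t)\|^2\,\d t\le\ell(\alpha_0,\beta_0)<\infty$.
\item $t\mapsto\|\nabla\ell\|^2$ is uniformly continuous on the trajectory, because the trajectory is bounded and $\ell$ is smooth. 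Barbalat's lemma, as in Lemma~\ref{lem:one_param_gf_conv}, then gives $\nabla\ell(\alpha_t,\beta_t)\to0$.
\item So every limit point $(\bar\alpha,\bar\beta)$ is a critical point with $\bar\beta\in(-2/\overline{Z},0)$.
\item Since $\partial_\alpha\ell=2\alpha A(\beta)-2B(\beta)$ and $A\ge A^\ast>0$, we get $\bar\alpha=\alpha^\ast(\bar\beta)$.
\end{itemize}

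\textbf{Step 3: the whole trajectory converges (main obstacle).}
Step 2 alone does not give a single limit, since the critical set of a nonconvex function could a priori be a continuum. My first choice is the {\L}ojasiewicz gradient inequality for the analytic function $\ell$:
\begin{itemize}
\item Near any limit point $p$ it gives $|\ell-\ell(p)|^{1-\theta}\le C\|\nabla\ell\|$ for some $\theta\in(0,1/2]$ and $C>0$.
\item Combined with $\frac{\d}{\d t}\ell=-\|\nabla\ell\|^2$, this yields finite arc length $\int_0^\infty\|\dot\theta_t\|\,\d t<\infty$.
\item Finite arc length gives convergence to a single point $(\alpha^\ast_T,\beta^\ast_T)$.
\end{itemize}
The alternative, which avoids the heavy machinery, is a one-dimensional argument:
\begin{itemize}
\item Critical points in the strip correspond to zeros of $F_T'(\beta)$, where $F_T$ is a nonconstant analytic function of $\beta$ on the interval. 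It is nonconstant because $F_T\to0$ near $\beta_0$ but stays bounded below near the endpoints.
\item Hence the zeros of $F_T'$ are isolated in $[-2/\overline{Z}+\delta,-\delta]$, and therefore finite.
\item The $\omega$-limit set of a bounded trajectory is connected, and here it is contained in this finite set, so it is a single point.
\end{itemize}
Either route yields $\beta^\ast_T\in(-2/\overline{Z},0)$ and $\alpha^\ast_T=\alpha^\ast(\beta^\ast_T)$ by Step 2.
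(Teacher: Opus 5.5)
Your proposal is correct and follows essentially the paper's route: bound the trajectory using Lemmas~\ref{lem:alpha_t_bounded} and~\ref{lem:loss_ineq}, upgrade to convergence to a single critical point via the {\L}ojasiewicz gradient inequality, read off $\alpha^\ast_T=\alpha^\ast(\beta^\ast_T)$ from $\partial_\alpha\ell=0$ with $A\ge A^\ast>0$, and exclude the endpoints using the strict loss gap of Lemma~\ref{lem:loss_ineq}. Your refinements tidy up points the paper passes over quickly: you note that $\ell$ is a polynomial in $(\alpha,\beta)$, so analyticity holds globally rather than only on the open strip; you confine $\beta_t$ to a compact subinterval before invoking {\L}ojasiewicz; you make the implicit requirement $T\ge T'(\beta_0)$ explicit; and your alternative argument (finitely many zeros of the rational function $F_T'$, plus connectedness of the $\omega$-limit set) is a valid, more elementary substitute for {\L}ojasiewicz.
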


\begin{proof}
    By Lemma \ref{lem:loss_ineq}, since gradient flow is non-increasing on $\ell(\alpha, \beta)$, it follows that $\beta_t \in (-2/ \overline{Z}, 0)$ for all $t \geq 0$. Moreover, from Lemma \ref{lem:alpha_t_bounded}, we know that $\alpha_t$ remains bounded and so the gradient flow trajectory $(\alpha_t, \beta_t)_{t \geq 0}$ is bounded. Since $\ell$ is real-analytic on the open set $\rr\times(-2/\overline{Z},0)$, the bounded trajectory $(\alpha_t,\beta_t)_{t\ge 0}$ converges to a single critical point by the Łojasiewicz gradient inequality. Clearly, $\alpha^\ast_T = \alpha^\ast(\beta^\ast_T)$ and since $\beta_t \in (-2/ \overline{Z}, 0)$ for all $t \geq 0$, it follows that $\beta^\ast_T \in [-2/ \overline{Z}, 0]$. Moreover, by continuity of the loss $\ell$ in $(\alpha, \beta)$ and Lemma \ref{lem:loss_ineq} which shows $\ell(\alpha_0, \beta_0) < \min \{ \ell(\alpha, -2/ \overline{Z}), \ell(\alpha, 0) \}$, $\beta^\ast_T \notin \{-2/ \overline{Z}, 0\}$ as if this were the case then necessarily, for $t$ sufficiently large one would have $\ell(\alpha_t, \beta_t) > \ell(\alpha_0, \beta_0)$---a contradiction. Hence, $\beta^\ast_T$ lies in the open interval $(-2/ \overline{Z}, 0)$. 
\end{proof}

\subsection{Asymptotic depth behavior}

Having established in the previous subsection that gradient flow converges to a stationary point at any fixed depth $T$, we now study how these stationary points behave as the depth $T\to\infty$. Our goal is to understand the asymptotic depth behavior of the limits of gradient flow and to show that, as in the one-parameter case, the limiting predictor is Bayes optimal. To make explicit the dependence on depth, let $\ell^{(T)}$ denote the loss $\ell$ where the CA model is at depth $T$. 

In the two-parameter setting, the loss is not jointly convex, so we cannot identify the gradient-flow limit $(\alpha_T^\ast,\beta_T^\ast)$ as a global minimizer of $\ell$. However, what is guaranteed is that
\[
 (\alpha_T^\ast,\beta_T^\ast)\ \text{is a stationary point of } \ell^{(T)}
 \quad\text{and}\quad
 \alpha_T^\ast = \alpha^\ast(\beta_T^\ast),
\]
so in particular
\[
  F_T'(\beta_T^\ast) = 0.
\]
Thus, the large-depth behavior of the gradient-flow limits is governed by the stationary points of the one-dimensional function $\beta\mapsto F_T(\beta)$. It will be convenient to work with a logarithmically rescaled version of $F_T$:
\begin{equation} \label{eq:G_T}
  G_T(\beta) := \frac{1}{2T}\log \left( \frac{F_T(\beta)}{\ee[W]} \right).
\end{equation}
The function $G_T$ is well-defined and smooth as $F_T > 0$. Moreover, $G_T$ is a strictly monotone transform of $F_T$ so $F_T'(\beta_T^\ast)=0$ if and only if $G_T'(\beta_T^\ast)=0$. Our task is therefore to understand the asymptotic behavior of $G_T$ and $G_T'$ as $T\to\infty$, and to show that stationary points of $F_T$ (and $G_T$) concentrate near a single value of $\beta$. 

\begin{lemma}\label{lem:endpoint-localization}
  Suppose Assumption~\ref{assump:support_m} holds and the conditions of Remark \ref{assump:density_pos} are satisfied, and fix any compact interval $K \subset I \setminus \{-\alpha^\ast\}$, where $I := (-2/\overline{Z},0)$, $\alpha^\ast := 2/(\underline{Z}+\overline{Z})$. For each $T\in\nn$ and $\beta\in K$ define the probability measure
  \[
    \d\nu_{T,\beta}(z)
    := \frac{|1+\beta z|^{2T}}
             {\int |1+\beta s|^{2T}\,\d\mu(s)}\,\d\mu(z).
  \]
  Then the following hold uniformly for $\beta\in K$ as $T\to\infty$:
  \begin{enumerate}
    \item
      The reduced loss $F_T(\beta)$ satisfies
      \[
        F_T(\beta)^{1/(2T)} \longrightarrow \phi(\beta),
      \]
      where
      \[
        \phi(\beta) = \max \{|1 + \beta \underline{Z}|, |1 + \beta \overline{Z}|\}.
      \]
    \item
     Let $z^\ast(\beta)\in\{\underline{Z},\overline{Z}\}$ denote the unique endpoint at which $\phi(\beta)=|1+\beta z^\ast(\beta)|$ is attained. Then $\nu_{T,\beta}$ converges weakly to the point mass at the active
      endpoint.
  \end{enumerate}
\end{lemma}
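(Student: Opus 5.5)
Starting from the variance representation \eqref{eq:FT-variance-form}, I would write $u = u_T(\beta,Z) = (1+\beta Z)^T$ and
\[
F_T(\beta) = \ee[W]\,\frac{\ee_\mu[u^2] - \ee_\mu[u]^2}{\ee_\mu[(1-u)^2]} .
\]
Fix $K \subset I\setminus\{-\alpha^\ast\}$ compact.

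\textbf{The base factor $\phi(\beta)$.} For $\beta\in I$ and $z\in[\underline{Z},\overline{Z}]$ we have $|1+\beta z|<1$. Hence $\phi(\beta)<1$, and by compactness $\sup_K\phi<1$. Moreover, since $\beta\neq-\alpha^\ast$, the two endpoint values $|1+\beta\underline{Z}|$ and $|1+\beta\overline{Z}|$ differ. By continuity and compactness the gap between them is bounded below by some $\eta>0$ uniformly on $K$, so the active endpoint $z^\ast(\beta)$ is well defined.

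\textbf{Part 1: the three expectations.}
\begin{itemize}
\item[(a)] \emph{Denominator.} Since $|u|\le(\sup_K\phi)^T\to0$ uniformly, $\ee_\mu[(1-u)^2]\to1$ uniformly. It therefore contributes a factor tending to $1$ after taking the $1/(2T)$ power.
\item[(b)] \emph{Second moment.} I will show $\ee_\mu[u^2]^{1/(2T)}\to\phi(\beta)$ uniformly. The upper bound is immediate from $|u|\le\phi^T$. For the lower bound I use Remark~\ref{assump:density_pos}: $g\ge c_1$ on a $\delta$-neighbourhood of the active endpoint. On $\{|z-z^\ast|\le\epsilon\}$, Lipschitz continuity gives $|1+\beta z|\ge\phi(\beta)-\epsilon|\beta|$. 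So $\ee_\mu[u^2]\ge c_1\epsilon(\phi-\epsilon C)^{2T}$, and letting $T\to\infty$ then $\epsilon\to0$ gives the lower bound, with all constants uniform on $K$. This is the same argument as \eqref{eq:liminf_bound}.
\item[(c)] \emph{Squared mean.} I must show $\ee_\mu[u]^2$ is negligible relative to $\ee_\mu[u^2]$, i.e.\ that the variance does not cancel. This is the main obstacle. I would use the $\Omega(\cdot)$ / $O(\cdot)$ comparison from the proof of Lemma~\ref{lem:loss_ineq}, localised near $z^\ast$. Near $z^\ast$, write $|1+\beta z|\approx\phi\,e^{-\kappa(z-z^\ast)\mathrm{sgn}}$ with a slope $\kappa>0$ bounded away from $0$ uniformly on $K$; this holds because $1+\beta z$ is affine with nonzero slope $\beta$. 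This gives the matching estimates $\ee_\mu[u^2]\asymp\phi^{2T}/T$ and $\ee_\mu[|u|]\lesssim\phi^T/T$, using the upper bound $c_2$ on $g$ and, away from $z^\ast$, the geometric gap $(\phi-\eta)^T$. Hence $\ee_\mu[u]^2\lesssim\phi^{2T}/T^2 = o(\ee_\mu[u^2])$ uniformly, so $\var_\mu(u)\ge\tfrac12\ee_\mu[u^2]$ for large $T$. Taking $2T$-th roots then yields Part~1.
\end{itemize}

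\textbf{Part 2: weak convergence of $\nu_{T,\beta}$.} It suffices to show that for every $\epsilon>0$,
\[
\nu_{T,\beta}\bigl(|z-z^\ast(\beta)|>\epsilon\bigr)\to0 \quad\text{uniformly on } K .
\]
\begin{itemize}
\item \emph{Numerator.} On $\{|z-z^\ast|>\epsilon\}$ the affine function $|1+\beta z|$ is at most $\phi(\beta)-c\epsilon$. Indeed, the maximum of $|1+\beta z|$ over $[\underline{Z},\overline{Z}]$ minus the $\epsilon$-window around $z^\ast$ is attained either at the edge of the window or at the other endpoint. The first is bounded by $\phi-|\beta|\epsilon$ and the second by $\phi-\eta$, uniformly on $K$. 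So the numerator is at most $(\phi-c'\epsilon)^{2T}$.
\item \emph{Denominator.} By Part~1(b) it is at least $(\phi-o(1))^{2T}$, with the $o(1)$ uniform on $K$.
\end{itemize}
The ratio therefore tends to $0$ uniformly, giving the weak convergence to $\delta_{z^\ast(\beta)}$.
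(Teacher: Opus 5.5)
Your proof is correct. The key step, the lower bound on $\var_\mu(u)$, takes a different route from the paper's.

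\textbf{Part 1.} The paper applies the law of total variance to an endpoint window $A_\beta$. It uses two facts: $u=(1+\beta Z)^T$ has constant sign on $A_\beta$ (this is where $K$ lying on one side of $-\alpha^\ast$ enters), and $|u|$ is exponentially smaller off $A_\beta$. This gives only $\var_\mu(u)\ge C_1(\phi-\delta)^{2T}$. That is enough after taking $2T$-th roots, and it needs no upper bound on $g$.

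You instead compute at the Laplace scale near the active endpoint. You get $\ee_\mu[u^2]\asymp\phi^{2T}/T$ and $\ee_\mu[|u|]\lesssim\phi^{T}/T$, so $\ee_\mu[u]^2$ is negligible without any sign information. The cost is an upper bound on $g$ (available since $g$ is continuous on a compact interval), plus $|\beta|$ and $\phi$ bounded away from $0$ on $K$. The gain is the sharp order $\var_\mu(u)\asymp\phi(\beta)^{2T}/T$, uniformly on $K$.

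That sharp order is what is actually needed once $V_T$ appears in a denominator, as in Lemma~\ref{lem:GT-derivative}. There the paper uses $V_T\ge c_1\phi^{2T}$, which overstates $V_T$ by a factor of order $T$.

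\textbf{Part 2.} This is the same Laplace-concentration idea as the paper's, but your bookkeeping is cleaner. You compare the mass outside an $\epsilon$-window, at most $(\phi-c'\epsilon)^{2T}$, with the full integral, at least $(\phi-o(1))^{2T}$. This keeps the window scale separate from the loss in the lower bound.

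The paper instead asks for $|1+\beta z|\ge\phi-\delta$ on $A_\beta$ and $|1+\beta z|\le\phi-\eta$ on $A_\beta^c$ with $\eta>\delta$. Continuity of $|1+\beta z|$ across $\partial A_\beta$ rules this out unless the lower bound is taken on a strictly smaller window. That separation is exactly what your version builds in.
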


\begin{proof}
We begin from the variance representation:
\[
  F_T(\beta)
  = \ee[W]
    \frac{\var_\mu\left(u_T(\beta,Z)\right)}
         {\ee_\mu\left[(1-u_T(\beta,Z))^2\right]}.
\]
We analyze numerator and denominator separately. For the denominator, for any $\beta\in K$ and
$z\in[\underline{Z},\overline{Z}]$ we have
\[
  |1+\beta z| \leq  \phi(\beta) \leq \phi_K  := \sup_{\beta\in K} \phi(\beta) < 1,
\]
where $\phi_K<1$ by continuity of $\phi$ and compactness of $K$.
Thus,
\[
  |u_T(\beta,Z)|= |1+\beta Z|^T \leq \phi_K^T
\]
almost surely and therefore
\[
  \ee_\mu\left[(1-u_T(\beta,Z))^2\right]  = 1 - 2\ee_\mu[u_T(\beta,Z)]  + \ee_\mu[u_T(\beta,Z)^2],
\]
with
\[
  |\ee_\mu[u_T(\beta,Z)]| \leq \phi_K^T, \qquad  \ee_\mu[u_T(\beta,Z)^2] \leq \phi_K^{2T}.
\]
Hence
\[
  \ee_\mu\left[(1-u_T(\beta,Z))^2\right]\in [1-2\phi_K^T, 1+\phi_K^{2T}] \xrightarrow[T\to\infty]{} 1
\]
uniformly in $\beta\in K$. In particular, for all sufficiently large $T$, we have
\begin{equation}\label{eq:denominator-bounds}
  \frac{1}{2} \leq  \ee_\mu\left[(1-u_T(\beta,Z))^2\right] \leq 2,  \qquad\forall\,\beta\in K.
\end{equation}
Now, for an upper bound on the numerator, we have
\begin{equation}\label{eq:var-upper}
  \var_\mu\left(u_T(\beta,Z)\right) \leq  \ee_\mu\left[u_T(\beta,Z)^2\right]
  = \int |1+\beta z|^{2T}\,\d\mu(z) \leq \phi(\beta)^{2T}.
\end{equation}
Now, for a lower bound on the numerator, note that for any random variable $X$ and measurable event $A$, the law of total variance shows:
\begin{equation}\label{eq:var_identity}
  \var_\mu(X) \geq \mu(A)\mu(A^c)\left(\ee[X\mid A] - \ee[X\mid A^c]\right)^2.
\end{equation}
Now, for fixed $\beta\in K$, we define
\[
  X_T(\beta) := u_T(\beta,Z) = (1+\beta Z)^T.
\]
Since the conditions of Remark \ref{assump:density_pos} hold and $K$ is compact, there exist constants $\varepsilon>0$ and $c>0$ such that, for every $\beta\in K$,
\begin{itemize}
  \item if $z^\ast(\beta)= \overline{Z}$, then
    $A_\beta:=[ \overline{Z}-\varepsilon, \overline{Z}]$ satisfies $\mu(A_\beta)\ge c$;
  \item if $z^\ast(\beta)= \underline{Z}$, then
    $A_\beta:=[ \underline{Z}, \underline{Z}+\varepsilon]$ satisfies $\mu(A_\beta)\ge c$.
\end{itemize}
Without loss of generality, by continuity of the map $(\beta,z)\mapsto |1+\beta z|$ on
the compact set $K\times[\underline{Z},\overline{Z}]$, we may also choose $\varepsilon$ small enough so that there exists $\delta\in(0,1-\phi_K)$ and $\eta>0$ with
\begin{align}
  |1+\beta z| &\geq \phi(\beta)-\delta
   \quad \text{for all }z\in A_\beta, \ \beta\in K, \label{eq:A-beta-lower} \\
  |1+\beta z| &\leq \phi(\beta)-\eta
  \quad \text{for all } z\in A_\beta^c, \ \beta\in K. \label{eq:A-beta-upper}
\end{align}
Moreover, shrinking $\varepsilon$ if necessary, we may assume that there exists $\sigma\in\{-1,1\}$ such that
\begin{equation}\label{eq:uniform-sign-on-A-beta}
  \sigma(1+\beta z) \ge 0
  \qquad\text{for all } z\in A_\beta,\ \beta\in K.
\end{equation}
Indeed, since $K\subset I\setminus\{-\alpha^\ast\}$, the active endpoint is unique for every $\beta\in K$, and by compactness of $K$ the distance from $K$ to $-\alpha^\ast$ is strictly positive; this gives a uniform gap from zero for $1+\beta z$ on the corresponding endpoint neighborhood $A_\beta$.

From \eqref{eq:A-beta-lower}--\eqref{eq:A-beta-upper}, we have $|X_T(\beta)| \geq (\phi(\beta)-\delta)^T$ on $A_\beta$ while $|X_T(\beta)| \leq (\phi(\beta)-\eta)^T$ on $A_\beta^c$. By \eqref{eq:uniform-sign-on-A-beta}, $X_T(\beta)$ has constant sign on $A_\beta$, and therefore
\[
  \left|\ee[X_T(\beta)\mid A_\beta]\right|
  = \ee[|X_T(\beta)|\mid A_\beta]
  \geq (\phi(\beta)-\delta)^T.
\]
Also,
\[
  \left|\ee[X_T(\beta)\mid A_\beta^c]\right| \leq (\phi(\beta)-\eta)^T.
\]
Further, for all sufficiently large $T$ we have
$(\phi(\beta)-\eta)^T \leq \tfrac12 (\phi(\beta)-\delta)^T$, and so
\[
  \left|\ee[X_T(\beta)\mid A_\beta]  - \ee[X_T(\beta)\mid A_\beta^c]\right| \geq (\phi(\beta)-\delta)^T - (\phi(\beta)-\eta)^T \geq \tfrac12 (\phi(\beta)-\delta)^T.
\]
Applying the inequality \eqref{eq:var_identity} with $X=X_T(\beta)$
and $A=A_\beta$, we obtain
\[
  \var_\mu\left(u_T(\beta,Z)\right) \geq  \mu(A_\beta)\mu(A_\beta^c) \left(\ee[X_T(\beta)\mid A_\beta] - \ee[X_T(\beta)\mid A_\beta^c]\right)^2 \geq C_1 (\phi(\beta)-\delta)^{2T},
\]
for all $\beta\in K$ and all sufficiently large $T$, where
$C_1:=\tfrac14 c(1-c)>0$ is independent of $\beta$. Combining this with the upper bound \eqref{eq:var-upper}, we have,
for all large $T$ and all $\beta\in K$,
\begin{equation}\label{eq:var-two-sided}
  C_1 (\phi(\beta)-\delta)^{2T} \leq  \var_\mu\left(u_T(\beta,Z)\right) \leq \phi(\beta)^{2T}.
\end{equation}
Putting everything together, using \eqref{eq:denominator-bounds} and \eqref{eq:var-two-sided}, we obtain
\[
  \frac{1}{2}\ee[W]
  \var_\mu\left(u_T(\beta,Z)\right) \leq F_T(\beta) \leq 2 \ee[W]  \var_\mu\left(u_T(\beta,Z)\right)
\]
for all large $T$ and all $\beta\in K$. Hence, for such $T$,
\[
  \left(\tfrac{1}{2}\ee[W]C_1\right)^{1/(2T)}(\phi(\beta)-\delta)
  \;\le\;
  F_T(\beta)^{1/(2T)}
  \;\le\;
  \left(2\ee[W]\right)^{1/(2T)} \phi(\beta).
\]
Letting $T\to\infty$, we have
\[
 \phi(\beta)-\delta \leq \liminf_{T\to\infty} F_T(\beta)^{1/(2T)}\leq
  \limsup_{T\to\infty} F_T(\beta)^{1/(2T)} \leq  \phi(\beta)
\]
and, since $\delta > 0$ was arbitrary, we obtain the desired uniform convergence over $K$.

We now turn to proving the second statement of the lemma. Fix a bounded, continuous function
$f:[\underline{Z},\overline{Z}]\to\rr$. By definition of $\nu_{T,\beta}$,
\begin{equation}\label{eq:nuTbeta-def}
  \int f(z)\,\d\nu_{T,\beta}(z)  = \frac{\displaystyle\int f(z)\,|1+\beta z|^{2T}\,\d\mu(z)}
         {\displaystyle\int |1+\beta s|^{2T}\,\d\mu(s)}.
\end{equation}
We will show that the right-hand side of \eqref{eq:nuTbeta-def} converges to
$f(z^\ast(\beta))$ uniformly in $\beta\in K$. Split both the numerator and denominator into contributions from
$A_\beta$ and $A_\beta^c$. From
\eqref{eq:A-beta-lower}--\eqref{eq:A-beta-upper} we have
\[
  \int_{A_\beta} |1+\beta z|^{2T}\,\d\mu(z) \geq
  \mu(A_\beta)(\phi(\beta)-\delta)^{2T} \geq  c(\phi(\beta)-\delta)^{2T}
\]
and
\[
  \int_{A_\beta^c} |1+\beta z|^{2T}\,\d\mu(z) \leq   (\phi(\beta)-\eta)^{2T}.
\]
Thus,
\[
  \frac{\displaystyle\int_{A_\beta^c} |1+\beta z|^{2T}\,\d\mu(z)}
       {\displaystyle\int_{A_\beta} |1+\beta z|^{2T}\,\d\mu(z)} \leq
  \frac{(\phi(\beta)-\eta)^{2T}}
       {c(\phi(\beta)-\delta)^{2T}}
  \xrightarrow[T\to\infty]{} 0,
\]
uniformly over $\beta\in K$. Therefore, the contribution of
$A_\beta^c$ to both the numerator and denominator in
\eqref{eq:nuTbeta-def} is negligible. More precisely, write the numerator $N_T(\beta)$ and denominator
$D_T(\beta)$ as
\[
  N_T(\beta) = \int f(z)|1+\beta z|^{2T}\,\d\mu(z)  = N_T^{(A)}(\beta) + N_T^{(A^c)}(\beta),
\]
\[
  D_T(\beta) = \int |1+\beta z|^{2T}\,\d\mu(z) = D_T^{(A)}(\beta) + D_T^{(A^c)}(\beta),
\]
where the superscripts denote integration over $A_\beta$ and
$A_\beta^c$ respectively. By boundedness of $f$ and the
previous estimates, we have
\[
  \frac{|N_T^{(A^c)}(\beta)|}{D_T^{(A)}(\beta)} \leq \|f\|_\infty  \frac{(\phi(\beta)-\eta)^{2T}}
       {c(\phi(\beta)-\delta)^{2T}}  \xrightarrow[T\to\infty]{} 0,
\]
and
\[
  \frac{D_T^{(A^c)}(\beta)}{D_T^{(A)}(\beta)}
  \xrightarrow[T\to\infty]{} 0,
\]
uniformly over $\beta\in K$. Hence,
\[
  \int f\,\d\nu_{T,\beta} = \frac{N_T^{(A)}(\beta)}{D_T^{(A)}(\beta)} + o(1),
\]
where $o(1)\to 0$ uniformly in $\beta\in K$. Now, we restrict attention to the conditional probability measure on $A_\beta$ with density proportional to
$|1+\beta z|^{2T}\,\d\mu(z)$. On $A_\beta$, the function
$z\mapsto |1+\beta z|$ attains its unique maximum at
$z^\ast(\beta)$, since $K\subset I\setminus\{-\alpha^\ast\}$ ensures that the active endpoint is unique, and the interval $A_\beta$ can be made arbitrarily small by taking $\varepsilon\downarrow 0$. Since $f$ is uniformly continuous on the compact set $[\underline{Z},\overline{Z}]$, for any $\varepsilon>0$ we can shrink $\varepsilon>0$ (uniformly in
$\beta\in K$) so that
\[
  |f(z) - f(z^\ast(\beta))|  < \varepsilon \qquad\forall\,z\in A_\beta,\ \beta\in K.
\]
Then,
\[
  \left|
    \frac{N_T^{(A)}(\beta)}{D_T^{(A)}(\beta)}
    - f\left(z^\ast(\beta)\right)
  \right| \leq \sup_{z\in A_\beta}  |f(z)-f(z^\ast(\beta))| < \varepsilon,
\]
because $N_T^{(A)}(\beta)/D_T^{(A)}(\beta)$ is merely the
expectation of $f(Z)$ under the normalized weights
$|1+\beta Z|^{2T}\one_{A_\beta}\,\d\mu$, where $\one_{A_\beta}$ denotes the indicator of the set $A_\beta$. Combining this with the negligible contribution of $A_\beta^c$,
we conclude that
\[
  \int f(z)\,\d\nu_{T,\beta}(z) \longrightarrow  f\left(z^\ast(\beta)\right)
\]
uniformly over $\beta\in K$. Since this holds for every bounded continuous $f$, we have
$\nu_{T,\beta} \to \delta_{z^\ast(\beta)}$\footnote{By standard convention, $\delta$ denotes a point-mass (Dirac measure).} weakly, uniformly over $\beta\in K$.
\end{proof}

The above result will be utilized in proving the following lemma establishing the uniform convergence of the derivatives of $G_T$ over a compact interval omitting the point $-\alpha^\ast$. This will then yield a contradiction as the limiting derivative is strictly nonzero away from $-\alpha^\ast$. As a quick reminder from Appendix \ref{app:one-param}, we defined
\[
\alpha^\ast = \frac{2}{\underline{Z} + \overline{Z}}.
\]
\begin{lemma}[Asymptotics of the derivative] \label{lem:GT-derivative}
  Suppose Assumption~\ref{assump:support_m} holds and the conditions of Remark \ref{assump:density_pos} are satisfied. Fix a compact interval
  $K \subset I \setminus \{-\alpha^\ast\}$, where $I := (-2/\overline{Z},0)$. Then:
  \begin{enumerate}
    \item
      The derivatives $G_T'(\beta)$ converge uniformly on $K$ to
      the derivative of $\log\phi(\beta)$:
      \[
        G_T'(\beta)
        \longrightarrow
        \frac{d}{d\beta}\log\phi(\beta)
      \]
      uniformly for $\beta\in K$ as $T\to\infty$.
    \item
      In particular, since $-\alpha^\ast\notin K$,
      there exist constants $T_K<\infty$ and $c_K>0$ such that
      \[
        |G_T'(\beta)| \geq c_K>0,   \qquad \forall\,\beta\in K,\ T\ge T_K.
      \]
  \end{enumerate}
\end{lemma}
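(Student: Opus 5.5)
Write $u:=u_T(\beta,Z)=(1+\beta Z)^T$, $N_T(\beta):=\var_\mu(u)$ and $D_T(\beta):=\ee_\mu[(1-u)^2]$, so that by \eqref{eq:FT-variance-form}
\[
G_T(\beta)=\frac{1}{2T}\bigl(\log N_T(\beta)-\log D_T(\beta)\bigr),
\qquad
G_T'=\frac{1}{2T}\frac{N_T'}{N_T}-\frac{1}{2T}\frac{D_T'}{D_T}.
\]
Using $\partial_\beta u=TZ(1+\beta Z)^{T-1}$, we get
\[
N_T'=2T\bigl(\ee_\mu[Z(1+\beta Z)^{2T-1}]-\ee_\mu[u]\,\ee_\mu[Z(1+\beta Z)^{T-1}]\bigr),
\qquad
D_T'=-2T\,\ee_\mu[(1-u)Z(1+\beta Z)^{T-1}].
\]
On $K$ we have $|1+\beta z|\le\phi_K<1$. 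Hence $|D_T'|/(2T)\le \overline{Z}\,(1+\phi_K^T)\,\phi_K^{T-1}$, which tends to $0$ uniformly. Also $D_T\ge 1/2$ by \eqref{eq:denominator-bounds}, so the second term of $G_T'$ vanishes uniformly on $K$.

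The main term is $N_T'/(2TN_T)$. The lower bound \eqref{eq:var-two-sided} gives $N_T\ge C_1(\phi(\beta)-\delta)^{2T}$, with $\delta>0$ arbitrary for $T$ large. The cross term $\ee_\mu[u]\,\ee_\mu[Z(1+\beta Z)^{T-1}]$ must be shown negligible relative to $N_T$, and I expect this to be the main obstacle. The crude bound $\phi^{2T-1}$ is of the same exponential order as $N_T$, so it is not enough.

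To handle it, I would compare with $E_T:=\ee_\mu[u^2]=\int|1+\beta z|^{2T}\,\d\mu$, the normalizer of $\nu_{T,\beta}$. Near the active endpoint the density $g$ is positive by Remark \ref{assump:density_pos}, and $|1+\beta z|=\phi(\beta)-\kappa|z-z^\ast(\beta)|+O(|z-z^\ast|^2)$ with slope $\kappa=|\beta|$ bounded below on $K$. A Laplace-type computation then gives $\ee_\mu[|u|]\asymp \phi^T/T$ and $E_T\asymp\phi^{2T}/T$, both uniformly on $K$. Consequently
\[
\ee_\mu[u]^2\lesssim \phi^{2T}/T^2=o(E_T),
\]
so $N_T=E_T(1+o(1))$. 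In the same way the cross term is $O(\phi^{2T-1}/T^2)=o(E_T)$. The Laplace asymptotics are the step needing care. The lower bound on $E_T$ is exactly the argument already used in Lemma \ref{lem:loss_ineq} (integrate over a window of width $1/T$ at the endpoint). The upper bound on $\ee_\mu[|u|]$ also follows that lemma, splitting into the endpoint window, where the decay is exponential in $T|z-z^\ast|$, and its complement, where the gap $\eta$ from \eqref{eq:A-beta-upper} applies. All constants are uniform on $K$ because $K$ stays away from $-\alpha^\ast$, and hence $z^\ast(\beta)$ is locally constant on $K$.

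With these estimates,
\[
\frac{N_T'}{2TN_T}=\frac{\ee_\mu[Z(1+\beta Z)^{2T-1}]}{E_T}+o(1)
=\int\frac{z}{1+\beta z}\,\d\nu_{T,\beta}(z)+o(1).
\]
The function $z/(1+\beta z)$ is continuous and bounded on $[\underline{Z},\overline{Z}]$, uniformly in $\beta\in K$, since $1+\beta z$ is bounded away from $0$ near the active endpoint. By Lemma \ref{lem:endpoint-localization}(2) the integral converges uniformly to $z^\ast/(1+\beta z^\ast)$.

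One caveat is that $1+\beta z$ may vanish away from the active endpoint. I would handle this by writing the integrand as $z\,|1+\beta z|^{2T-1}\operatorname{sign}(1+\beta z)$ and using the splitting into $A_\beta$ and $A_\beta^c$ from the proof of Lemma \ref{lem:endpoint-localization} directly, rather than dividing by $1+\beta z$. Finally, $\frac{d}{d\beta}\log|1+\beta z^\ast|=z^\ast/(1+\beta z^\ast)$, which is $\frac{d}{d\beta}\log\phi(\beta)$ since $z^\ast$ is constant on each component of $K$. This proves part 1.

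For part 2, the limit $z^\ast/(1+\beta z^\ast)$ is nonzero because $z^\ast\ge1$. It is continuous on compact $K$, so its absolute value is at least $2c_K$ for some $c_K>0$. Uniform convergence from part 1 then yields $|G_T'(\beta)|\ge c_K$ for all $\beta\in K$ and all $T\ge T_K$.
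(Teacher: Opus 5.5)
Your proposal is correct and keeps the paper's skeleton: the same split of $G_T'$ into the log-derivatives of $V_T=\var_\mu(u_T)$ and $D_T$, the same $O(\phi_K^{T-1})$ disposal of the $D_T$ term, and the same reduction of the main term to $\int \frac{z}{1+\beta z}\,\d\nu_{T,\beta}(z)$ followed by Lemma \ref{lem:endpoint-localization}(2).

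The genuine difference is how you justify dropping the cross term $\ee_\mu[u_T]\,\partial_\beta\ee_\mu[u_T]$ and replacing $V_T$ by $\ee_\mu[u_T^2]$. The paper asserts $V_T\asymp\phi(\beta)^{2T}$ and bounds the cross-term ratio by $C'\phi_K^{T-1}$. This step does not hold up, for three reasons:
\begin{itemize}
\item \eqref{eq:var-two-sided} only gives the lower bound $V_T\ge C_1(\phi(\beta)-\delta)^{2T}$.
\item Under Remark \ref{assump:density_pos} the true order of $V_T$ is $\phi(\beta)^{2T}/T$, not $\phi(\beta)^{2T}$.
\item Even granting $V_T\asymp\phi(\beta)^{2T}$, the crude bounds $|\ee_\mu u_T|\le\phi^T$ and $|\partial_\beta\ee_\mu u_T|\le T\overline{Z}\phi^{T-1}$ only bound that ratio by a constant, not by something vanishing.
\end{itemize}
Your remark that the crude bound is of the same exponential order as $N_T$ identifies exactly this problem. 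Your Laplace-type estimates fix it: $\ee_\mu|1+\beta Z|^k\lesssim\phi^k/k$, together with $E_T\gtrsim\phi^{2T}/T$ from the endpoint positivity of $g$, give $N_T=E_T(1+O(1/T))$ and a cross term that is $O(1/T)$ relative to $N_T$.

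The paper also claims that $(\beta,z)\mapsto z/(1+\beta z)$ is continuous on $K\times[\underline{Z},\overline{Z}]$. This fails whenever $-1/\beta\in[\underline{Z},\overline{Z}]$, which happens for $\beta$ near $-\alpha^\ast$. Your $A_\beta$/$A_\beta^c$ splitting of $z(1+\beta z)^{2T-1}$ is the correct repair. Your route costs a Laplace computation, but it is the one that is actually rigorous.

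Two small edits:
\begin{itemize}
\item Delete your sentence asserting that $z/(1+\beta z)$ is continuous and bounded on all of $[\underline{Z},\overline{Z}]$. Your caveat paragraph correctly contradicts it.
\item The upper bounds need no splitting. Substituting $w=1+\beta z$ gives $\ee_\mu|1+\beta Z|^k\le \|g\|_\infty\,\frac{2\phi(\beta)^{k+1}}{(k+1)|\beta|}$ for every $k$. This is uniform on $K$ because $|\beta|$ is bounded away from $0$ there. Only the lower bound on $E_T$ uses $g(z^\ast)>0$.
\end{itemize}
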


\begin{proof}
We begin by writing
\[
  F_T(\beta)  = \ee[W]  \frac{V_T(\beta)}{D_T(\beta)}, \qquad
  V_T(\beta) := \var_\mu\left(u_T(\beta,Z)\right),\quad D_T(\beta)  := \ee_\mu\left[(1-u_T(\beta,Z))^2\right].
\]
Thus,
\[
  G_T(\beta)  = \frac{1}{2T} \left(\log V_T(\beta) - \log D_T(\beta)\right)
\]
and so
\begin{equation}\label{eq:GT-derivative-split}
  G_T'(\beta)
  = \frac{1}{2T}\frac{V_T'(\beta)}{V_T(\beta)}
    - \frac{1}{2T}\frac{D_T'(\beta)}{D_T(\beta)}.
\end{equation}
We analyze the two terms in \eqref{eq:GT-derivative-split} separately starting with the denominator term. For $\beta\in K$ and $z\in[\underline{Z}, \overline{Z}]$, we have
$|1+\beta z|\le \phi_K<1$, thus
\[
  |u_T(\beta,z)|
  = |1+\beta z|^T
  \le \phi_K^T.
\]
From the proof of Lemma~\ref{lem:endpoint-localization}
we know that
$D_T(\beta)\to 1$ as $T \to \infty$ uniformly in $\beta\in K$. Differentiating under the integral sign, we have
\[
  D_T'(\beta) = \ee_\mu\left[2(1-u_T)(-u_T')\right],
\]
where
\[
  u_T'(\beta,z) := \frac{d}{d\beta}u_T(\beta,z)= T z(1+\beta z)^{T-1}.
\]
Therefore,
\[
  |D_T'(\beta)| \leq 2\ee_\mu\left(|1-u_T||u_T'|\right) \leq 4 T \overline{Z}\phi_K^{T-1}
\]
since $|1-u_T|\leq 1+|u_T|\leq 2$ for large $T$. Consequently, for some constant $C>0$,
\[
  \left| \frac{1}{2T}\frac{D_T'(\beta)}{D_T(\beta)} \right| \leq C \phi_K^{T-1} \xrightarrow[T\to\infty]{} 0,
\]
uniformly in $\beta\in K$, which shows that the second term in \eqref{eq:GT-derivative-split} is $o(1)$ uniformly on $K$. The prior differentiation under the integral sign was justified by dominated convergence: $|(1-u_T)(-u_T')|\le 2 T \overline{Z} \phi_K^{T-1}$ which is integrable w.r.t.\ $\mu$ and independent of $z$, and
for each fixed $z$ the integrand is $C^1$ in $\beta$.

We now turn to the numerator term of \eqref{eq:GT-derivative-split}$:
\frac{1}{2T}\frac{V_T'(\beta)}{V_T(\beta)}$. Since $V_T(\beta) = \ee_\mu[u_T^2] - \ee_\mu[u_T]^2$ with $u_T := u_T(\beta,Z)$, differentiating, we obtain
\[
  V_T'(\beta) = \frac{d}{d\beta}\ee_\mu[u_T^2]  - 2\ee_\mu[u_T]\frac{d}{d\beta}\ee_\mu[u_T].
\]
As above,
\[
  \left|\frac{d}{d\beta}\ee_\mu[u_T]\right| \leq T  \overline{Z} \phi_K^{T-1}, \qquad  \left|\frac{d}{d\beta}\ee_\mu[u_T^2]\right| \leq 2T \overline{Z} \phi_K^{2T-1},
\]
uniformly over $\beta\in K$. By \eqref{eq:denominator-bounds} and \eqref{eq:var-two-sided} from Lemma~\ref{lem:endpoint-localization}, we have $V_T(\beta)\asymp \phi(\beta)^{2T}$ uniformly on $K$; in particular, there exist constants
$0<c_1\le c_2<\infty$ such that
\[
  c_1 \phi(\beta)^{2T} \leq V_T(\beta) \leq  c_2 \phi(\beta)^{2T}
  \qquad \forall\,\beta\in K
\]
for all $T$ sufficiently large. Hence, the contribution from the term $-2\ee_\mu[u_T]\frac{d}{d\beta}\ee_\mu[u_T]$ to the ratio
$\frac{1}{2T}\frac{V_T'(\beta)}{V_T(\beta)}$ can be bounded as
\[
  \left| \frac{1}{2T}
    \frac{-2\ee_\mu[u_T]\frac{d}{d\beta}\ee_\mu[u_T]}{V_T(\beta)} \right|  \le C' \phi_K^{T-1} \xrightarrow[T\to\infty]{} 0,
\]
uniformly over $\beta\in K$, for some constant $C'>0$. Thus, the main contribution to
$\frac{1}{2T}\frac{V_T'(\beta)}{V_T(\beta)}$ comes from
$\ee_\mu[u_T^2]$:
\begin{equation}\label{eq:VT-log-derivative-main}
  \frac{1}{2T}\frac{V_T'(\beta)}{V_T(\beta)}
  = \frac{1}{2T} \frac{\frac{d}{d\beta}\ee_\mu[u_T(\beta,Z)^2]}  {\ee_\mu[u_T(\beta,Z)^2]} + o(1),
\end{equation}
uniformly over $\beta\in K$. Now
\[
  \ee_\mu[u_T(\beta,Z)^2] = \int (1+\beta z)^{2T}\,\d\mu(z) \implies \frac{d}{d\beta}\ee_\mu[u_T(\beta,Z)^2]
  = \int 2T z (1+\beta z)^{2T-1}\,\d\mu(z).
\]
and therefore
\[
  \frac{1}{2T}\frac{\frac{d}{d\beta}\ee_\mu[u_T(\beta,Z)^2]}
       {\ee_\mu[u_T(\beta,Z)^2]}  = \frac{\displaystyle
           \int z(1+\beta z)^{2T-1}\,\d\mu(z)}
         {\displaystyle
           \int (1+\beta z)^{2T}\,\d\mu(z)} = \int \frac{z}{1+\beta z}\,\d\nu_{T,\beta}(z)
\]
where $\nu_{T,\beta}$ is precisely the probability measure from
Lemma~\ref{lem:endpoint-localization}(2). Combining with \eqref{eq:VT-log-derivative-main} and the negligibility of the denominator term, we obtain
\begin{equation}\label{eq:GT-derivative-via-nu}
  G_T'(\beta) = \int \frac{z}{1+\beta z}\,\d\nu_{T,\beta}(z) + o(1),
\end{equation}
where $o(1)\to 0$ uniformly over $\beta\in K$.

By Lemma~\ref{lem:endpoint-localization}(2), for each fixed
$\beta\in K$ the measures $\nu_{T,\beta}$ converge weakly to
$\delta_{z^\ast(\beta)}$, where
$z^\ast(\beta)\in\{\underline{Z},\overline{Z}\}$ is the endpoint at which
$\phi(\beta)$ is attained and this convergence is uniform
in $\beta\in K$. Since the function $(\beta,z) \longmapsto \frac{z}{1+\beta z}$ is continuous on $K\times[\underline{Z},\overline{Z}]$, by uniform weak
convergence,
\[
  \int \frac{z}{1+\beta z}\,\d\nu_{T,\beta}(z) \longrightarrow \frac{z^\ast(\beta)}{1+\beta z^\ast(\beta)},
\]
uniformly over $\beta\in K$. Together with
\eqref{eq:GT-derivative-via-nu}, this yields
\[
  G_T'(\beta) \longrightarrow  \frac{z^\ast(\beta)}{1+\beta z^\ast(\beta)}
\]
uniformly over $\beta \in K$. Now, on any compact $K\subset I$ that does not contain
$-\alpha^\ast$, the active endpoint $z^\ast(\beta)$ is
constant (always $\underline{Z}$ or always $\overline{Z}$), and the sign of
$1+\beta z^\ast(\beta)$ is fixed. Therefore,
\[
  \frac{d}{d\beta}\log\phi(\beta)
  = \frac{d}{d\beta}\log|1+\beta z^\ast(\beta)|
  = \frac{z^\ast(\beta)}{1+\beta z^\ast(\beta)}.
\]
and so $G_T'(\beta) \longrightarrow \frac{d}{d\beta}\log\phi(\beta)$ as $T \to \infty$ uniformly over $\beta \in K$, which proves part (1). 

We can now wrap up the proof by establishing the second statement of the lemma. Note that the function $\phi$ is smooth on $I\setminus\{-\alpha^\ast\}$
and $-\alpha^\ast$ is the unique minimizer of $\phi$. Hence $\frac{d}{d\beta}\log\phi(\beta)$ is continuous and nonzero on $K$, and therefore there exists
$c_K>0$ such that
\[
  \left| \frac{d}{d\beta}\log\phi(\beta) \right| \geq 2c_K
\]
for all $\beta \in K$. By uniform convergence of $G_T'$ to $\frac{d}{d\beta}\log\phi(\beta)$
on $K$, we may choose $T_K$ large enough so that
\[
  |G_T'(\beta)| \geq c_K
\]
for all $\beta \in K$ when $T \geq T_K$. This proves part (2) and completes the proof.
\end{proof}

The full result of Theorem \ref{thm:two_param_optimal} will now follow by a simple proof by contradiction by extracting a subsequence $(\beta_{T_k}^\ast)_{k \geq 1}$ converging in the interior of the interval. The culmination of this section is now provided in the next result. 

\begin{theorem} \label{thm:two_param_optimal_limit}
        Suppose the assumptions of Theorem \ref{thm:two_param_GF_converges} hold. Let $(\alpha^\ast_T, \beta^\ast_T)$ denote the limit of gradient flow on the two-parameter, $T$-layer, CA model whose existence is guaranteed by Theorem \ref{thm:two_param_GF_converges}. Moreover, assume that $\lim_{T \to \infty} \beta^\ast_T \notin \{-2/\overline{Z}, 0\}$. Then, 
    \[
    \lim_{T \to \infty} \alpha^\ast_T = \alpha^\ast = \frac{2}{\underline{Z} + \overline{Z}}. 
    \]
    and 
    \[
     \lim_{T \to \infty} \beta^\ast_T = - \alpha^\ast. 
    \]
     Hence, the model is Bayes optimal by Lemmas \ref{lem:readout} and \ref{lem:CA_output}.
\end{theorem}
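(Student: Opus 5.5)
Let $\bar\beta := \lim_{T\to\infty}\beta^\ast_T$, which exists by hypothesis. Since $\beta^\ast_T \in (-2/\overline{Z},0)$ for every $T$ by Theorem~\ref{thm:two_param_GF_converges}, we have $\bar\beta \in [-2/\overline{Z},0]$, and the hypothesis excludes the endpoints, so $\bar\beta \in I = (-2/\overline{Z},0)$. The argument is by contradiction: if $\bar\beta \neq -\alpha^\ast$, we localize around $\bar\beta$ and use Lemma~\ref{lem:GT-derivative} to rule out stationary points there.

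Suppose $\bar\beta \neq -\alpha^\ast$. Choose $r>0$ so small that $K := [\bar\beta - r, \bar\beta + r]$ is contained in $I$ and misses $-\alpha^\ast$. Since $\beta^\ast_T \to \bar\beta$, we have $\beta^\ast_T \in K$ for all $T$ large. Each $(\alpha^\ast_T,\beta^\ast_T)$ is a stationary point of $\ell^{(T)}$ with $\alpha^\ast_T = \alpha^\ast(\beta^\ast_T)$. By the envelope identity $F_T'(\beta) = \partial_\beta \ell^{(T)}(\alpha^\ast(\beta),\beta)$, valid because $\partial_\alpha\ell^{(T)}$ vanishes at $\alpha^\ast(\beta)$, we get $F_T'(\beta^\ast_T)=0$. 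Because $F_T>0$, this gives $G_T'(\beta^\ast_T)=0$. But Lemma~\ref{lem:GT-derivative}(2) gives $|G_T'(\beta)| \ge c_K > 0$ on $K$ for all $T\ge T_K$, a contradiction. Hence $\bar\beta = -\alpha^\ast$.

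For $\alpha^\ast_T = B(\beta^\ast_T)/A(\beta^\ast_T)$, write $S = (1-u_T)/(-\beta)$ for $\beta\neq 0$. Then
\[
\alpha^\ast(\beta) = -\beta\,\frac{\ee_\mu[1-u_T(\beta,Z)]}{\ee_\mu[(1-u_T(\beta,Z))^2]} .
\]
The main obstacle is controlling this uniformly along the moving point $\beta^\ast_T$, since near $-\alpha^\ast$ no compact set avoiding $-\alpha^\ast$ is available. The remedy is that a neighborhood $[-\alpha^\ast-r', -\alpha^\ast+r'] \subset I$ still satisfies $\sup|1+\beta z| \le \phi_K < 1$ over $z\in[\underline{Z},\overline{Z}]$, because $\phi$ is continuous and $\phi<1$ on $I$. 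Consequently $|u_T| \le \phi_K^T \to 0$ uniformly there, and the ratio tends to $1$ uniformly on that neighborhood.

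It follows that $\alpha^\ast_T = -\beta^\ast_T(1+o(1)) \to \alpha^\ast$, and $\beta^\ast_T \to -\alpha^\ast$. Bayes optimality then follows exactly as in Theorem~\ref{thm:one_param_optimal_limit}. By Lemma~\ref{lem:CA_output}, $\frac1L \bX\bF^\top \to \frac{\alpha^\ast_T}{\beta^\ast_T}\big[(\bI+\beta^\ast_T\bLam)^T-\bI\big]$. Here $\alpha^\ast_T/\beta^\ast_T \to -1$, and $\|(\bI+\beta^\ast_T\bLam)^T\| \le \phi_K^T \to 0$ since $\beta^\ast_T$ eventually lies in the compact neighborhood, so the limit is $\bI$. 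Lemma~\ref{lem:readout} then yields $\hat y_q \to \ip{\bw}{\bx_q}$ almost surely.
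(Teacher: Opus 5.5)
Your argument for $\beta^\ast_T \to -\alpha^\ast$ is the paper's: stationarity gives $G_T'(\beta^\ast_T)=0$, and Lemma~\ref{lem:GT-derivative}(2) forbids this on any compact $K\subset I\setminus\{-\alpha^\ast\}$. Two minor differences: you use the assumed existence of the limit directly instead of extracting a subsequence, and you justify $F_T'(\beta^\ast_T)=0$ via the envelope identity, which the paper only asserts.

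You go beyond the paper on $\alpha^\ast_T \to \alpha^\ast$. The paper simply cites $\alpha^\ast_T=\alpha^\ast(\beta^\ast_T)$, which glosses over the fact that $\alpha^\ast(\cdot)=B/A$ itself depends on $T$. Your representation $\alpha^\ast(\beta)=-\beta\,\ee_\mu[1-u_T]/\ee_\mu[(1-u_T)^2]$ fills this gap. Combined with the uniform bound $|u_T|\le\phi_{K'}^T$ on a compact neighbourhood $K'\subset I$ of $-\alpha^\ast$, it supplies exactly the missing uniformity in $T$.

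One small correction in your last step: the bound $\|(\bI+\beta^\ast_T\bLam)^T\|\le\phi_{K'}^T$ is false in general. The matrix $\bI+\beta\bLam$ also has the eigenvalue $1+\beta$ on the orthogonal complement of $\bfm$, and when $\underline{Z}>1$ this can exceed $\phi(\beta)$. For example, $\underline{Z}=2$, $\overline{Z}=3$, $\beta=-2/5$ gives $|1+\beta|=0.6$ versus $\phi(\beta)=0.2$.

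The conclusion is unaffected. Since $\overline{Z}>1$, we have $I\subset(-2,0)$ and hence $\sup_{\beta\in K'}|1+\beta|<1$. Alternatively, only $\bw^\top(\bI+\beta\bLam)^T=(1+\beta Z)^T\bw^\top$ enters the readout of Lemma~\ref{lem:readout}, and there your $\phi_{K'}$ bound is valid.
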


\begin{proof}
    We assume for contradiction that $\lim_{T \to \infty} \beta^\ast_T \neq - \alpha^\ast$. Then, as $( \beta^\ast_T)_{T \geq 1}$ is contained in the compact interval $[-2/\overline{Z}, 0]$ with $\lim_{T \to \infty} \beta^\ast_T \notin \{-2/\overline{Z}, - \alpha^\ast,  0\}$, we extract a convergent subsequence  $(\beta^\ast_{T_k})_{k \geq 1}$ with limit
    \[
    \bar \beta := \lim_{k \to \infty} \beta^\ast_{T_k} \in (-2/\overline{Z}, 0) \setminus \{- \alpha^\ast \}.
    \]
    Choose a compact interval
    \[
      K \subset (-2/\overline{Z},0)\setminus\{-\alpha^\ast\}
    \]
    such that $\bar\beta \in \operatorname{int}(K)$. Then $\beta^\ast_{T_k}\in K$ for all sufficiently large $k$. By Lemma \ref{lem:GT-derivative}, it follows that $|G'_{T_k}(\beta^\ast_{T_k})| \geq c_K > 0$ for all sufficiently large $k$. This immediately contradicts the nature of $(\beta^\ast_{T_k})_{k \geq 1}$ as a sequence of stationary points for $(G_{T_k})_{k \geq 1}$. Therefore, one must have $\bar \beta = - \alpha^\ast$ and as the subsequence $(T_k)_{k \geq 1} \subseteq \nn$ was arbitrary, convergence for the entire sequence $(\beta_T^\ast)_{T \geq 1}$ to $- \alpha^\ast$ holds:
     \begin{equation}\label{eq:beta_limit}
     \lim_{T \to \infty} \beta^\ast_T = - \alpha^\ast. 
    \end{equation}
    Convergence of $\lim_{T \to \infty} \alpha^\ast_T = \alpha^\ast$ follows from \eqref{eq:beta_limit} and the identity $\alpha^\ast_T = \alpha^\ast(\beta^\ast_T)$ for all $T \geq 1$. 
\end{proof}

The above result concludes the proof of Theorem \ref{thm:two_param_optimal}.

\section{Visualization of the loss landscape}\label{app:loss_landscapes}

The rationale behind the initializations given in Theorem \ref{thm:two_param_optimal} and discussed in Section \ref{sec:main_results} is supported in Figure \ref{fig:loss_landscape} which illustrates a steep valley along the curve $\{(\alpha^\ast(\beta), \beta): \beta \in (-2/(\overline{m} + 1), 0) \}$ as $\alpha^\ast(\beta) \approx -\beta$ ($\alpha^\ast(\beta) \to - \beta$ as $T \to \infty$) in this window. To contrast our setting with a more sophisticated model, we also plot the loss landscape of the two-parameter model where pre-Layer Normalization \citep{xiong2020layernormalizationtransformerarchitecture} is added between the LCA layers. Interestingly, this model reveals a similarly located and sized ravine in the loss landscape, hinting at the insights from our ablated setting to more complex models. 

\begin{figure}[ht]
  \begin{center}
    \centerline{\includegraphics[width=0.5\textwidth]{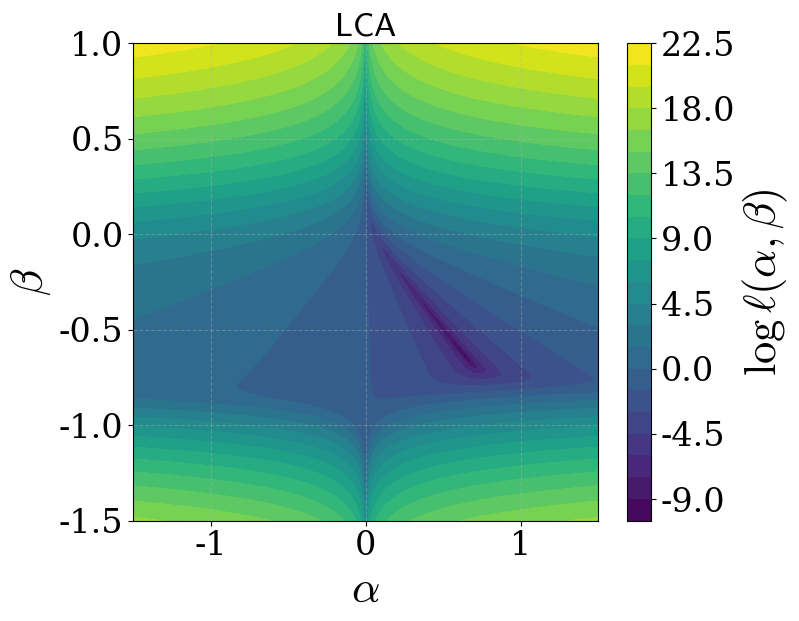}}
     \centerline{\includegraphics[width=1.0\textwidth]{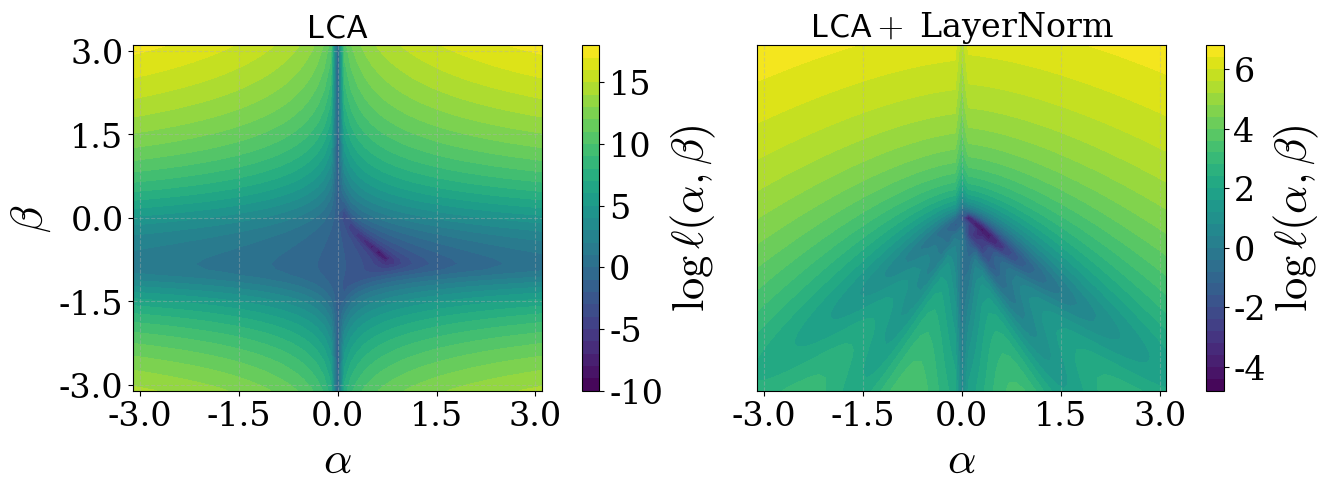}}
    \caption{Log-scale heat map plots of the two-parameter loss $\ell(\alpha, \beta)$ at depth $T=10$ (\textbf{top}) and $T=5$ (\textbf{bottom}) where $\|\bfm\| \sim {\rm Unif}(0,2)$. At depth $T=5$, we compare the LCA-based model considered herein (\textbf{bottom left}) with the addition of pre-LayerNorm between the CA layers (\textbf{bottom right}).}
    \label{fig:loss_landscape}
  \end{center}
    \vskip -0.2in
\end{figure}

\section{Ablation experiments on synthetic data} \label{app:ablations}

In this appendix, we examine the significance of the $S_t$ skip-connection and the CA between the layer outputs $(F_t)_{t \geq 0}$ and the original raw data $X$---displayed in Figure \ref{fig:model}---on the data generating distribution compatible with Section \ref{sec:data} and Assumption \ref{assump:support_m}. To this end, we compare, empirically, the trained performance of the one- and two-parameter LCA models defined in Section \ref{sec:CA_model} against ablated models. We introduce three new architectures in the one- and two-parameter settings. First, consider the one-parameter LCA model without $S_t$ (equivalently with $\alpha \equiv 0)$---that is, we modify \eqref{eq:recurrence} to be 
\begin{equation}
    \bF_t = \bF_{t-1} + \bA_{t-1}
\end{equation}
and operate with one learnable parameter as in Section \ref{sec:CA_model}. Second, building on the preceding model, we now set $\bA_{t} = \mathsf{A}(\bQ_{t}, \bK_{t}, \bV_t)$ with
\begin{equation} \label{eq:LSA_replacing_LCA}
\bV_{t} = \alpha \bF_{t}, \quad \bK_{t} = \bQ_{t}= \bF_t,
\end{equation}
so that $\bA_t$ is no longer a LCA implementation, but rather is LSA applied to $\bF_t$ with a single learnable parameter. We refer to this architecture as a ``one-parameter deep LSA without $S_t$" to delineate it from the single-layer LSA model of Section \ref{sec:ICL_LSA}. Lastly, as a final trainable comparison, we consider the LSA modification given in \eqref{eq:LSA_replacing_LCA} and re-introduce $S_t$ (again producing two learnable parameters) in the architecture, and refer to this ablation as a ``two-parameter deep LSA.'' As a naive baseline, we also consider the in-context sample mean:
\[
\bar y_{L_{\rm te}} = \frac{1}{L_{\rm te}} \sum_{i=1}^{L_{\rm te}} y_i.
\]

\begin{figure}[ht]
  \begin{center}
     \centerline{\includegraphics[width=0.8\textwidth]{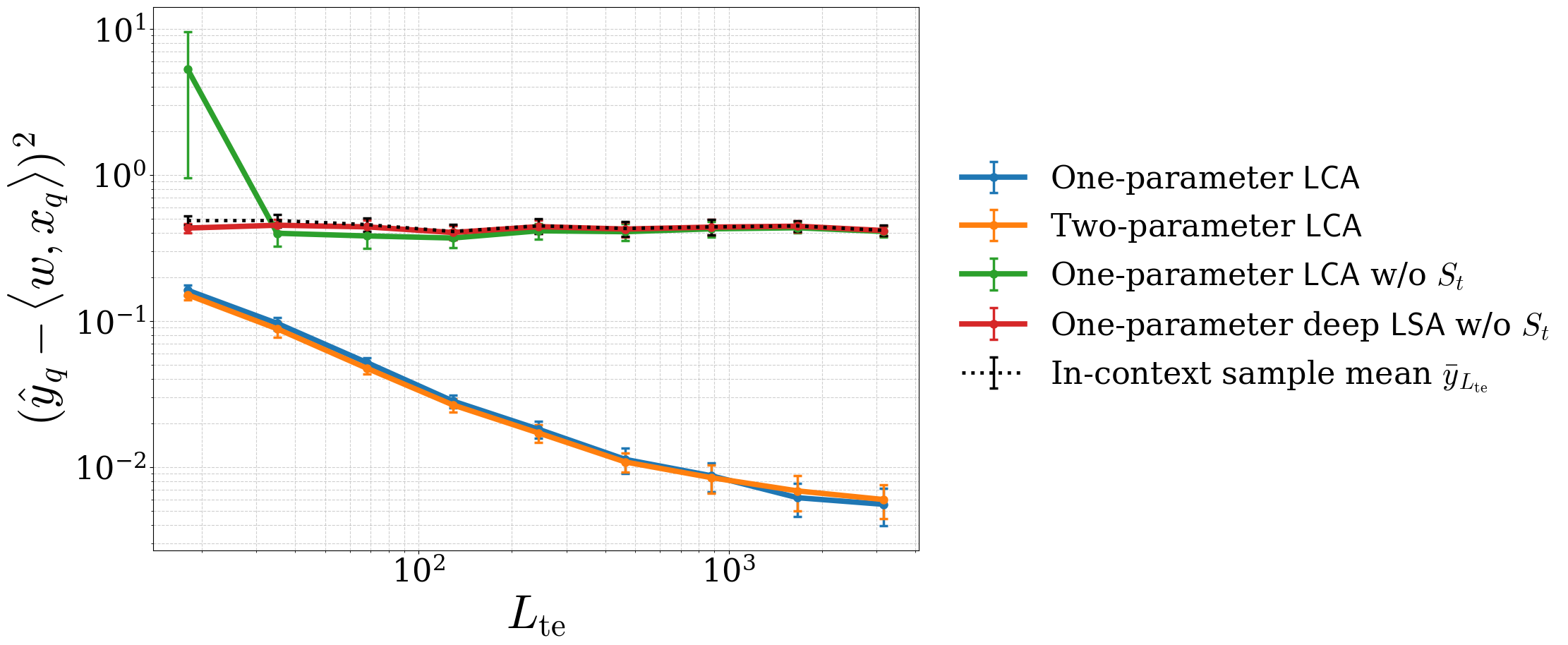}}
        \caption{In-context performance at various $L_{\rm te}$ of one- and two-parameter LCA-based models, ablations without $S_t$ ($T=10$), and the sample mean $\bar y_{L_{\rm te}}$. Models are optimized on $\ell_{N,L_{\mathrm{tr}}}$ ($L_{\rm tr} = 100, N=2000$) using gradient descent. Performance is averaged over $1000$ test-prompts where error bars represent standard deviation over $10$ separate training runs.}
    \label{fig:no_St_comparisons}
  \end{center}
    \vskip -0.2in
\end{figure} 

\begin{figure}[ht]
  \begin{center}
     \centerline{\includegraphics[width=0.8\textwidth]{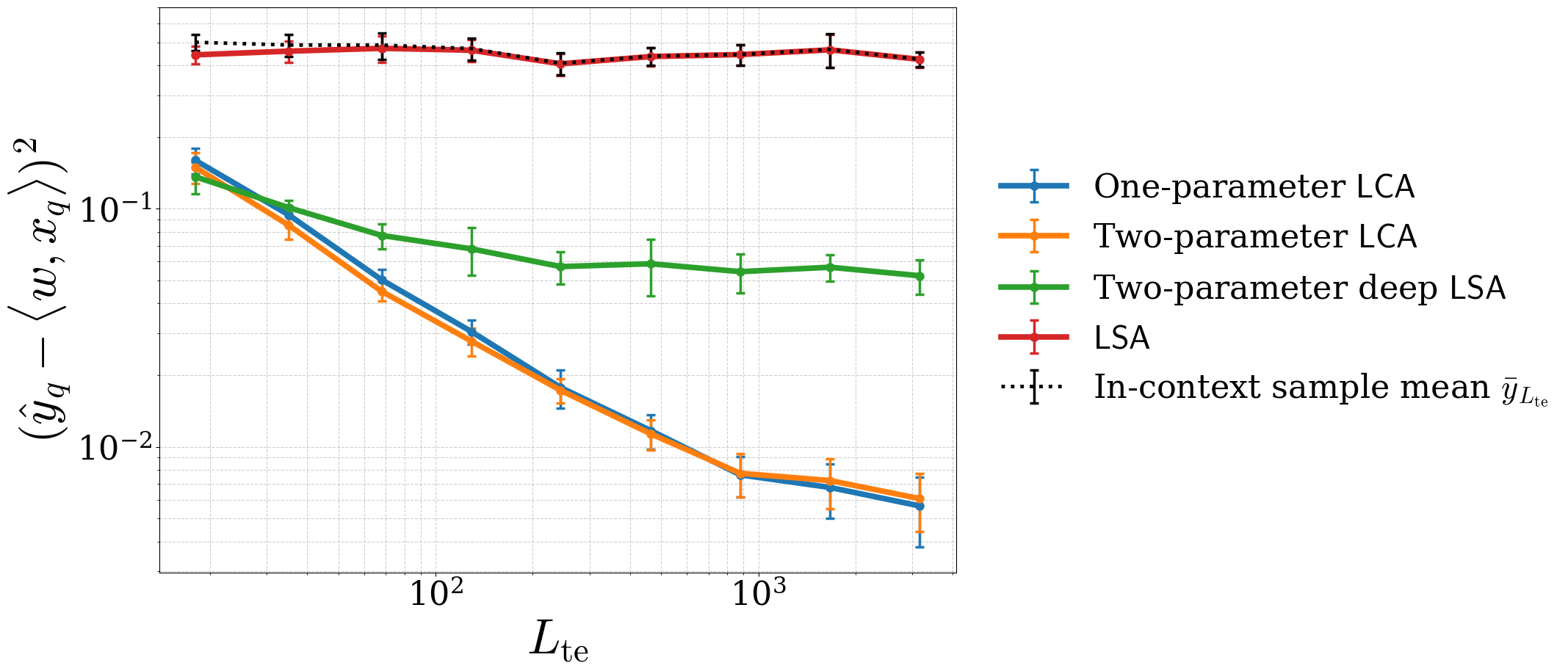}}
               \caption{In-context performance at various $L_{\rm te}$ of one- and two-parameter LCA-based models, two-parameter deep LSA model ($T=10$), the LSA model from \eqref{eq:LSA_def}, and the sample mean $\bar y_{L_{\rm te}}$. Models are optimized on $\ell_{N,L_{\mathrm{tr}}}$ ($L_{\rm tr} = 100, N=2000$) using gradient descent. Performance is averaged over $1000$ test-prompts where error bars represent standard deviation over $10$ separate training runs.}
    \label{fig:two_param_LSA_comparison}
  \end{center}
    \vskip -0.2in
\end{figure}

In Figure \ref{fig:no_St_comparisons}, we see that the removal of the $S_t$ skip-connection in the embedding architecture compromises model performance. Notably, in the simplified one-parameter setting, the ablated models fail to improve upon the naive in-context sample mean. Moreover, in Figure \ref{fig:two_param_LSA_comparison}, we see that, swapping the LCA mechanism for LSA on $F_t$ while keeping the $S_t$ skip-connection deteriorates model performance, albeit improving upon the sample mean baseline and the single-layer LSA model. This further emphasizes the utility of the cross-attention and particularly of our novel $S_t$ component.

\section{Figures \ref{fig:numerics_combined} and \ref{fig:less-restricted-diagnostics} experiment details}\label{app:more-less-restricted-experiments}
All models are trained using standard gradient descent (in place of gradient flow) on the non-asymptotic objective $\ell_{N, L_{\rm tr}}$ with $N=2000$ and $L_{\rm tr}=100$. Data is generated according to Section \ref{sec:data} with $\rc \sim \mathcal{N}(0,1)$, $d_1=d_2 = 16$, and $\bfm$ drawn from a spherical Gaussian with norm mapped uniformly on $[0,5]$.

\paragraph{Further details for less-restricted LCA parametrizations (Figure \ref{fig:less-restricted-diagnostics}).} We use the synthetic data-generating model discussed above. Throughout these experiments, we keep $\bW_t^K=\bW_t^Q=\bI_d$ fixed and train only the layerwise injection and value matrices $(\bW_t^S,\bW_t^V)_{t=0}^{T-1}$. We consider three increasingly flexible parametrizations where in all cases the weights are untied across layers unlike in the one- and two-parameter simplifications on which we base our theoretical results in Section~\ref{sec:main_results}. In the untied scalar model,
\[
    \bW_t^S=\alpha_t \bI_d,\qquad \bW_t^V=\beta_t \bI_d,
\]
where each layer has its own learnable pair $(\alpha_t,\beta_t)$. In the diagonal model,
\[
    \bW_t^S=\operatorname{diag}(\ba_t),\qquad 
    \bW_t^V=\operatorname{diag}(\bb_t),
\]
where $\ba_t,\bb_t\in\rr^d$. In the full-matrix model, $\bW_t^S,\bW_t^V\in\rr^{d\times d}$ are unconstrained trainable matrices.

All three models are initialized near the theory-predicted limiting parameters $(\alpha^\ast,-\alpha^\ast)$ from Theorems~\ref{thm:one_param_optimal}--\ref{thm:two_param_optimal}. Specifically, with initial noise variance $\sigma_{\rm init}=0.1$, for the untied scalar model we initialize
\[
    \alpha_t=\alpha^\ast+\sigma_{\rm init} g_t^S,\qquad
    \beta_t=-\alpha^\ast+\sigma_{\rm init} g_t^V,
\]
with independent standard Gaussian variables $g_t^S,g_t^V$. For the diagonal model, we initialize
\[
    \ba_t=\alpha^\ast\mathbf{1}_d+\sigma_{\rm init}\mathbf{g}_t^S,\qquad
    \bb_t=-\alpha^\ast\mathbf{1}_d+\sigma_{\rm init}\mathbf{g}_t^V,
\]
where $\mathbf{g}_t^S$ and $\mathbf{g}_t^V$ have i.i.d. standard Gaussian entries and $\mathbf{1}_d =[1 \cdots 1]^\top \in \rr^d$. For the full-matrix model, we initialize
\[
    \bW_t^{S}=\alpha^\ast \bI_d+\sigma_{\rm init}\mathbf{G}_t^S,\qquad
    \bW_t^{V}=-\alpha^\ast \bI_d+\sigma_{\rm init}\mathbf{G}_t^V,
\]
where the entries of $\mathbf{G}_t^S, \mathbf{G}_t^V$ are i.i.d. standard Gaussian. Thus, each model has initialization biased near the theoretically optimal tied-scalar solution obtained in the one- and two-parameter models. However, the noise level $\sigma_{\rm init}$ is large with respect to the magnitude $|\alpha^\ast|$ as the distribution on $\bfm$ yields $\alpha^\ast = 1/26 \approx 0.04$. 

Similarly to the protocol for Figure \ref{fig:numerics_combined}, we train each model by gradient descent on the empirical loss $\ell_{N,L_{\rm tr}}$ with $N=2000$ training prompts and training context length $L_{\rm tr}=100$. After training, in Figure \ref{fig:less-restricted-diagnostics}, we display the scalar projections
\[
    \widehat{\alpha}_t=d^{-1}\operatorname{tr}(\bW_t^S),
    \qquad
    \widehat{\beta}_t=d^{-1}\operatorname{tr}(\bW_t^V),
\]
as well as the whitening diagnostic $\|\bX\bF_t^\top/L-\bI\|_F/\sqrt d$ following the heuristic laid out at the end of Section \ref{sec:main_results}. The points/curves in Figure~\ref{fig:less-restricted-diagnostics} are averaged over ten independent runs.

\section{Table \ref{tab:avmnist_main} experiment details}
\label{app:avmnist_details}

This appendix gives the implementation details for the corrupted avMNIST experiments in Table~\ref{tab:avmnist_main}. The goal is to test whether the two architectural mechanisms isolated by the theory---cross-attention from an evolving state back to a fixed covariate sequence, and repeated re-injection of that fixed sequence---remain useful after replacing the linearized model \eqref{eq:LSA_LCA_model} by conventional deep-learning components: convolutional token encoders, softmax multi-head attention, pre-layer normalization, MLP blocks, dropout, and a classification head.

Throughout this appendix, we use the same notation as Section~\ref{sec:CA_model} whenever possible.  In particular, $\bX$ denotes the fixed encoded multimodal token sequence for one example, $\bF_t$ denotes the evolving state after $t$ layers, $\bA_t$ denotes the attention update, and $\bS_t$ denotes the re-injection term.  The only notational difference is that tokens are written as rows.  Thus, in this appendix $\bX,\bF_t\in\rr^{L\times d}$, whereas the theoretical sections write tokens as columns.

\subsection{Dataset, normalization, and task}

We use a paired audio-visual MNIST dataset~\citep{avmnist}.  Each example is a triple
\[
    (I_i,A_i,c_i),
    \qquad
    I_i\in\rr^{1\times 28\times 28},\quad
    A_i\in\rr^{1\times 28\times 28},\quad
    c_i\in\{0,1,\ldots,9\},
\]
where $I_i$ is an image, $A_i$ is a paired audio spectrogram, and $c_i$ is the digit class.  The experiments use the standard split stored in the repository: $60{,}000$ training examples and $10{,}000$ test examples.  The image and audio arrays have spatial resolution $28\times 28$; a singleton channel dimension is added before passing them to the convolutional encoders. The data $ (I_i,A_i)$ is corrupted entry-wise by Gaussian noise to enforce difficulty in the learning task. Thereafter, the data is normalized to $[0,1]$.

Each model outputs logits $z_i\in\rr^{10}$ and is trained with the cross-entropy loss
\[
    \ell_{\rm CE}(z_i,c_i)=-\log\left(\frac{\exp(z_{i,c_i})}{\sum_{k=0}^{9}\exp(z_{i,k})}\right).
\]

\subsection{Common convolutional tokenization}

For token dimension $d$, each modality encoder maps a $28\times 28$ input to $16$ tokens in $\rr^d$:
\[
    \Phi_I:\rr^{1\times 28\times 28}\to\rr^{16\times d},
    \qquad
    \Phi_A:\rr^{1\times 28\times 28}\to\rr^{16\times d}.
\]
The two encoders have the same architecture but do not share weights.  Each encoder consists of two convolution--ReLU layers, a $2\times 2$ max-pooling layer, two additional convolution--ReLU layers, a second $2\times 2$ max-pooling layer, adaptive average pooling to a $4\times 4$ grid, flattening into $16$ tokens, and a final layer normalization.  The hidden convolutional width is taken to be 16.

We add learned positional embeddings and learned modality embeddings:
\[
    \bX_I^{(0)} = \Phi_I(I)+\boldsymbol{P}_I+\boldsymbol{M}_I,
    \qquad
    \bX_A^{(0)} = \Phi_A(A)+\boldsymbol{P}_A+\boldsymbol{M}_A,
\]
where $\boldsymbol{P}_I,\boldsymbol{P}_A\in\rr^{16\times d}$ are learned positional embeddings and $\boldsymbol{M}_I,\boldsymbol{M}_A\in\rr^{1\times d}$ are learned modality embeddings broadcast over tokens.  The fixed multimodal sequence is
\[
    \bX = \operatorname{concat}(\boldsymbol{c},\bX_I^{(0)},\bX_A^{(0)})
    \in\rr^{L\times d},
\]
where $\boldsymbol{c}\in\rr^d$ is a learned $\mathsf{CLS}$ token.

\subsection{Paper-inspired multimodal architecture}

The four paper-inspired models are empirical analogues of the CA recurrence \eqref{eq:recurrence}.  They keep a fixed encoded sequence $\bX$ and update an evolving state
\[
    \bF_t\in\rr^{L\times d},
    \qquad t=0,1,\ldots,T.
\]
The state is initialized from the fixed sequence by a learned affine map
\[
    \bF_0=\bX\bW_0^\top+\one b_0^\top.
\]
This differs from the simplified theoretical initialization $\bF_0=0$.

Each fusion layer has three possible components: an attention update, an optional re-injection of $\bX$, and a feed-forward residual block.  We use pre-layer normalization $\mathsf{LN}(\cdot)$ throughout.  In the full model, the attention update is
\[
    \bA_{t-1}
    =\mathsf{MHA}_t\bigl(\mathsf{LN}(\bF_{t-1}),\mathsf{LN}(\bX),\mathsf{LN}(\bX)\bigr),
\]
Thus, as in \eqref{eq:def_attn}, the evolving state attends back to the fixed encoded covariates.

The empirical re-injection term is a gated version of the theoretical term $\bS_{t-1}=\bW^S_{t-1}\bX$ which is used to provide stability during training.  After the attention residual,
\[
    \widetilde{\bF}_t=\bF_{t-1}+\bA_{t-1},
\]
we set
\[
    \bS_{t-1}
    =\boldsymbol{G}_{t-1}(\widetilde{\bF}_t)\odot
      \left(\bX(\bW^S_{t-1})^\top+\one (b^S_{t-1})^\top\right),
\]
where $\odot$ denotes entrywise multiplication and
\[
    \boldsymbol{G}_{t-1}(\widetilde{\bF}_t)
    =\sigma\left(\widetilde{\bF}_t(\bW^G_{t-1})^\top+
    \one (b^G_{t-1})^\top\right)
\]
is a learned sigmoid gate. The full update is therefore
\[
    \overline{\bF}_t=\widetilde{\bF}_t+\bS_{t-1},
    \qquad
    \bF_t=\overline{\bF}_t+\mathsf{FFN}_t\bigl(\mathsf{LN}(\overline{\bF}_t)\bigr),
\]
where $\mathsf{FFN}_t$ is a two-layer pointwise MLP with GELU activation, dropout, and hidden width $d_h$.

The classifier uses the final $\mathsf{CLS}$ representation:
\[
    h=\bF_T[\mathsf{CLS}],
\]
and maps $h$ to logits in $\rr^{10}$ using layer normalization followed by a two-layer MLP head with GELU activation and dropout.

\paragraph{Ablation axes.}
The four paper-inspired models differ only in whether the attention update uses $\bF_t$--$\bX$ cross-attention and whether the re-injection term $\bS_t$ is present:
\begin{itemize}
    \item \textbf{Full paper-inspired model (Inj. \& cross-attn)}: uses the cross-attention update above and includes $\bS_t$.
    \item \textbf{No injection}: uses the same $\bF_t$--$\bX$ cross-attention update but sets $\bS_t\equiv 0$.
    \item \textbf{No cross-attn}: keeps the re-injection term $\bS_t$, but replaces the cross-attention update by self-attention on the evolving state,
    \[
        \bA_{t-1}
        =\mathsf{MHA}_t\bigl(\mathsf{LN}(\bF_{t-1}),\mathsf{LN}(\bF_{t-1}),\mathsf{LN}(\bF_{t-1})\bigr).
    \]
    \item \textbf{No inj. \& no cross-attn}: uses the self-attention update above and sets $\bS_t\equiv 0$.
\end{itemize}
These ablations isolate the two mechanisms appearing in \eqref{eq:recurrence}--\eqref{eq:def_attn}: access to the fixed sequence $\bX$ through cross-attention, and layer-wise re-injection of $\bX$ through $\bS_t$.

\subsection{Bidirectional cross-attention baseline}

The standard bidirectional baseline does not maintain a single evolving state $\bF_t$ and does not re-inject a fixed sequence.  Instead, it maintains separate image and audio streams
\[
    \bU_t,\bV_t\in\rr^{16\times d},
    \qquad
    \bU_0=\Phi_I(I)+\boldsymbol{P}_I,
    \qquad
    \bV_0=\Phi_A(A)+\boldsymbol{P}_A.
\]
At layer $t$, the image stream attends to the audio stream and the audio stream attends to the image stream:
\[
    \widetilde{\bU}_t
    =\bU_{t-1}
      +\mathsf{MHA}^I_t\bigl(\mathsf{LN}(\bU_{t-1}),\mathsf{LN}(\bV_{t-1}),\mathsf{LN}(\bV_{t-1})\bigr),
\]
\[
    \widetilde{\bV}_t
    =\bV_{t-1}
      +\mathsf{MHA}^A_t\bigl(\mathsf{LN}(\bV_{t-1}),\mathsf{LN}(\bU_{t-1}),\mathsf{LN}(\bU_{t-1})\bigr).
\]
Each stream is then updated by its own feed-forward residual block:
\[
    \bU_t=\widetilde{\bU}_t+\mathsf{FFN}^I_t(\mathsf{LN}(\widetilde{\bU}_t)),
    \qquad
    \bV_t=\widetilde{\bV}_t+\mathsf{FFN}^A_t(\mathsf{LN}(\widetilde{\bV}_t)).
\]
The final representation is obtained by mean-pooling each stream and concatenating the two pooled vectors:
\[
    h=\operatorname{concat}\left(
    \frac{1}{16}\sum_{j=1}^{16}\bU_T[j],
    \frac{1}{16}\sum_{j=1}^{16}\bV_T[j]
    \right)\in\rr^{2d}.
\]
A layer-normalized two-layer MLP head maps $h$ to logits in $\rr^{10}$.  This baseline lacks the two paper-specific components: a single evolving state repeatedly attending to a fixed $\bX$, and recurrent re-injection of $\bX$.

\subsection{Training and reporting}

All models in Table~\ref{tab:avmnist_main} are trained under the same optimization budget.  The hyperparameters are
\[
    d=16,
    \qquad
    T=10,
    \qquad
    \text{number of heads}=4,
    \qquad
    d_h=32,
    \qquad
    g=4,
    \qquad
    \text{dropout}=0.1.
\]
We train for $100$ epochs using AdamW with standard specifications: learning rate $3\times 10^{-4}$ and weight decay $10^{-4}$.  The batch size is $256$. The best test accuracy of each model is reported. Obtaining the results displayed in Table \ref{tab:avmnist_main} totaled approximately five hours of runtime on a single NVIDIA H200 GPU.

\end{document}